\def\Ebb{{\mathbb E}}
\newcommand{\aacomment}[1]{}
\DeclareMathAlphabet{\mathpzc}{OT1}{pzc}{m}{it}
\newcommand\E{\mathbb{E}}
\newcommand\R{\mathbb{R}}
\newcommand\T{{\scriptscriptstyle\top}}
\newcommand\veps{\varepsilon}
\newcommand\Dir{\operatorname{Dir}}
\newcommand\diag{\operatorname{diag}}
\newcommand\supp{\operatorname{supp}}
\newcommand\Part{\mathcal{P}}
\newcommand\rank{\operatorname{rank}}
\newcommand\Col{\operatorname{Col}}
\newcommand\Row{\operatorname{Row}}
\def \G {\mathcal{G}}
\def \PA {{\sf PA}}
\def \P {\mathbb{P}}
\newcommand\V{\ensuremath{\mathcal{V}}}
\newcommand\vobs{\ensuremath{\V_{\operatorname{obs}}}}
\newcommand\vhid{\ensuremath{\V_{\operatorname{hid}}}}
\newcommand\edges{\ensuremath{\mathcal{E}}}
\def \H{\mathcal{H}}
\def \Neigh{{\rm N}}
\def\AlgI{{\textsc{TWMLearn}}}
\def\AlgII{{\textsc{TMLearn}}}
\def \Subroutine{{\textsc{DLD}}}
\def\AlgIproj {{\textsc{TWMLearn(proj)}}}
\def \nsmp {n_{{\rm smp}}}
\def \expo{{\sf Exp}}
\def\Poiss{{\sf Poisson}}
\def\normal{\sf{N}}
\def \hSigma {\widehat{\Sigma}}
\def\dist{{\sf dist}}
\def\Pairs{{\rm Pairs}}
\def\Triples {{\rm Triples}}
\newtheorem{proposition}{Proposition}[section]
\newtheorem{lemma}[proposition]{Lemma}
\newtheorem{definition}[proposition]{Definition}
\newtheorem{theorem}[proposition]{Theorem}
\newtheorem{claim}[proposition]{Claim}
\newtheorem{remark}[proposition]{Remark}
\newtheorem{condition}{Condition}
\newcommand{\djh}[1]{}
\newcommand{\aj}[1]{}
\title{Learning    Topic Models and Latent Bayesian Networks Under Expansion Constraints}
\author[1]{Animashree Anandkumar}
\author[2]{Daniel Hsu}
\author[3]{Adel Javanmard}
\author[2]{Sham M. Kakade}
\affil[1]{Department of EECS, University of California, Irvine}
\affil[2]{Microsoft Research New England}
\affil[3]{Department of Electrical Engineering, Stanford University}
\begin{document}

\maketitle
{\def\thefootnote{}
\footnotetext{E-mail:
\texttt{a.anandkumar@uci.edu},
\texttt{dahsu@microsoft.com},
\texttt{adelj@stanford.edu},
\texttt{skakade@microsoft.com}}}

%=========================================================
\begin{abstract}
Unsupervised estimation of latent variable models is a fundamental problem
central to numerous applications of machine learning and statistics.
This work presents a principled approach for estimating broad classes of
such models, including probabilistic topic models and latent linear
Bayesian networks, using only second-order observed moments.
The sufficient conditions for identifiability of these models are primarily
based on weak expansion constraints on the topic-word matrix, for topic
models, and on the directed acyclic graph, for Bayesian networks.
Because no assumptions are made on the distribution among the latent
variables, the approach can handle arbitrary correlations among the topics
or latent factors.
In addition, a tractable learning method via $\ell_1$ optimization is
proposed and studied in numerical experiments.
%The new techniques presented in this work are applicable for learning
%general latent linear models, prevalent in applications such as blind
%deconvolution of sound and images.
%
%model the correlated  topic models     as   linear Bayesian networks with hidden variables,
%, or equivalently, on  the underlying directed
%acyclic graph (DAG) between the observed variables (words) and  the hidden   variables (topics). This expansion property is satisfied by many tnatural families of DAGs that include
%multi-level DAGs, DAGs with effective depth one, as well as certain
%families of polytrees.
%We develop efficient learning methods incorporating only low order moments
%(second and third order) among the observed variables.
\end{abstract}

%\aacommentnew{some renaming conventions: multilevel DAGs are now hierarchical topic models and coefficient matrix is topic-word matrix (alternatively just mention we also refer to it as the coefficient matrix), and $x_1, x_2$ denote different views, so that use $x(1), x(2), \ldots$ to denote various indices of $x$. not sure if there is a different way.}
%\ajnew{I prefer the term expansion in the title.}

%=========================================================
\section{Introduction}
%
%In many data generative processes, the causal and conditional independence
%relationships among the variables can be represented by a directed acyclic
%graph (DAG).

%\emph{Structural equation models (SEM)} are intimately related to DAG models. A structural equation model is defined by a collection of equations
%\begin{eqnarray}\label{eqn:SEM}
%x_i = f_i(x_{\PA_i}, \veps_i)
%\end{eqnarray}
%and a joint distribution on noise variables $\{\veps_i\}$. We further require the noise variables to be uncorrelated. The corresponding graph of an SEM is obtained by having directed edges from each variable $x_j, j\in \PA_i$ on the right hand side of equation~\eqref{eqn:SEM} to $x_i$, and it is required to be acyclic. Notice that $x_{\PA_i}$ is the set of parents of $x_i$ in $\G$.
%\smallskip\\

%Further, they are central to  predicting causal relationships and interpreting the hidden effects as
%unobservable concepts.
%For instance in sociology, human behavior is affected by abstract
%notions such as social attitudes, beliefs, goals and plans.
%As another example, medical knowledge is organized into casual hierarchies
%of invading organisms, physical disorders, pathological states and symptoms,
%and only the symptoms are observed.
%
%Such a compact representation with latent variables can lead to reduced sample complexity for parameter
%estimation, as discussed in~\cite{Dasgupta-97}.

It is widely recognized that incorporating latent or hidden variables is a crucial aspect of modeling.
Latent variables can provide a succinct representation of the observed data through dimensionality reduction; the possibly many observed variables are summarized by fewer hidden
effects.
Further, they are central to  predicting causal relationships and interpreting the hidden effects as
unobservable concepts.
For instance in sociology, human behavior is affected by abstract
notions such as social attitudes, beliefs, goals and plans.
As another example, medical knowledge is organized into casual hierarchies
of invading organisms, physical disorders, pathological states and symptoms,
and only the symptoms are observed.

In addition to incorporating latent variables, it is also important to model the complex dependencies among the variables. A popular class of models for incorporating such dependencies are the
Bayesian networks, also known as  belief networks. They incorporate a set of causal and conditional independence relationships  through  directed acyclic graphs (DAG)~\cite{Pearl:book}. They have  widespread applicability in artificial intelligence~\cite{Koller:SRL07,Boutilier:UAI96, Koller:UAI97, Choi-10}, in the social sciences~\cite{Bagozzi-80,Kohn-82,Wheaton-78,Bollen-89, Pearl-00, Pearl-soc-98}, and as structural equation models in economics~\cite{Awokuse-03, Haavelmo-43,Zellner-71,Bollen-89, Pearl-00, Spirtes-05}.

An important statistical task is to learn  such latent Bayesian networks from observed data. This involves discovery of the hidden variables, structure estimation (of the DAG) and estimation of the model parameters. Typically, in the presence of hidden variables, the learning task suffers from identifiability issues since there may be many models which can explain the observed data.  In order to overcome indeterminacy issues, one must restrict the set of
possible models.
We establish novel criteria for identifiability of latent DAG models using only   low order observed moments (second/third moments). We introduce a graphical constraint which we refer to as the \emph{expansion property} on the DAG.
Roughly speaking, expansion property states that every subset of hidden
nodes has ``enough'' number of outgoing edges in the DAG, so they have a
noticeable influence on the observed nodes, and thus on the samples drawn
from the joint distribution of the observed nodes. This notion implies new identifiability and learning results for DAG structures.

Another class of popular latent variable models are the probabilistic topic models~\cite{blei2012probabilistic}. In topic models, the latent variables correspond to the  topics in a document which generate
the (observed)  words. Perhaps, the most widely employed topic model is  the latent Dirichlet allocation (LDA)~\cite{blei2003latent}, which posits that the hidden topics are drawn from a Dirichlet distribution. Recent approaches have established that the LDA model can be learned efficiently using low-order (second and third) moments, using spectral   techniques~\cite{Anima-SVD-12,AGHKT}.
The LDA model, however, cannot incorporate arbitrary correlations\footnote{LDA models incorporate only ``weak'' correlations among   topics, since the Dirichlet distribution can be expressed as the set of independently distributed Gamma random variables, normalized by their sum: if $y _i \sim \Gamma(\alpha_i,1)$, we have  $(\frac{y_1}{\sum_i y_i}, \frac{y_2}{\sum_i y_i}, \ldots)   \sim \Dir(\alpha)$.} among the latent topics, and various correlated topic models have demonstrated superior empirical performance, e.g.~\cite{blei2007correlated,li2006pachinko}, compared to LDA. However, learning correlated topic models is challenging, and further constraints need to be imposed to establish identifiability and provable learning.

A typical (exchangeable) topic model is parameterized by the topic-word matrix, i.e., the conditional distributions of the words given the topics, and the latent topic distribution, which determines the mixture of topics in a document. In this paper, we allow for arbitrary (non-degenerate) latent topic distributions, but impose 
expansion  constraints on the topic-word matrix. In other words, the word support of different topics are not ``too similar'', which is a reasonable assumption. 
Thus, we establish expansion as an unifying criterion for guaranteed learning of both latent Bayesian networks and topic models.

%\section{Summary of Results and Overview of Techniques}

%As outlined above, learning exchangeable topic models consists of learning the topic-word matrix $A$ and the latent topic distribution.

\subsection{Summary of contributions}
\label{subsec:contribution}

We establish identifiability for different classes of topic models and latent Bayesian networks, and more generally, for linear latent models, and also propose efficient algorithms for the learning task.

\subsubsection{Learning Topic Models}

\paragraph{Learning  under expansion conditions. }
We adopt a moment-based approach to learning topic models, and specifically, employ second-order observed moments, which can be efficiently estimated using a small number of samples.
We establish identifiability of the topic models for arbitrary
(non-degenerate) topic mixture distributions, under assumptions on the
topic-word matrix. The support of the topic-word matrix   is a bipartite
graph which relates the topics to words. We impose a   weak (additive)
expansion constraint on this bipartite graph. Specifically, let $A\in
\R^{n\times k}$ denote the topic-word matrix, and  for any subset of topics
$S\subset [k]$ (\emph{i.e.}, a subset of columns of $A$), let $\Neigh(S)$
denote the set of neighboring words,  \emph{i.e.}, the set of words, the topics in $S$ are supported on.  We require that  \begin{equation}\label{condsummary:expansion}|\Neigh(S)| \geq |S| + d_{\max},\end{equation} where $d_{\max}$ is the maximum degree for any topic.
Intuitively, our expansion property states that every subset of
topics generates sufficient number of words.
We establish that under the above expansion condition in \eqref{condsummary:expansion}, for  generic\footnote{The precise definition for parameter genericity  is given in Condition~\ref{cond:parameter}.} parameters (for  non-zero entries of $A$), the columns of $A$ are the sparsest vectors in the column span, and are therefore, identifiable.

In contrast, note that for all subsets of topics $S\subset [k]$,   the condition $|\Neigh(S)| \geq |S|,$ is {\em necessary} for non-degeneracy of   $A$, and therefore, for identifiability of the topic model from second order observed moments. This implies that our sufficient condition in \eqref{condsummary:expansion} is close to the necessary condition for identifiability of sparse models, where the maximum degree of any topic $d_{\max}$ is small. Thus, we prove identifiability of topic models  under nearly tight expansion conditions on the topic-word matrix. Since the columns of $A$ are the sparsest vectors in the column span under~\eqref{condsummary:expansion}, this also implies recovery of $A$ through exhaustive search.
In addition, we  establish that the topic-word matrix can be learned efficiently through $\ell_1$ optimization, under some (stronger) conditions on the non-zero entries of the topic-word matrix, in addition to the expansion condition in~\eqref{condsummary:expansion}.
We call our algorithm $\AlgI$ as it learns the topic-word matrix.

\paragraph{Bayesian networks to model topic mixtures. }
The above framework does not impose any parametric assumption
on the distribution of the topic mixture $h$ (other than
non-degeneracy), and employs second-order observed moments
to  learn the topic-word matrix $A$ and the second-order
moments of $h$. If $h$ obeys a multivariate Gaussian
distribution, then this completely characterizes the
topic model. However, for general topic mixtures,
this is not sufficient to characterize the distribution of $h$,
 and further assumptions need to be imposed.
 A natural framework for modeling topic dependencies is via Bayesian networks~\cite{Lauritzen-96}. Moreover,
incorporating Bayesian networks for topic modeling also leads to efficient  approximate inference through belief propagation and their variants~\cite{Wainwright-08}, which have shown good empirical performance on sparse graphs.

 We consider the case where the latent topics can be modeled
 by a \emph{linear} Bayesian network, and establish that such networks
% this provides a natural framework for modeling topic dependencies~\cite{Lauritzen-96}. Moreover,
%  for Bayesian networks, approximate inference on test data can be conveniently carried out
%through simple iterative methods such as {\em belief propagation}~\cite{Wainwright-08}, which
%have good empirical performance on sparse graphs.
%We establish that such linear latent Bayesian networks
 can be learned efficiently
 using second and third order observed moments through a combination of
 $\ell_1$ optimization and spectral techniques.
The proposed algorithm is called $\AlgII$ as it learns (correlated) topic models.

\subsubsection{Learning (Single-View) Latent Linear Bayesian Networks}

The above techniques for learning topic models are also applicable for learning latent linear models, which includes linear Bayesian networks discussed in the introduction. This is because our method relies on the presence of a linear map from hidden to observed variables. In case of the topic models, the topic-word matrix represents the linear map, while for linear Bayesian networks, the (weighted) DAG from hidden to observed variables is the linear map.
Linear latent models are prevalent in a number of applications such as
blind deconvolution of sound and images~\cite{levin2009understanding}. The
popular independent component analysis (ICA)~\cite{ICAbook} is a special
case of our framework, where the sources (\emph{i.e.}, the hidden variables)   are assumed to be independent. In contrast, we allow for general latent distributions, and impose expansion conditions on the linear map from hidden to observed variables.

One key difference between topic models and other linear models (including
linear Bayesian networks) is that topic models are multi-view (\emph{i.e.}, have multiple words in the same document), while, for general linear models,  multiple views  may not be available. We require additional assumptions to provide recovery in the single-view setting.
We prove recovery under  certain rank conditions: we require that $n \geq 3k$, where $n$ is the dimension of the observed random vector and $k$, the dimension of the latent  vector, and the existence of a partition into three sets each with full column rank. Under these conditions,
we propose simple  matrix decomposition techniques to first ``de-noise'' the observed moments. These denoised moments are of the same form as the moments obtained from a topic model and thus, the techniques described for learning topic models can be applied on denoised moments.   
  Thus, we provide a general framework for guaranteed learning of linear latent models under expansion conditions.

\smallskip\noindent
\emph{Hierarchical topic models.} An important application of these techniques is in learning hierarchical  linear models,  where the developed method can be applied recursively, and the estimated second order moment of each layer  can be employed to further learn the deeper layers.
See Fig.~\ref{fig:multi-level} for an illustration.

\begin{figure}[!t]
\centering
\subfigure[Hierarchical topic model]{
\includegraphics*[viewport = 160 80 570 480, width=2.1in]{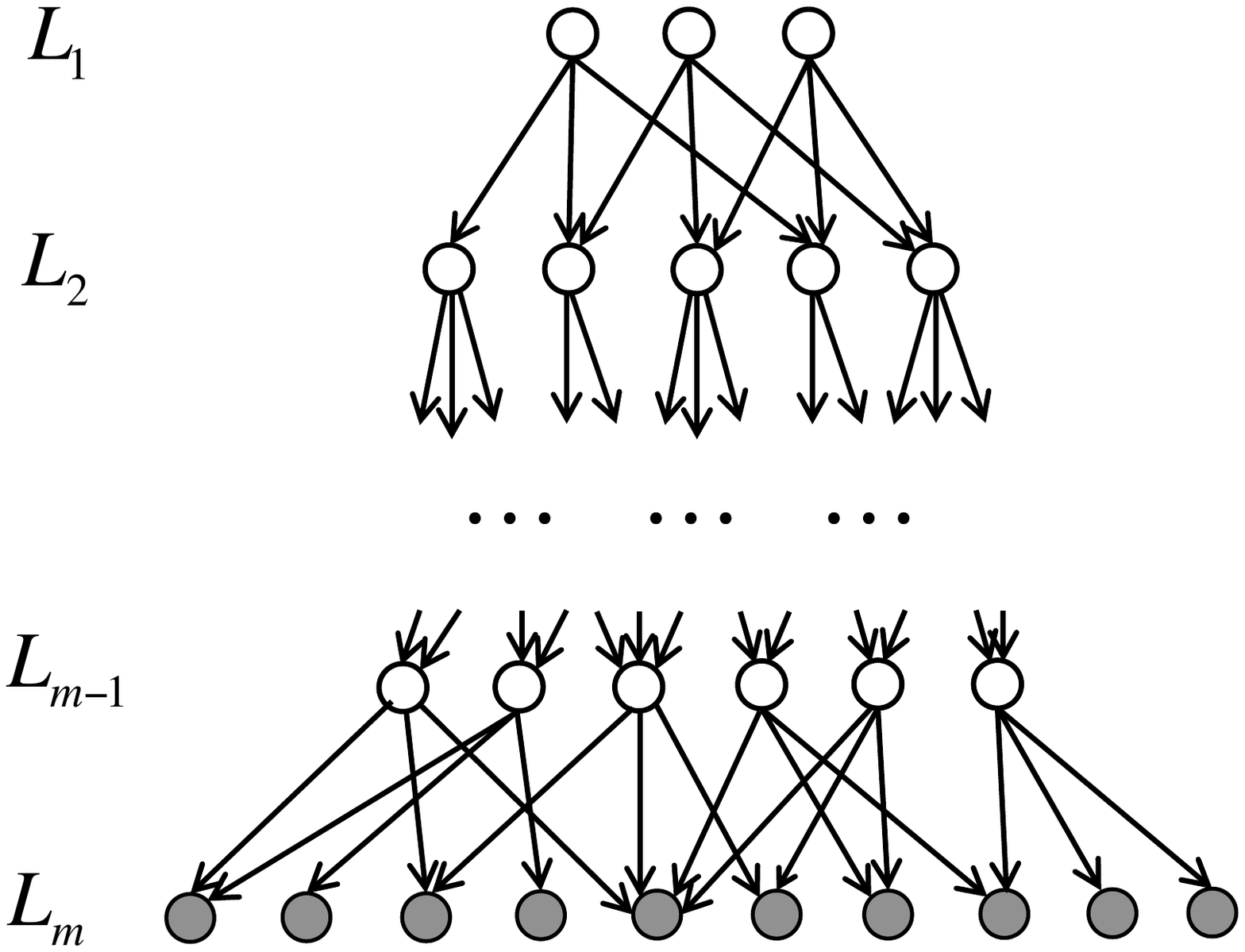}\label{fig:multi-level}
}
\hspace{0.5cm}
\subfigure[Bayesian networks to model topic mixtures]{
\includegraphics*[viewport = 110 175 600 440, width=3.4in]{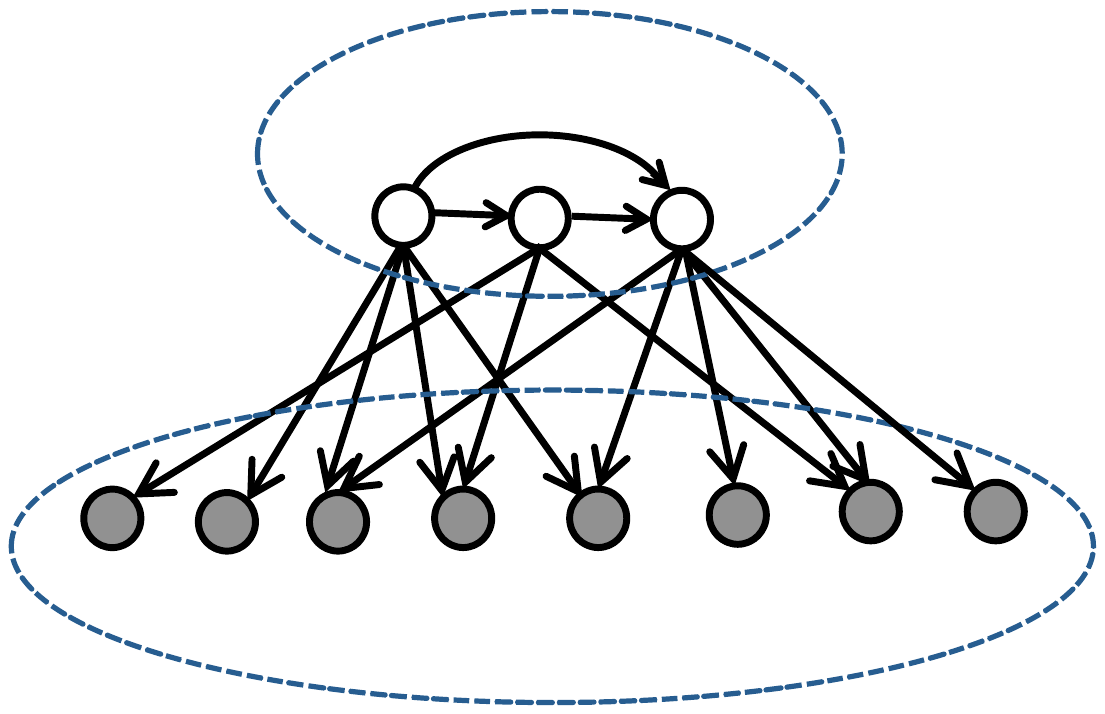}\label{fig:topic-model}
\put(-121,135){{\small Topics}}
\put(-133,118){\scriptsize $h(1)$}
\put(-114,118){\scriptsize $h(2)$}
\put(-94,118){\scriptsize $h(3)$}
\put(-179,45){\scriptsize $x(1)$}
\put(-162,45){\scriptsize $x(2)$}
\put(-145,45){\scriptsize $x(3)$}
\put(-127,45){\scriptsize $x(4)$}
\put(-108,45){\scriptsize $x(5)$}
\put(-88,45){\scriptsize $x(6)$}
\put(-70,45){\scriptsize $x(7)$}
\put(-50,45){\scriptsize $x(8)$}
\put(-160,80){$A$}
\put(-157,18){{\small Words in the vocabulary}}
}
%
%\subfigure[DAG with effective depth one]{
%\includegraphics*[viewport = 160 80 570 480, width=2.3in]{Figs/depth-one-DAG}\label{fig:eff-depth}
%}
\caption{Illustrations of hierarchical topic models and
Bayesian networks for topic mixtures. Words and topics are respectively shown by shaded and
white circles. Under the expansion property for the graph, we prove identifiability of these models from low order moments of the words. }
\end{figure}

%
%Specifically, the   ensemble of hierarchical models contains graphs with a hierarchal structure. The nodes of a hierarchical model can be partitioned into levels $L_1, \dotsc, L_m$, such that there is no edge within a level and all the edges are between nodes in level $L_i$ and the nodes in the adjacent levels $L_{i-1}$ and $L_{i+1}$ (See Fig.~\ref{fig:multi-level} for an illustration). Assuming that the induced model between levels $L_i$ and $L_{i+1}$ obeys the conditions in Section~\ref{subsec:model} for $i=1,\dotsc, m-1$, we show that the entire model can be learned in a sequential manner using second order moments.

\paragraph{Examples of graphs which can be learned. }
It is useful to consider some concrete examples
% of hierarchical topic models and Bayesian network topic mixtures,
which satisfy the expansion property in~\eqref{condsummary:expansion}:

\smallskip\noindent
\emph{Full $d$-regular trees.}
These are tree structures in which every node other than the leaves has $d$ children. These are included in the ensemble of hierarchical models. We see that for $d \ge 2$, the model satisfies the expansion condition~\eqref{condsummary:expansion}, but require $d \ge 3$ to satisfy the rank condition. See Fig.~\ref{fig:ternary-tree} for an illustration of a full ternary tree with latent variables.

\smallskip\noindent
\emph{Caterpillar trees.}
These are tree structures in which all the leaves are within distance one of a central path. See Fig.~\ref{fig:caterpillar} for an illustration. These structures  have effective depth one. Let $d_{\max}$ and $d_{\min}$ respectively denote the maximum and the minimum number of leaves connected to a  fixed node on the central path. It is immediate to see that if $d_{\min} \ge d_{\max}/2+1$, the structure has the expansion property in~\eqref{condsummary:expansion}.

\smallskip\noindent
\emph{Random bipartite graphs.}
Consider bipartite graphs with hidden nodes in one part and observed nodes
in the other part. Each edge (between the two parts) is included in the
graph with probability $\theta$, independent from every other edge. It is
easy to see that, for any set $S \subseteq [k]$, the expected number of its
neighbors is : $\E|\Neigh(S)| = n(1-(1-\theta)^{|S|})$. Also, the expected
degree of the hidden nodes is $\theta n$. Now, by applying a Chernoff
bound, one can show that these graphs have the expansion property with high
probability, if $1-\sqrt{1-2k/n} < \theta <1/2$, \emph{i.e.}, with probability converging to one as $n \to \infty$.

%Thus, we prove identifiability and learning of a wide family of topic models, subject to expansion conditions.

\begin{figure}[!t]
\centering
\subfigure[Full ternary tree]{
\includegraphics*[viewport = 160 190 630 480, width=2.5in]{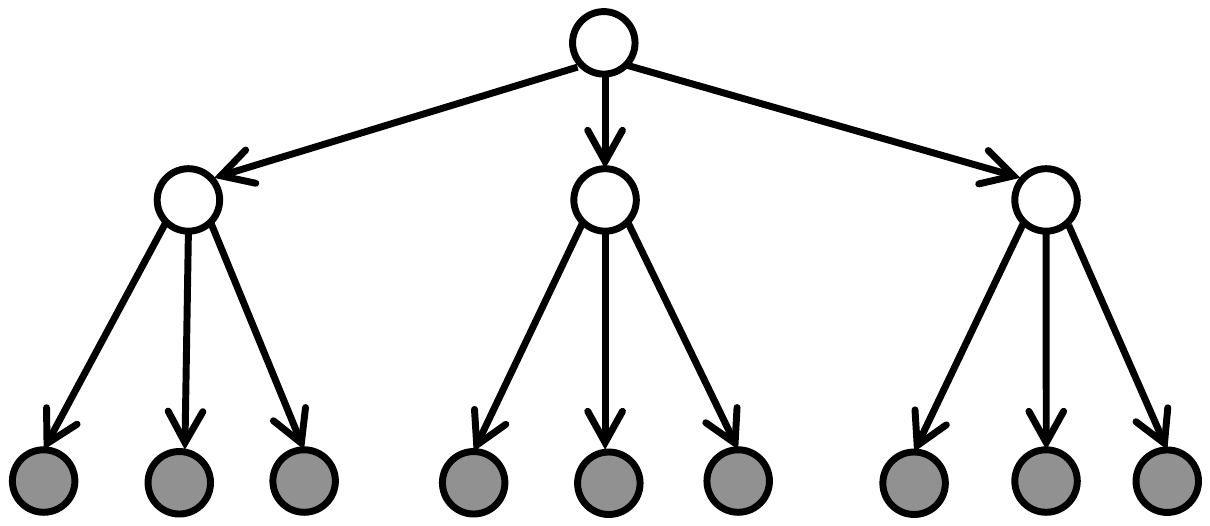}\label{fig:ternary-tree}
}
\hspace{1cm}
\subfigure[Caterpillar tree]{
\includegraphics*[viewport = 90 140 670 470, width=2.5in]{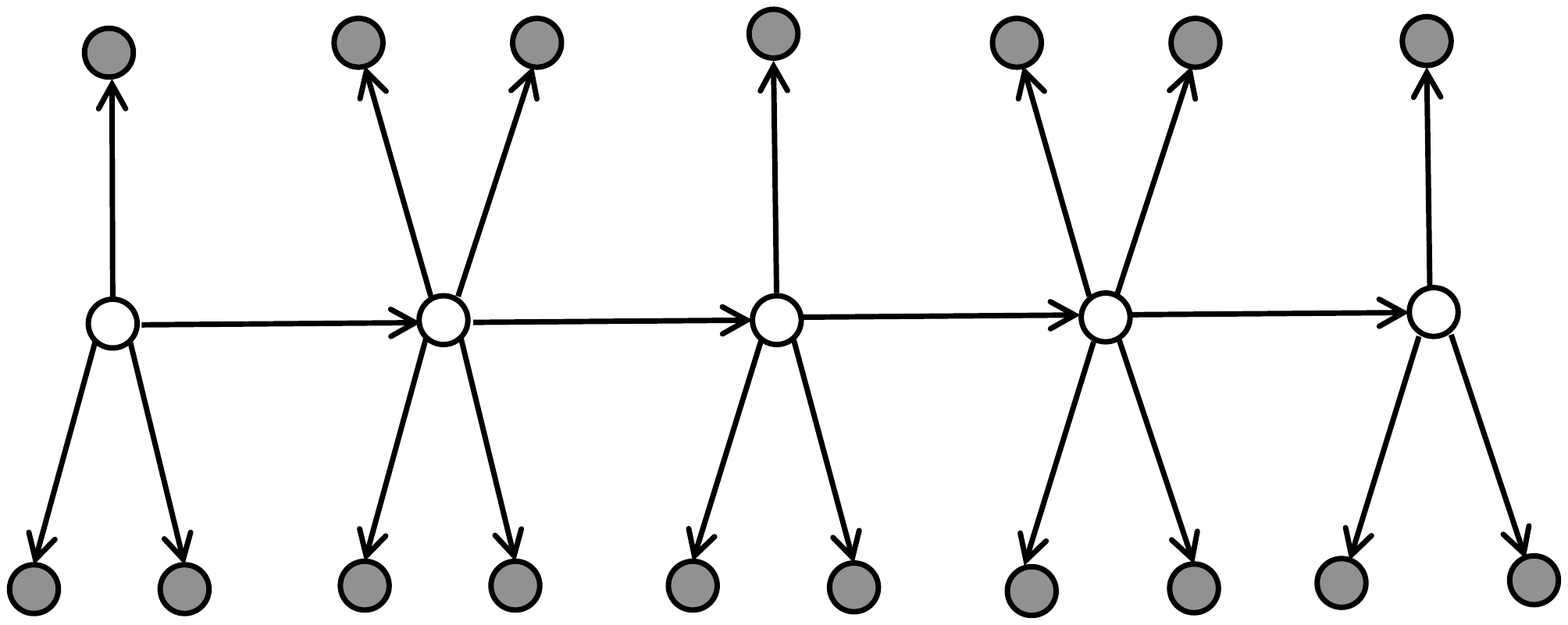}\label{fig:caterpillar}
}
\caption{Illustration of full ternary tree and caterpillar tree. Concrete examples of correlated topic
models than can be learned using low order moments. Words and topics are
respectively shown by shaded and white circles. }
\end{figure}

%================================
\subsection{Our techniques}
\label{subsec:techniques}
Our proof techniques rely on ideas and tools developed in dictionary learning,  spectral techniques, and matrix decomposition.
We briefly explain our techniques and their relationships to these areas.

\paragraph{Dictionary learning and $\ell_1$ optimization.}
We  cast the topic models as {\em linear exchangeable multiview models}  in Section~\ref{subsec:model} and demonstrate that the second order (cross) moment between any two words $x_i, x_j$ satisfies
\begin{equation}\label{eqn:summ-secondorder} \Ebb[x_i x_j^\top]= \Ebb[\Ebb[x_i x_j^\top| h ]] = A \Ebb[h h^\top ] A^\top,\quad\forall\,i \neq j , \end{equation} where $A\in \R^{n\times k}$ in the topic-word matrix,   $n$ is the vocabulary size,  $k$ is the number of topics, and $h$ is the topic mixture.
Thus, the problem of learning topic models using second order moments reduces to finding matrix $A$,  given $A \Ebb[h h^\top] A^\top$.

Indeed, further conditions need to be imposed for identifiability of $A$
from $A\Ebb[h h^\top] A^\top$. A natural non-degeneracy constraint is that
the correlation matrix of the hidden topics $\Ebb[h h^\top]$ be  full rank,
so that $\Col(A)=\Col(A\Ebb[h h^\top]A^\top)$, where $\Col(\cdot)$ denotes the column span.
Under the expansion condition in~\eqref{condsummary:expansion}, for generic
parameters, we  establish that  the columns of $A$ are the sparsest vectors
in $\Col(A)$, and are thus identifiable.
To prove this claim, we  leverage ideas from the work of Spielman et. al.~\cite{Spielman-12}, where  the problem of sparsely used dictionaries is considered   under probabilistic assumptions.   In addition, we develop novel techniques to establish non-probabilistic counterpart of the result of~\cite{Spielman-12}. A key ingredient in our proof is  establishing that  submatrices of the topic-word matrix, corresponding to any subset of columns and their neighboring rows, satisfy a certain null-space property under generic parameters and expansion condition in~\eqref{condsummary:expansion}.

The above identifiability result implies recovery of the topic-word matrix
$A$ through exhaustive search for sparse vectors in $\Col(A)$. Instead, we propose an efficient method to recover the columns of $A$ through $\ell_1$ optimization. We prove that $\ell_1$ method recovers the matrix $A$, under the expansion condition in~\eqref{condsummary:expansion}, and some additional conditions on the non-zero entries of $A$.

%For learning multi-level DAGs, we apply previous technique to the induced models between adjacent levels and learn them sequentially.

\paragraph {Spectral techniques for learning latent Bayesian networks.}
When the topic distribution is modeled via a linear Bayesian network,  we exploit additional structure in the observed moments to learn the relationships among the topics, in addition to the topic-word matrix. Specifically,  we assume that the topic variables obey the following linear equations:
\begin{align}
h(j) & = \sum_{\ell \in \PA_j} \lambda_{j\ell} h(\ell) + \eta(j),\quad
\text{for }j \in [k] ,
\label{eqn:hid_eqn}
\end{align}where $\PA_j$ denotes the parents of node $j$ in the directed acyclic graph (DAG) corresponding to the Bayesian network.
Here, we assume that the noise variables $\eta(j)$  are
non-Gaussian (\emph{e.g.},   they have non-zero third moment or excess
kurtosis), and are independent.  We employ the $\ell_1$ optimization framework discussed in the previous paragraph, and in addition,  leverage  the  spectral methods of~\cite{Anima-SVD-12} for learning   using  second and third observed moments.

We first establish that the model in~\eqref{eqn:hid_eqn} reduces to
independent component analysis (ICA),  where the latent variables are
independent components, and this problem can be solved via spectral
approaches (\emph{e.g.},~\cite{Anima-SVD-12}). Specifically, denote $\Lambda=[\lambda_{i,j}]$, where $\lambda_{i,j}$ denotes the dependencies between different hidden topics in~\eqref{eqn:hid_eqn}.
Solving for the hidden topics $h_j$, we have $h =
(I-\Lambda)^{-1} \eta$, where $\eta:= (\eta(1), \dotsc, \eta(k))$ denotes the independent noise variables in~\eqref{eqn:hid_eqn}. Thus, the latent Bayesian network in~\eqref{eqn:hid_eqn} reduces to an ICA model, where
$\eta:= (\eta(1), \dotsc, \eta(k))$ are the independent latent components, and the linear map from hidden  to the  observed variables is given by $A
(I-\Lambda)^{-1}$, where $A$ is the original topic-word matrix. We then apply spectral techniques from~\cite{Anima-SVD-12}, termed as excess correlation analysis (ECA),  to   learn $A(I-\Lambda)^{-1}$ from the
second and third order moments of the observed variables. ECA is based on
two singular value decompositions: the first SVD whitens the data (using
second moment) and the second SVD uses the third moment to find directions which exhibit information that is not captured by the second moment. Finally, in order to recover $A$ from $A(I-\Lambda)^{-1}$, we exploit the expansion property in \eqref{condsummary:expansion}, and extract $A$ as described previously through $\ell_1$ optimization. The high-level idea is depicted in Fig.~\ref{fig:high-level}.

%\ajnew{Should explain that the following is needed for single view .... I'm retaining the notation $\Sigma = \E[x \otimes x]$ for single-view and $\Pairs$ and $\Triples$ for multi-view}.

\paragraph{Matrix decomposition into diagonal and low-rank parts for general linear  models.}
Our framework for learning topic models casts them as linear multiview models, where the words represent the multiple views of the  hidden topic mixture $h$, and the conditional expectation of each word given the topic mixture $h$ is a linear map of   $h$.
We   extend our results for learning  general linear models, where such multiple views may not be available. Specifically, we consider \begin{align}
x(i) &= \sum_{j \in \PA_i} a_{ij} h(j) + \veps(i), \quad \text{for } i \in [n]\,, \label{eqn:obs_eqn}
%h_j &= \sum_{\ell \in \PA_j} \lambda_{j\ell} h_{\ell} + \eta_j,\quad \text{for }j=1,\cdots, k.\label{eqn:hid_eqn}
\end{align}
where $\{\veps(i)\}_{i\in[n]}$ are uncorrelated and are independent from the hidden variables $\{h(j)\}_{j\in[k]}$.
In this case, the second order moments
$\Sigma : = \E[xx^\T]$ satisfies
 \begin{eqnarray*}
\Sigma = A \E[hh^\T] A^\T + \E[\veps \veps^\T],
\end{eqnarray*} and has another noise component $\E[\veps \veps^\T]$, when compared to the second-order (cross) moment for topic models in~\eqref{eqn:summ-secondorder}. Note that the rank of $A \E[hh^\T] A^\T$ is $k$  (under non-degeneracy conditions), where $k$ is the number of topics. Thus, when $k$ is sufficiently small compared to $n$, we can view
$\Sigma$ as the sum of a low-rank matrix and a diagonal one.
We prove that under the rank condition that\footnote{It should be noted that other matrix decomposition methods have been considered previously~\cite{Venkat11,HKZ11,Saunderson}.
Using these techniques, we can relax
Condition~\ref{cond:rank} to $k \le n/2$, but only by imposing
stronger incoherence conditions on the low-rank component.} $n \geq 3k$ (and the existence of a partition of three sets of columns of $A$ such that each set has full column rank),
$\E[xx^\T]$ can be decomposed into its low-rank component $A\E[hh^\T] A^\T$
and its diagonal component $\E[\veps \veps^\T]$.
Thus, we employ matrix decomposition techniques to  ``de-noise''   the second order
moment and recover $A\E[hh^\T] A^\T$ from $\Sigma$. From here on, we can apply the techniques described previously to recover $A$ through $\ell_1$ optimization. Thus, we develop novel techniques for learning general latent linear models under expansion conditions.\\

%
%\begin{figure*}[!t]
%%\includegraphics*[viewport = -50 230 880 450, width=7in]{Figs/high-level}
%\begin{center}
%\input{new_figs/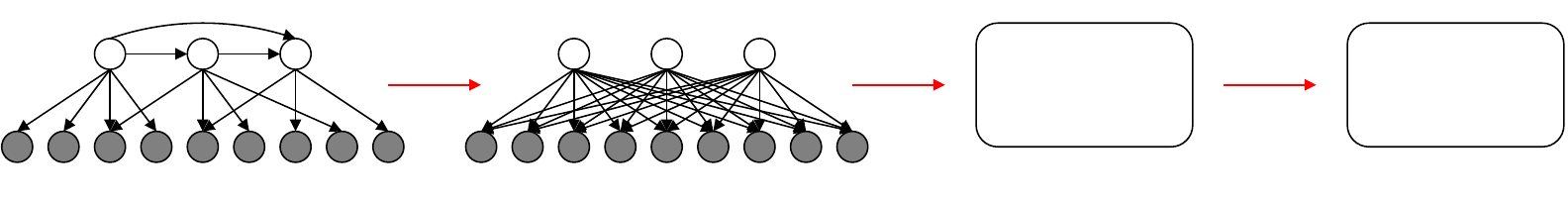_t}
%\end{center}
%\caption{The high-level idea of the technique used for learning DAGs with effective depth one. In the leftmost graph (original DAG) the hidden nodes depend on each other through the matrix $\Lambda$ and the observed variables depend on the hidden nodes through the coefficient matrix $A$. We consider an equivalent DAG with new uncorrelated hidden variables $\eta_j$ (these are in fact the noise terms at the hidden nodes in the previous model). Here, the observed variables depend on the hidden ones through the matrix $A(I-\Lambda)^{-1}$. Applying ECA method, we learn this matrix from the (second and third order) observed moments. Finally, using the expansion property of the connectivity structure between the hidden part and the observed part, we extract $A$ and $\Lambda$ from $A(I-\Lambda)^{-1}$.}\label{fig:high-level}
%\end{figure*}

\begin{figure*}[!t]
\includegraphics*[viewport = -3 10 570 60 , width=8.2in]{}
\put(-596,-10){\tiny{$x(1)$}}
\put(-580,-10){\tiny{$x(2)$}}
\put(-564,-10){\tiny{$x(3)$}}
\put(-549,-10){\tiny{$x(4)$}}
\put(-535,-10){\tiny{$x(5)$}}
\put(-520,-10){\tiny{$x(6)$}}
\put(-506,-10){\tiny{$x(7)$}}
\put(-492,-10){\tiny{$x(8)$}}
\put(-477,-10){\tiny{$x(9)$}}
\put(-560,48){\scriptsize{$h(1)$}}
\put(-532,48){\scriptsize{$h(2)$}}
\put(-505,48){\scriptsize{$h(3)$}}
\put(-456,-10){\tiny{$x(1)$}}
\put(-440,-10){\tiny{$x(2)$}}
\put(-425,-10){\tiny{$x(3)$}}
\put(-410,-10){\tiny{$x(4)$}}
\put(-395,-10){\tiny{$x(5)$}}
\put(-380,-10){\tiny{$x(6)$}}
\put(-366,-10){\tiny{$x(7)$}}
\put(-352,-10){\tiny{$x(8)$}}
\put(-336,-10){\tiny{$x(9)$}}
\put(-420,48){\scriptsize{$\eta(1)$}}
\put(-392,48){\scriptsize{$\eta(2)$}}
\put(-364,48){\scriptsize{$\eta(3)$}}
\put(-330,29){ECA}
\put(-280,30){Learn}
\put(-290,15){$A(I-\Lambda)^{-1}$}
\put(-228,37){\scriptsize{Expansion}}
\put(-225,29){\scriptsize{property}}
\put(-170,30){Extract}
\put(-172,15){$A$ and $\Lambda$}
\begin{center}
\end{center}
\caption{The high-level idea of the technique used for learning latent Bayesian networks. In the leftmost graph (original DAG) the hidden nodes depend on each other through the matrix $\Lambda$ and the observed variables depend on the hidden nodes through the coefficient matrix $A$. We consider an equivalent DAG with new independent latent variables $\eta_j$ (these are in fact the noise terms at the hidden nodes in the previous model). Here, the observed variables depend on the hidden ones through the matrix $A(I-\Lambda)^{-1}$. Applying ECA method, we learn this matrix from the (second and third order) observed moments. Finally, using the expansion property of the connectivity structure between the hidden part and the observed part, we extract $A$ and $\Lambda$ from $A(I-\Lambda)^{-1}$.}\label{fig:high-level}
\end{figure*}

%\begin{itemize}
%\item[$\bullet$] For the core result and the multi-level one, we adhere to the second order statistics. Under the linear model, we observe that $\E[xx^\T]$ is sum of low-rank and diagonal parts and it can be decomposed uniquely under assumption blah. The algorithm for decomposition is very simple and is based on SVD.
%
%\item[$\bullet$] Under the canonical form, the low-rank part is $AA^\T$. We are interested in recovery of $A$. Explain the relation to dictionary learning and cite the COLTS paper. Give  the gist of the argument: looking for the sparest vectors in the column space.
%Compare random $A$ versus the deterministic one we have here. Leveraging the proof technique of that work, we introduce the natural concept of expansion. It basically says that the hidden nodes cannot have very similar neighborhoods. Discuss that assuming such a property is plausible.
%
%\item[$\bullet$] Our proof for multi-level DAGs is obtained by applying previous technique sequentially.
%
%\item[$\bullet$] For DAGs with effective depth one, we use third order statistics along with the above technique. Overview the works in topic learning based on third/four moments. cite the ECA,...
%
%\end{itemize}

Our presentation focuses on using exact (population) observed moments to
emphasize the correctness of the methodology.
However, ``plug-in'' moment estimates can be used with sampled data.
To partially address the statistical efficiency of our method,  note that
higher-order empirical moments generally have higher variance than lower-order empirical moments, and therefore are more difficult to reliably estimate. Our techniques only involve low-order moments (up to third order). A precise analysis of sample complexity involves
standard techniques for dealing with sums of i.i.d.\ random matrices and
tensors as in~\cite{Anima-SVD-12} and is left for future study.
See Section~\ref{sec:simulation} for the performance of our proposed algorithms under finite number of samples.

\subsection{Related work}
\label{subset:related work}

Probabilistic topic models have received widespread attention in recent
years; see~\cite{blei2012probabilistic} for an overview. However, till
recently, most learning approaches do not have provable guarantees, and in
practice Gibbs sampling or variational Bayes methods are used. Below, we
provide an overview of learning approaches with theoretical guarantees.

\paragraph{Learning topic models through moment-based approaches.} A  series of recent works aim to learn topic models using low order moments (second and third) under parametric assumptions on the topic distribution, e.g. single-topic model~\cite{AHK12} (each document consists of a single topic), latent Dirichlet allocation (LDA)~\cite{Anima-SVD-12}, independent components analysis (ICA)~\cite{ICAbook} (the different components of $h$, i.e., $h_i$ are independent), and so on; see~\cite{AGHKT} for an overview. A general framework based on tensor decomposition is given in~\cite{AGHKT} for a wide range of latent variable models, including LDA and single topic models, Gaussian mixtures, hidden Markov models (HMM), and so on. These approaches do not impose any constraints  on the topic-word matrix $A$ (other than  non-degeneracy). In contrast, in this paper, we impose constraints on $A$, and allow for any general topic distribution. Furthermore, we specialize the results to parametric settings where the topic distribution is a Bayesian network, and for this sub-class,  we use ideas from the method of moments (in particular, the excess correlation method (ECA) of~\cite{Anima-SVD-12})  in conjunction with ideas from sparse dictionary learning.

\paragraph{Learning topic models through non-negative matrix factorization.} Another series of recent works by Arora et. al.~\cite{arora2012learning,AroraICML} employ a similar philosophy as this paper: they  allow for general topic distributions, while constraining the topic-word matrix $A$. They employ approaches based on  non-negative matrix factorization (NMF), and exploit the fact that $A$ is non-negative (recall that $A$ corresponds to conditional distributions). The approach and the assumptions are quite different from this work. They establish guaranteed learning under the assumption that every topic has an {\em anchor} word, i.e. the word is uniquely generated from the topic, and does not occur under any other topic (with reasonable probability).  Note that the presence of anchor words implies expansion constraint: $|\Neigh(S)|\geq |S|$ for all subsets $S$ of topics, where $\Neigh(S)$ is the set of neighboring words for topics in $S$. In contrast, our requirement for guaranteed learning is $|\Neigh(S)|\geq |S|+d_{\max}$, where $d_{\max}$ is the maximum degree of any topic.  Thus our requirement  is comparable to $|\Neigh(S)|\geq |S|$,  when $d_{\max}$ is small, and our   approach does not require presence of anchor word. Additionally,  our approach does not assume that the topic-word matrix $A$ is positive, which makes it   applicable for more general linear models, e.g. when the variables are not discrete and matrix $A$  corresponds to a general mixing matrix (note that for discrete variables, $A$ corresponds to conditional distribution and is thus non-negative).

\paragraph{Dictionary learning.}  As discussed in Section~\ref{subsec:techniques}, we use some of the the ideas developed in the context of sparsely used dictionary learning problem.
The problem setup there is that one is given a matrix $X$ and is asked to find a pair of matrices $A$ and $M$ so that $\|X-AM\|$ is small and also $M$ is sparse. Here, $A$ is considered as the dictionary being used.  Spielman et. al~\cite{Spielman-12} study this problem assuming that $A$ is a full rank square matrix and the observation $X$ is noiseless, i.e., $X = A M$. In this scenario, the problem can be viewed as learning a matrix $X$ from its row space knowing that $X$ enjoys some sparsity structure. Stating the problem this way clearly describes the relation to our work, as we also need to recover the topic-word matrix $A$ from its second-order moments $A\E[hh^\T]A^\T$, as explained in Section~\ref{subsec:techniques}.

The results of~\cite{Spielman-12} are obtained assuming  that  the entries of $M$ are drawn i.i.d. from a Bernoulli-Gaussian distribution. The idea is then to seek the rows of $X$
sequentially, by looking for the sparse vectors in $\Row(Y)$.
Leveraging similar ideas, we obtain non-probabilistic counterpart of the results, i.e., without assuming any parametric distribution on the topic-word matrix. These conditions turn out to be intuitive  expansion conditions on the support of the topic-word matrix, assuming generic parameters. Our technical arguments to arrive at these results are  different than the ones employed in~\cite{Spielman-12}, since we do not assume any parametric distribution, and its application to learning topic models is novel. Moreover,  in fact, it can be shown that the considered probabilistic models considered our~\cite{Spielman-12}, satisfy the expansion  property~\eqref{condsummary:expansion} almost surely, and are thus, special cases under our framework. Variants of the sparse dictionary learning problem of~\cite{Spielman-12} have also been proposed~\cite{zibulevsky2001blind,gottlieb2010matrix}. For a detailed discussion on other works dealing with dictionary learning, refer to~\cite{Spielman-12}.

\paragraph{Linear structural equations.}
In general, structural equation modeling (SEM) is defined by a collection of equations $z_i = f_i(z_{\PA_i},\veps_i)$, where $z_i$'s are the variables associated to the nodes. Recently, there has been some progress on the identifiability problem of SEMs in the fully observed linear models~\cite{Shimizu-06,Hoyer-09, Peters-11,Peters-12}. More specifically, it has been shown that for linear functions $f_i$ and non-Gaussian
noise, the underlying graph $\G$ is identifiable~\cite{Shimizu-06}. Moreover, if one restricts
the functions to be additive in the noise term and excludes the linear Gaussian case
(as well as a few other pathological function-noise combinations), the graph structure $\G$ is
identifiable~\cite{Hoyer-09, Peters-11}. Peters et. al.~\cite{Peters-12} consider Gaussian SEMs
with linear functions, and the normally distributed noise variables with the same
variances and show that the graph structure $\G$ and the functions are identifiable. However, none of these works deal with latent variables, or address the issue of efficiently learning the models. In contrast, our work here can be viewed as a contribution to the problem of identifiability and learning of linear SEMs with latent variables.

\paragraph{Learning Bayesian networks and undirected graphical models. }
The problem of identifiability and learning graphical models from distributions has been the object of intensive investigation in the past years and has been studied in different research communities. This problem has proved important in a vast number of applications, such as computational biology~\cite{Durbin-98, Roch-12}, economics~\cite{Awokuse-03, Haavelmo-43,Zellner-71,Bollen-89}, sociology~\cite{Bagozzi-80,Kohn-82,Wheaton-78,Bollen-89}, and computer vision~\cite{Koller:UAI97,Choi-10}.  The learning task has two main ingredients: structure learning and parameter estimation.

%It is beyond the scope of this paper to survey all these work. We describe a very selective collection of work most relevant to this paper.

Structure estimation of probabilistic graphical models has been extensively
studied in the recent years. It is well known that maximum likelihood
estimation in fully observed tree models is tractable~\cite{Chow&Liu:68IT}.
However, for general models, maximum likelihood structure learning is
NP-hard even when there are no hidden variables. The main approaches for
structure estimation are score-based methods, local tests and convex
relaxation methods. Score-based methods such as~\cite{Chickering-03} find
the graph structure by optimizing a score (\emph{e.g.}, Bayesian Independence Criterion) in a greedy manner. Local test approaches attempt to build the graph based on local statistical tests on the samples, both for directed and undirected graphical models~\cite{Spirtes-00,Abbeel2006learning,Bresler&etal:Rand,AnandkumarTanWillsky:Ising11,Jalali:greedy,hauser2011characterization}.
Convex relaxation approaches have also been considered for structure
estimation (\emph{e.g.},~\cite{Mei06,Ravikumar&etal:08Stat}).

In the presence of latent variables, structure learning becomes more challenging. A popular class of latent variable models are latent trees, for which efficient algorithms have been developed~\cite{erdos99,daskalakis06,Choi&etal:10JMLR,Anima-spectral-11}. Recently, approaches have been proposed for learning (undirected) latent graphical models with long cycles in certain parameter regimes~\cite{Anandkumar:girth12}.
In~\cite{Chandrasekaran:10latent}, latent Gaussian graphical models are estimated using convex relaxation approaches.
The authors in~\cite{Silva-06} study linear latent DAG models and propose methods to (1) find clusters of observed nodes that are separated by a single latent common cause; and (2) find features of the Markov Equivalence class of causal models for the latent variables. Their model allows for undirected edges between the observed nodes.
In~\cite{ali2005towards}, equivalence class of DAG models is characterized when there are latent variables.  However, the focus is on constructing an equivalence class of DAG models, given a member of the class. In contrast, we focus on developing efficient learning methods for latent Bayesian networks based on spectral techniques in conjunction with $\ell_1$ optimization.

%In general, even for fully observed models, structure learning is NP-hard and becomes more challenging when some variables are hidden~\cite{Chickering-04}.

%
 
%===============================
\section{Model and sufficient conditions for identifiability}
%\subsection{Notation}\label{subsec:notation}

\paragraph{Notation.} We write $\|v\|_p$ for the standard $\ell^p$ norm of a vector $v$. Specifically, $\|v\|_0$
denotes the number of non-zero entries in $v$. Also, $\|M\|_p$ refers to the induced operator norm on a matrix $M$.
For a matrix $M$ and set of indices $I,J$, we let
$M_I$ denote the submatrix containing just the rows in $I$ and $M_{I,J}$ denote the submatrix formed by the rows in $I$ and columns in $J$.
For a vector $v$, $\supp(v)$ represents the positions of non-zero entries of $v$.
We use $e_i$ to refer to the $i$-th standard basis element, \emph{e.g.}, $e_1 = (1, 0, \dotsc, 0)$.
For a matrix $M$ we let $\Row(M)$ (similarly $\Col(M)$) denote the span of its rows (columns). For a set $S$, $|S|$ is its cardinality.
We use the notation $[n]$ to denote the set $\{1, \dotsc, n\}$.
For a vector $v$, $\diag(v)$ is  a diagonal matrix with the elements of $v$ on the diagonal. For a matrix $M$,
$\diag(M)$ is a diagonal matrix with the same diagonal as $M$. Throughout $\otimes$ denotes the tensor product.
%===============================

\subsection{Overview of topic models}

\begin{figure}
\begin{center}
\begin{tikzpicture}
  [
    scale=1.0,
    observed/.style={circle,minimum size=0.7cm,inner sep=0mm,draw=black,fill=black!20},
    hidden/.style={circle,minimum size=0.7cm,inner sep=0mm,draw=black},
  ]
  \node [hidden,name=h] at ($(0,0)$) {$h$};
  \node [observed,name=x1] at ($(-1.5,-1)$) {$x_1$};
  \node [observed,name=x2] at ($(-0.5,-1)$) {$x_2$};
  \node at ($(0.5,-1)$) {$\dotsb$};
  \node [observed,name=xl] at ($(1.5,-1)$) {$x_\ell$};
  \draw [->] (h) to (x1);
  \draw [->] (h) to (x2);
  \draw [->] (h) to (xl);
\end{tikzpicture}\end{center}
\caption{Exchangeable topic model with topic mixture $h$ and $x_i $ represents $i$-th word  in the document.}\label{fig:exchangeable}\end{figure}
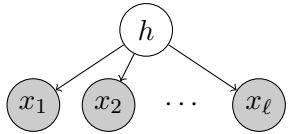

Consider the  {\em bag-of-words} model for documents in which the sequence of observed
words $x_1, x_2, \ldots,x_\ell$ in the document are \emph{exchangeable}, i.e., the joint probability distribution  is invariant to
permutation of the indices.
The well-known De Finetti's theorem~\cite{austin2008exchangeable} implies
that such exchangeable models can be viewed as mixture models in which
there is a latent   variable $h$ such that $x_1, x_2, \dotsc, x_\ell$ are
conditionally i.i.d.~given $h$  and the conditional distributions are
identical at all the nodes. See Fig.\ref{fig:exchangeable} for an illustration.

In the context of document modeling, the latent variable $h$ can be interpreted as a distribution over the topics occurring in a  document. If the total number of  topics is $k$, then $h$ can be viewed as a distribution over the simplex $\Delta^{k-1}$. The word generation process is thus a hierarchical process:  for each document, a realization of $h$ is drawn and it represents the proportion of topics in the documents, and for each word, first a topic is drawn from the topic mixture, and then the word is drawn given the topic.

Let $A=[a_{ij}]\in \R^{n \times k}$ denote the topic-word matrix, where $a_{i,j}$ denotes the conditional probability of word $i$ occurring given that the topic $j$ was drawn.
It is convenient to represent the   words in the document by
$n$-dimensional random \emph{vectors} $x_1, x_2, \dotsc, x_\ell \in \R^n$.
Specifically, we set
\[ x_t = e_i \quad \text{if and only if} \quad
\text{the $t$-th word in the document is $i$} , \quad t \in [\ell] , \]
where $e_1,e_2,\ldots e_n$ is the standard coordinate basis for $\R^n$.

The above encoding allows for a convenient representation of topic models as  linear models:
\[ \Ebb[x_i | h ] = Ah, \quad\forall\, i \in [l],\] and moreover the second order cross-moments (between two different words) have a simple form:
\begin{equation}\label{eqn:secondorder} \Ebb[x_i x_j^\top]= \Ebb[\Ebb[x_i x_j^\top| h ]] = A \Ebb[h h^\top ] A^\top,\quad\forall\,i \neq j .\end{equation}
Thus, the above representation allows us to view topic models as linear models. Moreover, it allows us to incorporate other linear models, i.e. when $x_i$ are not basis vectors. For instance, the independent components model is a popular framework, and can be viewed as a set of linear structural equations with latent variables.  See Section~\ref{sec:singleview} for a detailed discussion.

Thus, the learning task using second-order (exact) moments in \eqref{eqn:secondorder} reduces to    recovering $A$ from
$A \Ebb[h h^\top ] A^\top$, or equivalently $A \Ebb[hh^\top]^{1/2}$.

%========================================
\subsection{Sufficient conditions for identifiability}
\label{subsec:model}

We first start with some natural non-degeneracy conditions.

\begin{condition}[Non-degeneracy]\label{cond:nondegeneracy}
The  topic-word matrix $A:=[a_{i,j}]\in \R^{n \times k}$ has full column
rank and the hidden variables are linearly independent, \emph{i.e.}, with
probability one, if $\sum_{i\in [k]} \alpha_i h(i) = 0$, then $\alpha_i = 0$, for all $i \in [k]$.
\end{condition}
We note that without such non-degeneracy assumptions,
there is no hope of distinguishing different hidden nodes.
% using second order observed moments.

%
%From Eq.~\eqref{eqn:obs_eqn}, it is immediate to see that
%\begin{align*}
%x_i - \E[x_i] &= \sum_{j \in \PA_i} a_{ij} (h_j - \E[h_j]) + (\veps_i - \E[\veps_i]), \quad \,\,\, \text{for } i \in [n].
%%h_j - \E[h_j] &= \sum_{\ell \in \PA_j} \lambda_{j\ell} (h_k -\E[h_k]) + (\eta_j - \E[\eta_j]),\quad \text{for }j=1,\cdots, k.
%\end{align*}
%%
%Therefore,

%\djh{can you clarify the linear independence assumption?
%e.g., with probability 1, $\sum_i \alpha_i h_i = 0 \Rightarrow \alpha_i = 0
%\forall i\in[k]$?  also, can you make this explicitly part of the condition?}
%\aj{done!}

We now describe sufficient conditions under which the topic model becomes identifiable using second order observed moments.
Given word observations $x_1, x_2,\ldots$, note that we can only hope to
identify the columns of topic-word matrix $A$ up to permutation because the
model is unchanged if one permutes the hidden variable $h$ and the columns
of $A$ correspondingly. Moreover, the scale of each column of $A$ is also
not identifiable. To see this, observe that Eq.~\eqref{eqn:secondorder} is
unaltered if we both rescale all the coefficients $\{a_{ij}\}_{i \in [n]}$
and appropriately rescale the variable $h(j)$. Without further assumptions, we can only hope to recover a certain canonical form of $A$, defined as follows:

\begin{definition}
%We say $A$ is in a \emph{canonical form} if for each $j\in[k]$, $\|A_j\|$$\sigma_{h_j}^2 = \E[h_j^2] = 1$. In particular, the transformation $A \leftarrow A \diag(\sigma_{h_1}, \sigma_{h_2}, \dotsc, \sigma_{h_k})$ and the corresponding rescaling of $h$ place $A$ in canonical form and the distribution over $x_i$, $i\in [n]$, is unchanged.
%\end{definition}
%Furthermore, observe that the canonical $A$ is only specified up to sign of
%each column since any sign change of column $i$ does not alter the variance
%of $h_i$.
We say $A$ is in a \emph{canonical form} if all of its columns have unit norm. In particular, the transformation $A \leftarrow A \diag(\|A_{[n],1}\|^{-1}, \|A_{[n],2}\|^{-1}, \dotsc, \|A_{[n],k}\|^{-1})$ and the corresponding rescaling of $h$ place $A$ in canonical form and the distribution over $x_i$, $i\in [n]$, is unchanged.
\end{definition}
Furthermore, observe that the canonical $A$ is only specified up to sign of
each column since any sign change of column $i$
%does not alter the variance of $h_i$.
does not alter its norm.

Thus, under the above non-degeneracy and scaling conditions, the task of recovering $A$ from  second-order (exact) moments in \eqref{eqn:secondorder} reduces to    recovering $A$ from Col$(A)$. Recall that our criterion for identifiability is that the sparsest vectors in the Col$(A)$ correspond to the columns of $A$. We now provide sufficient conditions for this to occur, in terms of structural conditions on the support of $A$, and parameter conditions on the non-zero entries of $A$.

For structural conditions on the topic-word matrix $A$, we proceed by defining the \emph{expansion property} of a graph which plays a key role in establishing our identifiability results.

\begin{condition}[Graph expansion] \label{cond:expansion}
Let $\H(\vhid,\vobs)$ denote the bipartite graph formed by the support of $A$: $\H(i,j)=1$ when $a_{i,j}\neq 0$, and $0$ otherwise, and $\vhid:=[k]$, $\vobs:=[n]$. We assume that the $\H$ satisfies the following expansion property:
\begin{equation}\label{eqn:mainexpansion} |\Neigh(S)| \ge |S| + d_{\max}, \quad \forall\,S\subset [k], |S|\geq 2,\end{equation}where $\Neigh(S): = \{i \in \V_2: (j,i) \in \mathcal{E} \text{ for some }j \in S\}$ is the set of the neighbors of $S$ and $d_{\max}$ is the maximum degree of nodes in $\vhid$.
\end{condition}

Note that the condition $|\Neigh(S)| \geq |S|$, for all subsets of hidden nodes $S\subset [k]$, is {\em necessary} for the matrix $A$ to be full column rank. We observe that the above sufficient condition in \eqref{eqn:mainexpansion} has an additional degree term $d_{\max}$, and  is thus close to the necessary condition when $d_{\max}$ is small. Moreover, the above condition in \eqref{eqn:mainexpansion}  is only a weak additive expansion, in contrast to multiplicative expansion, which is typically required for  various properties to hold, e.g.~\cite{berinde2008combining}.

The last condition is a generic assumption on the entries of matrix $A$. We
first define the \emph{parameter genericity property} for a matrix.

\begin{condition}[Parameter genericity] \label{cond:parameter}
We assume that the topic-word matrix $A$ has the following parameter genericity property:
for any $v \in \R^k$ with $\|v\|_0 \ge 2$, the following holds true.
\begin{eqnarray}
\|Av\|_0 > |\Neigh_A(\supp(v))| - |\supp(v)|,
\end{eqnarray}
where for a set $S \subseteq [k]$, $\Neigh_A(S) := \{i\in [n]: A_{ij} \neq 0 \text{ for some j }\in S\}$.\end{condition}

This is a mild generic condition. More specifically if the entries of any arbitrary fixed matrix $M$ are perturbed independently, then it satisfies the above generic property with probability one.

\begin{remark}
\label{rem:genericity}
Fix any matrix $M \in \R^{n \times k}$.
Let $Z \in \R^{n \times k}$ be a random matrix such that $\{ Z_{ij}:
M_{ij} \neq 0 \}$ are independent random variables, and $Z_{ij}
\equiv 0$ whenever $M_{ij} = 0$.
Assume each variable is drawn from a distribution with uncountable support.
Then
\begin{eqnarray}
\P(\text{$M + Z$ does not satisfy Condition~\ref{cond:parameter}}) = 0 .
\end{eqnarray}
\end{remark}

Remark~\ref{rem:genericity} is proved in Appendix~\ref{app:genericity}.

%\begin{itemize}
%\item[$\bullet$] The linear model, introduce matrices $A$ and $\Lambda$ and dub appropriate names for them (any suggestions?)!
%
%\item[$\bullet$] Define the canonical form and explain its necessity for identifiability.
%
%\item[$\bullet$] Explain the full rank assumption on $A$; explain that we assume a stronger assumption, namely there exists a partition $\Part$ such that $A_I$ is full column rank for $I \in \Part$. Point out that this is essentially saying $k < n/3$.
%
%\item[$\bullet$] Define the expansion notion and the genericity assumptions on A. Refer to a Lemma in Appendix that states the parameter genericity assumption is in fact generic!
%
%\end{itemize}

%=============================================================================
\section{Identifiability result and Algorithm}\label{sec:main-results}

%$\bullet$ explain the theorems and algorithms in tables. Maybe some figures for convenience.
In this section, we state our identifiability results and algorithms for learning the topic models under expansion conditions.

\begin{theorem}[Identifiability of the Topic-Word Matrix]\label{thm:core}
Let $\Pairs: = \E[x_1 \otimes x_2]$ be the pairwise correlation of the words. For the model described in Section~\ref{subsec:model}
(Conditions~\ref{cond:nondegeneracy}, \ref{cond:expansion},
\ref{cond:parameter}), all columns of $A$ are identifiable from $\Pairs$.
\end{theorem}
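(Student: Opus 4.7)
The plan is to combine three observations: (i) $\Pairs$ determines $\Col(A)$, (ii) the columns of $A$ are the unique (up to scaling) sparsest nonzero vectors in $\Col(A)$, and (iii) together these recover $A$ up to the inherent permutation/sign ambiguity built into the canonical form.

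First, by exchangeability and the derivation leading to \eqref{eqn:secondorder}, we have $\Pairs = A\,\E[hh^\top]\,A^\top$. Condition~\ref{cond:nondegeneracy} guarantees that $A$ has full column rank and, by linear independence of the $h(i)$'s, $\E[hh^\top]$ is positive definite. A short linear-algebra argument then gives $\Col(\Pairs) = \Col(A)$: the inclusion $\Col(\Pairs)\subseteq \Col(A)$ is immediate, while the reverse inclusion follows because $\E[hh^\top]$ is invertible and $A^\top:\R^n\to\R^k$ is surjective (being the transpose of a full-column-rank map). Hence $\Col(A)$ is fully determined by $\Pairs$.

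Next, I would argue that every nonzero $v\in\Col(A)$ with $\|v\|_0\le d_{\max}$ must be a scalar multiple of some column $A_{[n],i}$. Writing $v = Aw$ and $S = \supp(w)$: if $|S|=1$, then $v$ is a scaled column and $\|v\|_0 = |\Neigh_A(S)| \le d_{\max}$; if $|S|\ge 2$, Condition~\ref{cond:parameter} gives $\|Aw\|_0 > |\Neigh_A(S)| - |S|$, while Condition~\ref{cond:expansion} gives $|\Neigh_A(S)| - |S| \ge d_{\max}$, so $\|v\|_0 > d_{\max}$. Therefore the sparsest nonzero vectors in $\Col(A)$ are exactly the scalar multiples of individual columns of $A$.

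Identifiability then follows. Distinct columns of $A$ span distinct one-dimensional subspaces (if two were proportional, $A$ would not have full column rank, contradicting Condition~\ref{cond:nondegeneracy}), so the set of sparsest directions in $\Col(A)$ consists of exactly $k$ rays, one per column. Normalizing representatives to unit norm produces $A$ in the canonical form described in the text, up to the inherent sign and column-permutation ambiguity. Since $\Col(A)$ was read off from $\Pairs$, this recovery is a function of $\Pairs$ alone.

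The conceptual crux — and the step where all three conditions must interact — is the sparsity-minimization argument. Expansion alone only controls the neighborhood sizes; linear combinations of columns could still produce artificially sparse vectors through cancellations, and the bound $|\Neigh_A(S)|-|S|$ is precisely the ``cancellation budget'' that pure dimension counting allows. Parameter genericity (Condition~\ref{cond:parameter}) is what prevents these algebraic coincidences, forcing $\|Aw\|_0$ to \emph{strictly} exceed this budget; the additive $+d_{\max}$ in the expansion bound \eqref{eqn:mainexpansion} is then exactly what is needed so that any $|S|\ge 2$ combination is strictly less sparse than every individual column. I expect this combined expansion-plus-genericity step to be the main point requiring care; the remaining two observations are essentially formal.
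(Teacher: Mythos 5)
Your proposal is correct and follows essentially the same route as the paper's proof: recover $\Col(A)$ from $\Pairs$ via non-degeneracy, then use parameter genericity together with the $+d_{\max}$ expansion slack to show that any $Aw$ with $\|w\|_0\ge 2$ has more than $d_{\max}$ nonzeros, so the only vectors in $\Col(A)$ of sparsity at most $d_{\max}$ are scaled columns. The only cosmetic difference is that the paper phrases the final identification step as selecting a basis of $k$ linearly independent vectors of smallest $\ell_0$ norm (which sidesteps the slight imprecision in calling all columns ``the sparsest'' when their supports may have different sizes), but the substance is identical.
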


%\djh{below, state that theorem \ref{thm:core} already implies identifiability via an exhaustive search, which is an interesting result in its own right.}\aj{done!}

Theorem~\ref{thm:core} is proved in Section~\ref{sec:core-proof}. As shown in the proof, columns of $A$ are in fact the sparsest vectors in the space $\Col(A \E[hh^\T] A^\T)$. This result already implies identifiability of $A$ via an exhaustive search, which is an interesting result in its own right. The following theorem provides some conditions under which the columns of $A$ can be identified by solving a set of convex optimization problems. Before stating the theorem, we need to establish some notations.

For $i\in[n]$, we define $\Neigh_i := \{j\in [k]: A_{ij} \neq 0 \}$ and $\Neigh^2_i := \{l\in [n]: A_{lj} \neq 0 \text{ for some } j\in \Neigh_i\}$. Similarly, for $j\in [k]$, define  $\Neigh_j := \{i\in [n]: A_{ij} \neq 0 \}$ and $\Neigh^2_j := \{l\in [k]: A_{il} \neq 0 \text{ for some } i\in \Neigh_j\}$. Thus, for a node $i$ (either a topic or a word), $\Neigh_i$ is the set of its neighbors and $\Neigh^2_i$ represents the set of nodes with distance exactly two from $i$. Therefore, if $i$ is a word node, $\Neigh^2_i$ is the set of its siblings and if $i$ is a topic word, $\Neigh^2_i$ is the set of topics with a common child.
 We further use superscript $c$ to denote the set complement.

\begin{theorem}[Recovery of the Topic-Word Matrix through $\ell_1$-minimzation]\label{thm:alg1}
Suppose that in each row of $A$, there is a gap between the maximum and the second maximum absolute values. For $i\in [n]$, let $\pi_i$ be a permutation such that $|a_{i,\pi_i(1)}| \ge |a_{i,\pi_i(2)}| \ge \dotsb \ge |a_{i,\pi_i(k)}|$, and $|a_{i,\pi_i(2)}| / |a_{i,\pi_i(1)}| \le 1 - \gamma_i$, for some $\gamma_i > 0$. Further suppose that $[k] \subseteq \{\pi_1(1), \dotsc, \pi_n(1)\}$. In words, each column contains at least one entry that has the maximum absolute value in its row. If the following conditions hold true for $i \in [n]$, then $\AlgI$ returns the columns of $A$ in canonical form.

\begin{itemize}
\item [(i)]
$\|A_{(\Neigh^2_{i})^c, (\Neigh_i)^c}\, v\|_1 >
\|A_{\Neigh^2_{i},(\Neigh_i)^c}\, v\|_1$
for all non-zero vectors $v \in \R^{|(\Neigh_i)^c|}$.

\item[(ii)]
$\|A_{(\Neigh_j)^c, \Neigh_i \backslash j} \, v\|_1 > \|A_{\Neigh_j, \Neigh_i \backslash j}\, v\|_1 + (1-\gamma) \|A_{\Neigh_j,j}\|_1 \|v\|_1$
for all $j \in \Neigh_i$ and
all non-zero vectors $v \in \R^{|\Neigh_i|-1}$.
\end{itemize}
\end{theorem}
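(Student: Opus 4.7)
The plan is to prove that \AlgI's $\ell_1$ program, which for each word index $i\in[n]$ takes the form
\[
\min_{v\in\R^k}\|Av\|_1 \quad \text{subject to}\quad (Av)_i=1,
\]
has the unique optimizer $v^\ast = e_{j_i^\ast}/a_{i,j_i^\ast}$, where $j_i^\ast := \pi_i(1)$. By the hypothesis $[k] \subseteq \{\pi_1(1),\dotsc,\pi_n(1)\}$, every column of $A$ is captured as $A v^\ast = A_{:,j_i^\ast}/a_{i,j_i^\ast}$ for some $i$, so after solving the $n$ programs, normalizing each optimum to unit Euclidean norm, and de-duplicating, one recovers the columns of $A$ in canonical form (up to the inherent sign ambiguity). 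Thus the entire claim reduces to establishing strict optimality of $v^\ast$.

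Fix $i$, write $j:=j_i^\ast$, and consider a feasible perturbation $v^\ast+\delta$ with $(A\delta)_i = 0$ and $\delta\neq 0$. Split $\delta$ along the topic coordinates into $\delta_{\Neigh_i}$ and $\delta_{(\Neigh_i)^c}$; since $a_{i,j'}=0$ for $j'\notin\Neigh_i$, the linear constraint $(A\delta)_i = 0$ involves only $\delta_{\Neigh_i}$. On the word rows indexed by $(\Neigh^2_i)^c$, both $Av^\ast$ and $A\delta_{\Neigh_i}$ vanish (each column $A_{:,j'}$ with $j'\in\Neigh_i$ has row support $\Neigh_{j'}\subseteq\Neigh^2_i$), so the triangle inequality gives
\[
\|A(v^\ast+\delta)\|_1 \ge \|A(v^\ast+\delta_{\Neigh_i})\|_1 - \|[A\delta_{(\Neigh_i)^c}]_{\Neigh^2_i}\|_1 + \|[A\delta_{(\Neigh_i)^c}]_{(\Neigh^2_i)^c}\|_1.
\]
When $\delta_{(\Neigh_i)^c}\neq 0$, Condition~(i) applied with $v=\delta_{(\Neigh_i)^c}$ makes the last two norms combine strictly positively, and it suffices to show $\|A(v^\ast+\delta_{\Neigh_i})\|_1 \ge \|Av^\ast\|_1$ for any (possibly zero) $\delta_{\Neigh_i}$ satisfying $(A\delta_{\Neigh_i})_i=0$.

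So assume $\delta$ is supported on $\Neigh_i$, and set $v:=\delta|_{\Neigh_i\setminus j}\in\R^{|\Neigh_i|-1}$. The constraint $(A\delta)_i=0$ forces $\delta(j) = -a_{i,j}^{-1}\sum_{j'\in\Neigh_i\setminus j} a_{i,j'} v_{j'}$, and the row-$i$ gap assumption $|a_{i,j'}/a_{i,j}|\le 1-\gamma_i$ for $j'\neq j$ yields $|a_{i,j}\delta(j)| \le (1-\gamma_i)|a_{i,j}|\|v\|_1$. Writing
\[
A(v^\ast+\delta) \;=\; \bigl(a_{i,j}^{-1}+\delta(j)\bigr)A_{:,j} + A_{:,\Neigh_i\setminus j}\,v
\]
and splitting the $\ell_1$ norm along rows $\Neigh_j$ and $(\Neigh_j)^c$, the triangle inequality together with Condition~(ii) produces
\[
\|A(v^\ast+\delta)\|_1 \;>\; |a_{i,j}^{-1}+\delta(j)|\,\|A_{\Neigh_j,j}\|_1 + (1-\gamma_i)\|A_{\Neigh_j,j}\|_1\|v\|_1 \;\ge\; |a_{i,j}|^{-1}\|A_{\Neigh_j,j}\|_1 \;=\; \|Av^\ast\|_1,
\]
where the second inequality follows by multiplying through by $|a_{i,j}|$ and applying the reverse triangle inequality to $|1+a_{i,j}\delta(j)|$ together with the bound on $|a_{i,j}\delta(j)|$. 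Since $v=0$ forces $\delta=0$ in this case, this establishes strict optimality and, combined with the reduction above, completes the proof.

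The main obstacle is the bookkeeping in this third step: after Conditions~(i)--(ii) are invoked, one must verify that the bound $|a_{i,j}\delta(j)|\le(1-\gamma_i)|a_{i,j}|\|v\|_1$ combines with the strict inequality in~(ii) to cover the worst case where $a_{i,j}\delta(j)$ is negative (making $|a_{i,j}^{-1}+\delta(j)|$ small). Confirming that the row-$i$ gap $\gamma_i$ supplies exactly the right amount of slack is the heart of the argument, and explains why Condition~(ii) carries an additive $(1-\gamma_i)\|A_{\Neigh_j,j}\|_1\|v\|_1$ term rather than being a pure nullspace-style inequality.
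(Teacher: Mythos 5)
Your proposal is correct and follows essentially the same route as the paper's proof: the same reduction of the $\ell_1$ program over $\Col(\Pairs^{1/2})=\Col(A)$ to $\min\|Az\|_1$ subject to $(Az)_i=1$, the same two-stage elimination (Condition~(i) to kill the mass on $(\Neigh_i)^c$ via the $\Neigh_i^2$ row split, then Condition~(ii) together with the row-$i$ gap $\gamma_i$ to kill the mass on $\Neigh_i\setminus\{\pi_i(1)\}$ via the $\Neigh_j$ row split), and the same final assembly using $[k]\subseteq\{\pi_1(1),\dotsc,\pi_n(1)\}$ and normalization. The bookkeeping you flag as the crux — that $|1+a_{i,j}\delta(j)|\ge 1-(1-\gamma_i)|a_{i,j}|\|v\|_1$ exactly cancels the additive $(1-\gamma_i)\|A_{\Neigh_j,j}\|_1\|v\|_1$ slack in Condition~(ii) — is precisely the computation in the paper's Claim on the support being $\{\pi_i(1)\}$, so no gap remains.
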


\begin{algorithm}
\caption*{$\AlgI$: Learning the topic-word matrix form pairwise correlations ($\Pairs$).}
%second order moment $\Sigma$.}
\begin{algorithmic}[1]

%\REQUIRE Second order moment of the observed variables $\Sigma$.
\REQUIRE Pairwise correlation of the words ($\Pairs$).

\ENSURE Columns of $A$ up to permutation.

%\STATE Find a partition $\Part$ of $[n]$ such that $|\Part| = 3$ and $\rank(\Sigma_{I,J}) = k$ for distinct $I,J \in \Part$.

%\STATE Let $L$ be the low-rank part returned by $\Subroutine(\Sigma, \Part)$.

\FOR{each $i\in [n]$}

\STATE Solve the optimization problem\footnote{In this paper, when $A= B B^\top$, we use the notation $B=A^{1/2}$ which differs from the standard definition of matrix square root.}
\[ \min_{w} \,\, \|\Pairs^{1/2} w\|_1 \quad \quad \text{subject to } (e_i^\T
\Pairs^{1/2}) w = 1 . \]

\STATE Set $s_i = \Pairs^{1/2} w$, and let $\mathcal{S} = \{s_1,\dotsc, s_n\}$.

\ENDFOR

\FOR{each $j = 1,\dotsc,k$}

\REPEAT

\STATE Let $v_j$ be an arbitrary element in $\mathcal{S}$.

\STATE Set $\mathcal{S} = \mathcal{S} \backslash\{v_j\}$.

\UNTIL{$\rank([v_1 | \dotsb | v_j]) = j$}

\ENDFOR

%\STATE Set $\tilde{A} = [v_1|\dotsb|v_k]$.

%\STATE Let $\tilde{B}$ be a left inverse for $\tilde{A}$, \emph{i.e.}, $\tilde{B} \tilde{A} = I_{k\times k}$.

\RETURN $\widehat{A} = \Big[\frac{v_1}{\|v_1\|}\Big|\dotsb \Big|\frac{v_k}{\|v_k\|}\Big]$.

\end{algorithmic}
\end{algorithm}

%%%%%%%%%%%%%%%%%%%%%%%%
Theorem~\ref{thm:alg1} is proved in Section~\ref{subsec:alg1-proof}. $\AlgI$ is essentially the {\sf ER-SpUD} presented in~\cite{Spielman-12} for exact recovery of sparsely-used dictionaries, but the technical result and application in Theorem~\ref{thm:alg1} are novel.

%\aacommentnew{add a brief comment about computational complexity}
$\AlgI$ involves solving $n$ optimization problems and as the number of words becomes large, this requires a fast
method to solve $\ell_1$ minimization. Traditionally, the $\ell_1$ minimization can be formulated as a linear programming
(LP) problem. In particular, each of the $\ell_1$ minimizations in $\AlgI$ can be written as an LP
with $2(n-1)$ inequality constraints and one equality constraint. However, the computational complexity of such a general-purpose formulation is often too high for large scale
applications. Alternatively, one can use approximate methods which are significantly faster. There are several relevant algorithms with this theme, such as gradient projection~\cite{Figueiredo-gradient,Kim-gradient}, iterative shrinkage-thresholding~\cite{Daubechies-IST}, and proximal gradient (Nestrov's method)~\cite{Nestrov-1,Nestrov-2}.

\section{Bayesian networks for modeling topic distributions}\label{subsec:BN}

According to Theorem~\ref{thm:core}, we can learn the topic-word matrix
$A$ without any assumption on the dependence relationships
among the  hidden topics.
(We only need the non-degeneracy assumption discussed in
Condition~\ref{cond:nondegeneracy} which requires the hidden variables to be
linearly independent with probability one.)

%Note that the coefficient matrix $A$ does not completely specify the distribution, as the $h_i$'s are not necessarily statistically independent, and we can hope to learn the correlation structure among the $h_i$'s.

Bayesian networks provide a natural framework for modeling topic dependencies, and we employ them here for modeling topic distributions.    For these families, we prove identifiability and learning of the entire model, including the topic relationships and the topic-word matrix.

Bayesian networks, also known as belief networks, incorporate a set of causal and conditional independence  through  directed acyclic graphs (DAG)~\cite{Pearl:book}. They have  widespread applicability in artificial intelligence~\cite{Koller:SRL07,Boutilier:UAI96, Koller:UAI97, Choi-10}, in the social sciences~\cite{Bagozzi-80,Kohn-82,Wheaton-78,Bollen-89, Pearl-00, Pearl-soc-98}, and as structural equation models in economics~\cite{Awokuse-03, Haavelmo-43,Zellner-71,Bollen-89, Pearl-00, Spirtes-05}.

We define a \emph{DAG model} as a pair $(\G, \P_{\theta})$, where
$\P_{\theta}$ is a joint probability distribution, parameterized by
$\theta$, on $k$ variables $h := (h(1),\dotsc,h(k))$ that is Markov with
respect to a DAG $\G = (\H,\edges)$ with $\H = \{1,\dotsc,k\}$~\cite{Lauritzen-96}.
More specifically, the joint probability $\P_{\theta}(h)$ factors as
\begin{eqnarray}
\label{eqn:DAG-model}
\P_{\theta}(h) = \prod_{i=1}^k \P_{\theta}(h(i)| h_{\PA_i}),
\end{eqnarray}
where $\PA_i:= \{ j \in \V : (j,i) \in \edges \}$ denotes the set of parents of node $i$ in $\G$.

\aj{The learning task involving DAG models can be described as: \emph{Given i.i.d.\ samples generated from the joint
distribution $\P_{\theta}$ over $x_S$ for some $S \subseteq \V$, recover
(some part of) the graph structure $\G$ and estimate the model parameter
$\theta$.}}

%
%From Eq.~\eqref{eqn:obs_eqn}, it is immediate to see that
%\begin{align*}
%x_i - \E[x_i] &= \sum_{j \in \PA_i} a_{ij} (h_j - \E[h_j]) + (\veps_i - \E[\veps_i]), \quad \,\,\, \text{for } i \in [n].
%%h_j - \E[h_j] &= \sum_{\ell \in \PA_j} \lambda_{j\ell} (h_k -\E[h_k]) + (\eta_j - \E[\eta_j]),\quad \text{for }j=1,\cdots, k.
%\end{align*}
%%
%Therefore,

%\djh{can you clarify the linear independence assumption?
%e.g., with probability 1, $\sum_i \alpha_i h_i = 0 \Rightarrow \alpha_i = 0
%\forall i\in[k]$?  also, can you make this explicitly part of the condition?}
%\aj{done!}

\aj{\subsubsection{DAGs with effective depth one}

We consider the subclass of DAGs are those with effective depth one.
\begin{definition}
The \emph{effective depth} of a DAG model with hidden nodes is the maximum graph distance between a hidden node and its closest observed node.
\end{definition}

In particular, in a DAG with effective depth one every hidden node has at least one observed neighbor.
Assume further that the hidden variables obey the linear model}

We consider a subclass of DAG models for the topics in which the topics obey the linear relations
\begin{eqnarray}\label{eqn:hid-2}
h(j) = \sum_{\ell\in \PA_j} \lambda_{j\ell} h(\ell) + \eta(j)\,, \quad \text{for }
j \in [k] \,,
\end{eqnarray}
where $\eta(j)$ represents the noise variable at topic $j$. We further assume that the noise variables $\eta(j)$ are independent.%\aacommentnew{note that we want $\eta\otimes \eta \otimes \eta$ to be a diagonal tensor, so we require three way independence.. but just to make it simple, i have stated it as independence}

%\aacommentnew{give a little more insight to what exactly we are limiting here. we require $\eta_j$ are independent with one another. thus we cannot have $\eta_j$ to be heteroskedastic, i.e., they cannot depend on ..  in particular this rules out discrete models with $h$. This is a subtle point. }
%\ajnew{I explained that the noise terms $\eta(j)$ are uncorrelated, but they can have any variances so we can assume they to be heteroscedastic. (Maybe you mean something else with the term "heteroscedasticity"!).  }
%\aacommentnew{what i mean by heteroskedastic is that the noise can vary with each sample. for instance, we cannot incorporate $h$ to be discrete in our model, but this will complicate the discussion. so let us leave it as it is. But we should give an example. for instance u can use the noise model u use for simulations (i recall u stating that this noise model has been argued to be reasonable for real applications, can u figure out the citation for that?)}

Let $\Lambda \in \R^{k \times k}$ be the matrix with $\lambda_{ij}$ at the $(i,j)$ entry if $j \in \PA_i$ and zero everywhere else.
Without loss of generality, we
assume that hidden (topic) variables $h(j)$, the observed (word) variables $x(i)$ and the
noise terms $\veps(i), \eta(j)$ are all zero mean. We also denote the
variances of $\veps(i)$ and $\eta(j)$ by $\sigma^2_{\veps(i)}$ and
$\sigma^2_{\eta(j)}$, respectively. Let $\mu_{\veps(i)}$ and $\mu_{\eta(j)}$
respectively denote the third moment of $\veps(i)$ and $\eta(j)$, \emph{i.e.}, $\mu_{\veps(i)}:= \E[\veps(i)^3]$ and $\mu_{\eta(j)}:= \E[\eta(j)^3]$. Define the skewness of $\eta(j)$ as:
\begin{eqnarray}
\gamma_{\eta(j)}:= \frac{\mu_{\eta(j)}}{\sigma^3_{\eta(j)}}\,.
\end{eqnarray}

Finally, define the following moments of the observed variables:
\begin{equation} \label{eq:moments}
\begin{aligned}
\Pairs &:= \E[x_1 \otimes x_2],\\
\Triples &:= \E[x_1 \otimes x_2 \otimes x_3]\,.
\end{aligned}
\end{equation}
It is convenient to consider the projection of $\Triples$ to a matrix as follows:
\begin{eqnarray*}
\Triples(\zeta):= \E[ x_1\otimes x_2 \, \langle \zeta, x_3 \rangle]\,,
\end{eqnarray*}
where $\langle \cdot,\cdot \rangle$ denotes the standard inner product.

\begin{theorem}\label{thm:eff-depth}
Consider a DAG model which satisfies the model conditions described in Section~\ref{subsec:model} and the hidden variables are related through linear equations~\eqref{eqn:hid-2}. If the noise variables $\eta(j)$ are independent and have non-zero skewness for $j \in [k]$, then the DAG model is identifiable from $\Pairs$ and $\Triples(\zeta)$, for an appropriate choice of $\zeta$. Furthermore, under the assumptions of Theorem~\ref{thm:alg1}, $\AlgII$ returns matrices $A$ and $\Lambda$ up to a permutation of hidden nodes.
\end{theorem}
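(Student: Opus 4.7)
The plan is to reduce the latent Bayesian network to an independent components analysis (ICA) problem with the new mixing matrix $B:=A(I-\Lambda)^{-1}$, apply the excess correlation analysis (ECA) of~\cite{Anima-SVD-12} to recover $B$ from $\Pairs$ and $\Triples(\zeta)$, and then peel off $A$ via Theorems~\ref{thm:core} and~\ref{thm:alg1}, finally reading off $\Lambda$ from $A$ and $B$. This matches the high-level picture of Fig.~\ref{fig:high-level}.

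Because $\Lambda$ is indexed by the edges of a DAG on $[k]$, a topological ordering makes it strictly lower-triangular, so $I-\Lambda$ is invertible and $h=(I-\Lambda)^{-1}\eta$. Substituting into $\E[x_t\mid h]=Ah$ gives the ICA representation $\E[x_t\mid\eta]=B\eta$ with independent coordinates, and $B$ inherits full column rank from $A$ (Condition~\ref{cond:nondegeneracy}) and the invertibility of $I-\Lambda$. Using the conditional independence of $x_1,x_2,x_3$ given $h$, a direct computation yields
\begin{align*}
\Pairs &= B\,\diag(\sigma^2_{\eta(1)},\dots,\sigma^2_{\eta(k)})\,B^\T,\\
\Triples(\zeta) &= \sum_{j=1}^k \mu_{\eta(j)}\,\langle \zeta,B_{[n],j}\rangle\, B_{[n],j}\otimes B_{[n],j}.
\end{align*}
ECA now applies verbatim: whiten $\Pairs$ by an SVD to obtain $W$ for which $W^\T B\,\diag(\sigma_{\eta(1)},\dots,\sigma_{\eta(k)})$ is orthogonal, and diagonalize the whitened slice $W^\T\Triples(\zeta)W$ by a second SVD. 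The skewness assumption $\gamma_{\eta(j)}\neq 0$ forces each eigenvalue of that slice to be non-zero, and for generic $\zeta$ the $k$ eigenvalues are distinct, so the un-whitened singular vectors return the columns of $B$ up to sign, scale, and permutation.

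Write the ECA output as $\widetilde B=BP_2D_2$; since $(I-\Lambda)^{-1}$ is invertible, $\Col(\widetilde B)=\Col(A)$. Theorem~\ref{thm:core} says that under Conditions~\ref{cond:nondegeneracy}--\ref{cond:parameter} the columns of $A$ are the sparsest vectors in this column span, which gives identifiability. Under the additional hypotheses of Theorem~\ref{thm:alg1}, the $\ell_1$ subroutine of $\AlgI$ applied to $\widetilde B$ (whose column span equalling $\Col(A)$ is the only property of $\Pairs^{1/2}$ the subroutine actually uses) returns the columns of $A$ in canonical form, say $\widetilde A=AP_1D_1$. From the identity $\widetilde B=A(I-\Lambda)^{-1}P_2D_2$ and the full column rank of $A$, I would compute $\widetilde A^\dagger \widetilde B=D_1^{-1}P_1^\T(I-\Lambda)^{-1}P_2D_2$; once $P_1$ and $P_2$ are aligned (see below), the constraint that $(I-\Lambda)^{-1}$ has $1$'s on its diagonal---a consequence of the Neumann series $\sum_{m\ge 0}\Lambda^m$ and the zero diagonal of the strictly triangular $\Lambda$---pins down $D_1,D_2$ up to the sign ambiguity already present in canonical form, and $\Lambda$ is recovered by inverting.

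The main obstacle will be carrying the three sources of ambiguity coherently through the pipeline: ECA returns $B$ up to $(P_2,D_2)$, the $\ell_1$ stage returns $A$ up to $(P_1,D_1)$, and the theorem asks for a single common permutation of hidden nodes. I would perform the alignment by matching each recovered column of $A$ with the column of $\widetilde B$ in which it appears most strongly in the basis $\widetilde A$; this is well-defined because $(I-\Lambda)^{-1}e_j$ has a unique dominant coordinate corresponding to $j$ itself (diagonal entry $1$, off-diagonal entries arising only from the strictly lower part of $\Lambda$). A secondary technical point is verifying that the rescalings of $\eta$ needed to normalize $A$ into canonical form preserve independence and non-zero skewness, which is immediate since both properties are invariant under coordinate-wise rescaling.
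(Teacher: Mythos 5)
Your pipeline (reduce to ICA with mixing matrix $A(I-\Lambda)^{-1}$, recover it via ECA from $\Pairs$ and $\Triples(\zeta)$, recover $A$ via the $\ell_1$ step, then extract $\Lambda$) is exactly the paper's route, and the first two stages are carried out correctly. The gap is in your final alignment step. You justify matching the recovered columns of $A$ to the columns of $\widetilde B$ by asserting that $(I-\Lambda)^{-1}e_j$ ``has a unique dominant coordinate corresponding to $j$ itself'' because its $j$-th entry is $1$. That is false in general: the off-diagonal entries of $(I-\Lambda)^{-1}=\sum_{m\ge 0}\Lambda^m$ are sums of products of the $\lambda_{i\ell}$, which are unrestricted and can easily exceed $1$ in absolute value. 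On top of that, the two unknown diagonal rescalings $D_1$ (from putting $A$ in canonical form) and $D_2$ (the $\sigma_{\eta(j)}$ factors from ECA) distort relative magnitudes column by column, so ``appears most strongly'' is not a well-defined invariant of the model. As stated, your matching rule can mis-pair columns, and then the subsequent inversion produces the wrong $\Lambda$.

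The paper resolves this without any dominance assumption by exploiting triangularity rather than magnitude. After forming $C:=\widehat B S=\Pi_2^{-1}(I-\Lambda)^{-1}\diag(\sigma_{\eta(1)},\dotsc,\sigma_{\eta(k)})\Pi_1$, one permutes rows and columns of $C$ to make it lower triangular; such a permutation exists because $\Lambda$ is strictly lower triangular in a topological ordering, and an induction on the rows of $C$ with exactly one non-zero entry (the source nodes of the DAG) shows that \emph{every} permutation pair achieving lower-triangularity corresponds to a topological ordering of the same DAG. The resulting $\tilde C$ equals $(I-\Lambda)^{-1}\diag(\sigma_{\eta(1)},\dotsc,\sigma_{\eta(k)})$ for that ordering, and the formula $\Lambda=I-\diag(\tilde C)\tilde C^{-1}$ cancels the unknown diagonal scaling automatically (since $(I-\Lambda)^{-1}$ has unit diagonal, $\diag(\tilde C)=\diag(\sigma_{\eta(1)},\dotsc,\sigma_{\eta(k)})$). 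Replacing your magnitude-based matching with this triangularization argument closes the gap; the rest of your proof stands.
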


Theorem~\ref{thm:eff-depth} is proven in Section~\ref{subsec:eff-depth-proof}.

Notice that the only limitations on the noise variables $\eta(j)$ are that they are independent\footnote{We only require
pairwise and triple-wise independence.}, and have non-zero skewness. Some common examples of non-zero skewness
distributions are exponential, chi-squared and Poisson. Note that different topics may have different noise distributions.

%\aacommentnew{added remark below}

\begin{remark}[Special Cases]A special case of the above result is when the DAG is empty, i.e. $\Lambda=0$, and the topics $h(1), \ldots, h(k)$ are independent. This is popularly known as the independent components model (ICA), and similar spectral techniques have been proposed before for learning ICA~\cite{ICAbook}. Similarly, the ECA approach proposed above is also applicable for learning latent Dirichlet allocation (LDA), using suitably adjusted second and third order moments~\cite{Anima-SVD-12}. Note that for these special cases, we do not need to impose any constraints on the topic-word matrix $A$ (other than non-degeneracy), since we can directly learn $A$ and the topic distribution through ECA.
\end{remark}

Another immediate application of the technique used in the proof of Theorem~\ref{thm:eff-depth} is in learning fully-observed linear Bayesian networks.
\begin{remark}[Learning fully-observed BN's]
 Consider an \emph{arbitrary} fully-observed linear DAG:
\begin{eqnarray}
x(i) = \sum_{j \in \PA_i} \lambda_{ij} x(j) + \eta(i), \quad \text{ for }i\in [n],
\end{eqnarray}
 and suppose that the noise variables $\eta(i)$ have non-zero skewness. Then, applying the same argument as in the proof of Theorem~\ref{thm:eff-depth}, we can learn the matrix $(I-\Lambda)^{-1}$ (and hence $\Lambda$) from the second and third order moments (We have $A=I$ here).
\end{remark}
%%%%%%%%%%%%%%%%%%%%%%%%%%%%%%%%%%

\begin{algorithm}
\caption*{$\AlgII$: Learning topic models with correlated topics.}
\begin{algorithmic}[1]
\renewcommand{\algorithmicwhile}{}
\renewcommand{\algorithmicdo}{}

%\REQUIRE Vector $\theta \in \R^k$, observable moments $\Sigma$ and $\Triples$
%as defined in Eq.~\eqref{eq:moments}.

\REQUIRE Observable moments $\Pairs$ and $\Triples$
as defined in Eq.~\eqref{eq:moments}.

\ENSURE Columns of $A$, matrix $\Lambda$ (in a topological ordering).

%-----------------------------------------------------
%\WHILE{\textbf{Part 1: Decomposition of $\Sigma$.}}

%\STATE Find a partition $\Part$ of $[n]$ such that $|\Part| = 3$ and $\rank(\Sigma_{I,J}) = k$ for distinct $I,J \in \Part$.

%\STATE Let $\Pairs$ and $D_{\Sigma}$ be the low-rank and
%the diagonal parts returned by $\Subroutine(\Sigma,\Part)$.

%\ENDWHILE

\WHILE{\textbf{Part 1: ECA.}}

\STATE Find a matrix $U \in \R^{n \times k}$ such that $\Col(U)$ =
$\Col(\Pairs)$.
%\COMMENT{Dimensionality reduction}

\STATE Find $V \in \R^{k \times k}$ such that $V^\T(U^\T \Pairs U) V
= I_{k \times k}$.
Set $W = UV$.
%\COMMENT{Whitening}

%\STATE Find a partition $\Part$ of $[n]$ such that $|\Part| = 3$ and
%$\rank(\Triples(W\theta)_{I,J}) = k$ for distinct $I,J \in \Part$.
%Let $\Triples(W\theta)$ and $D_{\Triples}$ be the low-rank and the diagonal parts
%returned by $\Subroutine(\Triples(W\theta),\Part)$.
%\COMMENT{Decomposition of $\Triples(W\theta)$.}

\STATE Let $\theta\in \R^k$ be chosen uniformly at random over the unit sphere.

\STATE Let $\Omega$ be the set of (left) singular vectors, with unique singular values, of $W^\T \Triples(W\theta) W$.
%\COMMENT{SVD}

\STATE Let $S \in \R^{n\times k}$ be a matrix with columns $\{(W^+)^\T
\omega: \omega\in \Omega\}$, where $W^+ = (W^\T W)^{-1} W^\T$.
%\COMMENT{Reconstruction}

\ENDWHILE

\WHILE{\textbf{Part 2: Finding $A$ and $\Lambda$.}}

\STATE Let $\widehat{A} = \AlgI(\Pairs)$.

\STATE Let $\widehat{B}$ be a left inverse of $\widehat{A}$. Let $C =
\widehat{B} S$.

\STATE Reorder the rows and columns of $C$ to make it lower triangular.
Call it $\tilde{C}$.

\ENDWHILE

\RETURN Columns of $\widehat{A}$ and $\widehat{\Lambda} = I -
\diag(\tilde{C})\, \tilde{C}^{-1}$.

\end{algorithmic}
\end{algorithm}

For sake of simplicity, $\AlgII$ is presented using the ECA method, which
uses a single random direction $\theta$ and obtaining singular vectors of
$W^\T \Triples(W\theta) W$.
A more robust alternative to this, as described in~\cite{AGHKT}, is to use
the following power iteration to obtain the singular vectors
$\{v_1,\dotsc,v_k\}$; we use this variant in the simulations described in
Section~\ref{sec:simulation}.

% \vspace{0.5cm}
%
{\small
\begin{center}{
\fbox {
    \parbox[c][][c]{3.3in}{
    $\{v_1,\dotsc,v_k\} \leftarrow$ random orthonormal basis for $\R^k$.
    Repeat:
    \begin{enumerate}
    \item For $i=1, 2, \dotsc, k:$
    \begin{itemize}
    % \item Let $L_{\Psi}$ be the low rank part of $\Psi(Wv_i)$.
    \item $v_i \leftarrow W^\T \Triples(Wv_i) Wv_i$.
    \end{itemize}
    \item Orthonormalize $\{v_1,\dotsc,v_k\}$.
    \end{enumerate}
    }
}
}\end{center}
}

In principle, we can extend the above framework, combining spectral and $\ell_1$ approaches, for learning other models on $h$. For instance, when the third order moments of $h$ are sufficient statistics (e.g. when $h$ is a graphical model with treewidth two), it suffices to learn the third order moments of $h$, i.e. $\Ebb[h \otimes h \otimes h]$, where $\otimes $ denotes the outer product of vectors. This can be accomplished as follows:     first employ $\ell_1$ based approach to learn the topic-word matrix $A$, then consider the third order observed moments tensor $T:=\Ebb[x_1 \otimes x_2 \otimes x_3]$. We have that
\[T(A^{\dagger}, A^{\dagger}, A^{\dagger}) =\Ebb[h \otimes h \otimes h],\] where $T(A^{\dagger}, A^{\dagger}, A^{\dagger})$ denotes the multi-linear map of $T$ under $A^\dagger$. 
For details on multi-linear transformation of tensors, see~\cite{AGHKT}.

%
%------------------------------------------------------

\subsection{Learning using second-order moments}

In Theorem~\ref{thm:eff-depth}, we prove identifiability and learning of hidden DAGs from   second and third order observed moments.  A natural question is what can be done if only the second order moment is provided. The following remark states that if an oracle gives a topological ordering of the DAG structure then the model can be learned only through the second order moment and there is no need to the third order moment.

\begin{remark}\label{rem:eff-second}
A topological ordering of a DAG is a labeling of the nodes such that, for every directed edge $(j,i)$, we have $j < i$. It is a well known result in graph theory that a directed graph is a DAG if and only if it admits a topological ordering. Now, consider a DAG model with a full column rank coefficient matrix $A$ between the observed and hidden nodes. Further, suppose that an oracle provides us with a topological ordering of the induced DAG on the hidden nodes, \emph{i.e.}, for any labeling of the hidden nodes the oracle returns a permutation of the labels which is faithful to a topological ordering of the DAG. Then, the DAG model (matrices $A$ and $\Lambda$) are identifiable from only the second order moment $\Pairs$.
\end{remark}

Remark~\ref{rem:eff-second} is proved in Appendix~\ref{app:eff-second}.

\section{Extension to general  linear  (single view) models}\label{sec:singleview}

We have so far described a framework for identifiability and learning of topic models under expansion conditions. In fact, the developed framework holds for any  linear {\em multi-view} model. Recall that if $x_1, x_2, \ldots$ are the words in the document, and $h$ is the topic mixture variable, we have linearity $ \Ebb[x|h ]= A h,$ and multiple (exchangeable and non-degenerate) views corresponding to different words in the document. In particular, the cross-moments between two different words $x_1$ and $x_2$, given $h$, is $\Ebb[x_1 x_2^\top|h ] = A h h^\top A^\top.$

We now extend the results to a general framework where, unlike topic models, only a single observed view is available, and further assumptions are needed to learn in this setting.

Consider an observed random vector $x \in \R^n$ and a hidden random vector $h \in \R^k$. Let $\G= (\vobs \cup \vhid,\edges)$ denote the bipartite graph  with observed nodes $\vobs =
\{x(1),\dotsc,x(n)\}$ and hidden nodes $\vhid = \{h(1),\dotsc,h(k)\}$.
Let $\veps(i)$ be the noise variable associated with $x(i)$, for $i=1,\dotsc,n$
and denote the variance of $\veps(i)$ by $\sigma_{\veps(i)}^2>0$.
%Define the diagonal matrix $D_{\veps} \in \R^{n \times n}$  as
%$D_{\veps} = \diag(\sigma_{\veps_1}^2,\dotsc,\sigma_{\veps_n}^2)$. %
Throughout  we use the notation $h := (h(1),\dotsc,h(k))$, $x := (x(1),\dotsc,x(n))$ and $\veps := (\veps(1), \dotsc, \veps(n))$.
The noise terms $\veps$ are assumed to be pairwise uncorrelated.
The class of models considered are specified by the following assumptions.

\begin{condition}[Linear model] \label{cond:linear}
The observed and hidden variables obey the model\footnote{Without loss of generality, assume that $x(i)$,
$\veps(i)$, $h(j)$ are all zero mean.}
\begin{align}
x(i) &= \sum_{j \in \PA_i} a_{ij} h(j) + \veps(i), \quad \text{for } i \in [n]\,, \label{eqn:obs_eqn}
%h_j &= \sum_{\ell \in \PA_j} \lambda_{j\ell} h_{\ell} + \eta_j,\quad \text{for }j=1,\cdots, k.\label{eqn:hid_eqn}
\end{align}
where $\{\veps(i)\}_{i\in[n]}$ are pairwise uncorrelated and are independent from $\{h(j)\}_{j\in[k]}$.
Furthermore, the  matrix $A:=[a_{i,j}]\in \R^{n \times k}$ has full column rank and the hidden variables are linearly independent, \emph{i.e.}, with probability one, if $\sum_{i\in [k]} \alpha_i h(i) = 0$, then $\alpha_i = 0$, for all $i \in [k]$.
\end{condition}

Notice that the structure of $\G$ is defined by the non-zero coefficients in Eq.~\eqref{eqn:obs_eqn}. Therefore, there is no edge among the observed nodes. We define $A \in \R^{n \times k}$ by letting the $(i,j)$ entry be $a_{ij}$ if $ j \in \PA_i$ and zero otherwise.
%The matrix $\Lambda \in \R^{k\times k}$ is defined analogously.
We refer to matrix $A$ as the \emph{coefficient matrix}.

The above setting is prevalent in a number of applications such as the blind deconvolution of sound and images~\cite{levin2009understanding}. The independent component analysis (ICA) is a special case of the above setting, where the sources $h_i$ are assumed to be independent. In contrast, in our setting, we allow for arbitrary distribution on $h$, and assume expansion (and rank) conditions on the coefficient matrix $A$.

Recall that in case of the topic models, $A$ corresponds to the topic-word matrix. Moreover, in the topic model setting, no assumption is made on the noise variables $\veps$, since the presence of cross-moments (between different words) enables us to remove the dependence on $\veps$. However, in the single view case the second order observed moment $\Sigma : = \E[xx^\T]$
is given by
\begin{eqnarray*}
\Sigma = A \E[hh^\T] A^\T + \E[\veps \veps^\T]\,.
\end{eqnarray*}

\aj{\begin{remark}
The linear relationships described above can be thought of as linear structural equation models (SEM). In general, an SEM is defined by a collection of equations
\begin{eqnarray}\label{eqn:SEM}
z_i = f_i(z_{\PA_i}, \veps_i),
\end{eqnarray}
with $z_i$ be the variables associated to the nodes. Recently, there has been some progress on the identifiability problem of SEMs in the fully observed setting~\cite{Shimizu-06,Hoyer-09, Peters-11,Peters-12}. This paper can be viewed as a contribution to the problem of identifiability and learning SEMs with latent variables.
\end{remark}
}

We now discuss a rank condition on the coefficient matrix $A$, which allows us to remove the noise term $\E[\veps \veps^\T]$
from the second order moment $\Sigma$.

\begin{condition}[Rank condition] \label{cond:rank}
There exists a fixed partition $\Part$ of $[n]$ such that $|\Part| = 3$, and $A_I$ has full column rank for all $I \in \Part$.
\end{condition}

Since $\rank(A_I) = k$, for $I \in \Part$, we have as a consequence $n \ge |\Part|\, k = 3k$. Therefore, it essentially states that the number of hidden nodes should be at most one third of the observed ones. In most applications, we are looking for \emph{a few} number of hidden effects that can represent the statistical dependence relationships among the observed nodes.
Thus the rank condition is reasonable in these cases.

\subsection{Matrix decomposition method for denoising}
%%%%%%%%%%%%%%%%%%%%%%%%%%%%%%%%%%

We now show that under the rank assumption  in Condition~\ref{cond:rank},   we can extract the noise terms $\veps$ from the observed  moments through a matrix decomposition method.

 Find a partition $\Part$ of $[n]$, such that $|\Part| = 3$,
and $\rank(\Sigma_{I,J}) = k$ for all distinct $I,J \in \Part$. (Note that
$\rank(\Sigma_{I,J}) = \rank(A_I \E[hh^\T] A_J^\T)$ and by rank condition,
there exists such a partition $\Part$). We now show that the matrix decomposition procedure $\Subroutine(\Sigma, \Part)$ returns $A \E[hh^\T] A^\T$ and the diagonal matrix $\E[\veps \veps^\T]$.
\begin{lemma}\label{lem:block-diagonal}
Let $C = AB^\T + D$, with $A, B \in \R^{n \times k}$ and $D \in \R^{n\times
n}$ a diagonal matrix. Suppose that for a fixed partition $\Part$ of $[n]$,
with $|\Part| = 3$, all the submatrices $A_I$ and $B_I$ have full column
rank $k$, for all $I \in \Part$. Then, $\Subroutine(C)$ returns $AB^\T$ and $D$.
\end{lemma}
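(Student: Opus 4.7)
The plan is to show that the off-diagonal blocks of $C$ already coincide with those of $AB^\T$ (since $D$ is diagonal), and then to recover each diagonal block $A_I B_I^\T$ via a closed-form expression in the off-diagonal blocks that exploits the full column rank of the $A_I$ and $B_I$. Subtracting these reconstructed low-rank diagonal blocks from the corresponding diagonal blocks of $C$ then isolates $D$.

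Write $\Part = \{I,J,K\}$. First, because $D$ is diagonal, for any two distinct parts we have $D_{I,J}=0$, so $C_{I,J} = A_I B_J^\T$, and similarly for the other off-diagonal pairs. These blocks are therefore already in hand. The remaining task is to produce $A_I B_I^\T$ (and symmetrically the other two diagonal blocks) without using $C_{I,I}$, which is contaminated by $D_{I,I}$.

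The key identity I plan to verify is
\begin{equation*}
A_I B_I^\T \;=\; C_{I,K}\, C_{J,K}^{+}\, C_{J,I},
\end{equation*}
where $(\,\cdot\,)^{+}$ denotes the Moore--Penrose pseudoinverse. To prove it, I will substitute $C_{I,K} = A_I B_K^\T$, $C_{J,K} = A_J B_K^\T$, and $C_{J,I} = A_J B_I^\T$, and then use the factorization formula for the pseudoinverse of a full-rank product: since $A_J$ has full column rank $k$ and $B_K^\T$ has full row rank $k$, one has
\begin{equation*}
(A_J B_K^\T)^{+} \;=\; B_K(B_K^\T B_K)^{-1}(A_J^\T A_J)^{-1} A_J^\T.
\end{equation*}
Multiplying everything out, the factors $B_K^\T B_K(B_K^\T B_K)^{-1}$ and $(A_J^\T A_J)^{-1} A_J^\T A_J$ collapse to identities, leaving exactly $A_I B_I^\T$. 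Applying the same identity with the roles of $(I,J,K)$ cyclically permuted recovers $A_J B_J^\T$ and $A_K B_K^\T$, so the complete low-rank part $AB^\T$ is determined by $C$ and $\Part$. Finally, set $D = C - AB^\T$; this is automatically diagonal because the off-diagonal blocks of $C$ and $AB^\T$ coincide block-wise by the first step, and on each diagonal block the subtraction gives $D_{I,I}$.

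The only nontrivial step is the pseudoinverse identity above, and this is a routine linear-algebra computation once the rank hypotheses in Condition~\ref{cond:rank} are invoked; no probabilistic or asymptotic argument is needed. The proof is therefore essentially constructive and matches the behavior any reasonable implementation of $\Subroutine$ would carry out.
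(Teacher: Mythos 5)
Your proposal is correct and follows essentially the same route as the paper: recover the off-diagonal blocks of $AB^\T$ directly from $C$ (since $D$ is diagonal), reconstruct each diagonal block $A_IB_I^\T$ as a product of two off-diagonal blocks sandwiched around a rank-$k$ generalized inverse of a third, and let the full-column-rank hypotheses drive the telescoping cancellation. The only cosmetic difference is that you use the Moore--Penrose pseudoinverse $C_{J,K}^{+}$ where the paper (and the literal $\Subroutine$ pseudocode) uses the singular-vector form $V_J(U_K^\T C_{K,J}V_J)^{-1}U_K^\T$; to match the algorithm exactly you would just note that $U_K$ and $A_K$ (resp.\ $V_J$ and $B_J$) share the same column space, so $U_K^\T A_K$ and $V_J^\T B_J$ are invertible and the identical cancellation goes through.
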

The proof of Lemma~\ref{lem:block-diagonal} is deferred to Appendix~\ref{app:block-diagonal}.

\begin{algorithm}
\caption*{$\Subroutine$: Decomposition of a matrix into its low-rank and
diagonal parts.}
\begin{algorithmic}[1]

\REQUIRE Matrix $C = AB^\T + D$, with $A, B \in \R^{n\times k}$, $D \in
\R^{n \times n}$ diagonal, and partition $\Part$ of $[n]$.

\ENSURE Diagonal part $D$ and low-rank part $L = AB^\T$.

\FOR{each $I \in \Part$}

\STATE Choose distinct $J,K \in \Part \backslash \{I\}$.

\STATE Let $U_I\in \R^{|I| \times k}$ be the matrix of left singular
vectors of $C_{I,J}$.

\STATE Let $V_J \in \R^{|J| \times k}$ be the matrix of right singular
vectors of $C_{I,J}$.

\STATE Let $U_K\in \R^{|K| \times k}$ be the matrix of left singular
vectors of $C_{K,J}$.

\STATE Set $A_IB_I^\T = C_{I,J} V_J (U_K^\T C_{K,J} V_J)^{-1} U_K^\T
C_{K,I}$.

\STATE Set $D_{I,I} = C_{I,I} - A_{I}B^\T_{I}$.

\ENDFOR

\RETURN $D$ and $L = C-D$.
\end{algorithmic}
\end{algorithm}

\subsubsection{Remark on finding the partition $\Part$}\label{subsec:partition}
The rank condition for matrix $A$ in Condition~\ref{cond:rank} ensures the existence of a partition $\Part$ of $[n]$, such that, $|\Part|=3$ and $A_I \in \R^{n \times k}$ has full column rank for all $I \in \Part$. However, we are not provided with such a partition. 
We now show that under an \emph{incoherence} assumption about $A$, a random partitioning of its rows into three groups has the desired property, with fixed positive probability.
\begin{definition}
Let $A = USV^\T$ be a thin singular value decomposition of $A$, where $U\in \R^{n\times k}$has orthonormal columns, $S = \diag(\sigma_1(A), \dotsc, \sigma_k(A))$, and $V \in \R^{k \times k}$ is orthogonal. Define the \emph{incoherence number} of $A$ as:
\begin{eqnarray}
c_A : = \max_{j \in [n]} \bigg\{\frac{n}{k} \|U^\T e_j\|_2^2\bigg\}.
\end{eqnarray}
\end{definition}

\begin{lemma} \label{lemma:partitioning}
Fix $\ell \in [n]$, and consider $\ell$ random submatrices $A_1, A_2, \dotsc, A_\ell$ of $A$
obtained by the following process: for each row of $A$, independently
choose one of the $\ell$ submatrices uniformly at random, and put the row
in that submatrix. Fix $\delta \in (0,1)$. Then,
\begin{eqnarray}
\P\Big\{\sigma_k(A_v) \geq \sigma_k(A) / (2\sqrt{\ell}), \forall v \in [\ell] \Big\} \geq
1-\delta,
\end{eqnarray}
provided that $c_A\leq \frac{9}{32} \cdot \frac{n}{k\ell \ln \frac{k\ell}{\delta}}$.
\end{lemma}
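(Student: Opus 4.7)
The plan is to reduce the statement to a matrix Chernoff bound applied to a random row-subsampling of the matrix of left singular vectors of $A$. Write the thin SVD $A = USV^\T$, with $U\in\R^{n\times k}$ having orthonormal columns, $S=\diag(\sigma_1(A),\dotsc,\sigma_k(A))$, and $V\in\R^{k\times k}$ orthogonal. Let $E_v$ be the row-selection matrix so that $A_v=E_vA$. Since $V$ is orthogonal, $\sigma_k(A_v)=\sigma_k(E_vUS)$, and for any unit $y\in\R^k$,
\begin{equation*}
y^\T S U^\T E_v^\T E_v U S\, y \;=\; (Sy)^\T (E_vU)^\T(E_vU)(Sy) \;\ge\; \lambda_{\min}\!\big((E_vU)^\T(E_vU)\big)\,\sigma_k(A)^2 .
\end{equation*}
Hence $\sigma_k(A_v)\ge \sigma_k(E_vU)\cdot \sigma_k(A)$, so it is enough to show that $\sigma_k(E_vU)\ge 1/(2\sqrt{\ell})$ holds simultaneously for all $v\in[\ell]$ with probability at least $1-\delta$.

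Fix $v\in[\ell]$, set $u_j := U^\T e_j\in\R^k$, and let $\xi_j:=\mathbf{1}\{\text{row } j\text{ is assigned to submatrix } v\}$, so that $\xi_1,\dotsc,\xi_n$ are i.i.d.\ Bernoulli$(1/\ell)$. Then
\begin{equation*}
(E_vU)^\T(E_vU) \;=\; \sum_{j=1}^{n}\xi_j\, u_j u_j^\T
\end{equation*}
is a sum of independent PSD matrices. Each summand has spectral norm at most $\|u_j\|_2^2 \le c_A k/n$ by the definition of the incoherence number, and since $\sum_j u_ju_j^\T = U^\T U = I_k$, the expectation equals $(1/\ell)I_k$, so the minimum eigenvalue of the mean is $\mu_{\min}=1/\ell$.

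Now I invoke Tropp's matrix Chernoff inequality: for independent PSD $X_j$ with $\lambda_{\max}(X_j)\le R$ almost surely,
\begin{equation*}
\P\Big\{\lambda_{\min}\big(\sum_j X_j\big)\le (1-\tau)\mu_{\min}\Big\} \;\le\; k\exp\!\Big({-}\tfrac{\tau^2\mu_{\min}}{2R}\Big) .
\end{equation*}
Choosing $\tau=3/4$ (so that $(1-\tau)\mu_{\min}=1/(4\ell)$, the exact threshold required to conclude $\sigma_k(E_vU)\ge 1/(2\sqrt{\ell})$) and substituting $R\le c_A k/n$ yields a failure probability of at most $k\exp(-9n/(32c_Ak\ell))$ for each $v$. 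A union bound over the $\ell$ submatrices combined with the assumed bound on $c_A$ drives the total failure probability below $\delta$.

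The main obstacle is setting up the reduction cleanly so that the minimum singular value of $A_v$ is controlled by the minimum eigenvalue of a sum of independent rank-one PSD matrices with a well-understood mean and uniform operator-norm bound; once this is in place, the correct scalar form of matrix Chernoff closes the argument with the stated constants. A minor technicality is that the indicators $\{\xi_j^{(v)}\}_{v\in[\ell]}$ for a fixed $j$ are coupled (exactly one equals one), but this coupling is irrelevant, since only independence across $j$ for a single $v$ enters the Chernoff estimate, and the remaining dependence across $v$ is absorbed by the union bound.
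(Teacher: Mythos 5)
Your proposal is correct and follows essentially the same route as the paper: reduce via the thin SVD to bounding $\lambda_{\min}$ of the sum $\sum_j \xi_j\, (U^\T e_j)(U^\T e_j)^\T$, apply the matrix Chernoff bound with $\tau=3/4$, $\mu_{\min}=1/\ell$, and $R\le c_Ak/n$, and finish with a union bound over the $\ell$ submatrices. The only cosmetic difference is that you isolate $\sigma_k(E_vU)$ explicitly before invoking Chernoff, whereas the paper bounds $\lambda_{\min}(A_v^\T A_v)$ directly; the constants and the condition on $c_A$ come out identically.
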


Lemma~\ref{lemma:partitioning} is proved in Appendix~\ref{app:partitioning}. Using this lemma with $\ell=3$, we obtain the following. For $A\in \R^{n \times k}$ with full column rank and a random partitioning $\Part$ of its rows into three groups, all the submatrices $A_I$, $I \in \Part$ are full rank with probability at least $1-\delta$, provided that
\begin{eqnarray}
c_A \le \frac{3}{32} \cdot \frac{n}{k\ln \frac{3k}{\delta}}.
\end{eqnarray}

Thus, we have a procedure for denoising (i.e. recovering the noise terms $\veps$) through random partitioning and matrix decomposition under appropriate rank condition. The coefficient matrix $A$ can now be extracted from the denoised moments through the procedures listed in the previous sections, under expansion condition~\ref{cond:expansion} and generic parameters condition~\ref{cond:parameter} for the coefficient matrix $A$.  

\subsection{Application: learning hierarchical models}\label{subsec:hier}

In the previous section, we developed a general framework for learning linear models with hidden variables. 

We now apply the above results for learning hierarchical models, which consist of many layers of hidden variables. We first formally define hierarchical linear models.
\begin{definition}
A hierarchical linear model is a model with the following graph structure. The nodes of the graph can be partitioned into levels $L_1,\dotsc, L_m$ such that there is no edge between the nodes within one level and all the edges are between nodes in adjacent levels, $(L_i, L_{i+1})$ for $i \in [m-1]$. Furthermore,  the edges are directed from $L_i$ to $L_{i+1}$. The nodes in level $L_m$ correspond to the observed nodes and other levels contain the hidden nodes.
\end{definition}

The next theorem concerns identifiability of linear hierarchical models. More
specifically, consider a hierarchical model and let $\G_i$ be the
induced graph with nodes $L_i \cup L_{i+1}$ and suppose that the induced
model between levels $L_i$ and $L_{i+1}$ satisfies the model conditions
described in Section~\ref{subsec:model} with coefficient matrix $A_i$, for
$i\in [m-1]$: $A_i$ has the rank condition (Condition~\ref{cond:rank}) and
parameter genericity property (Condition~\ref{cond:parameter}), and
(bipartite) graph $\G_i$ has the expansion property
(Condition~\ref{cond:expansion}).

\begin{theorem}\label{thm:multi}
Consider a hierarchical  model with levels $L_1, \dotsc, L_m$ and suppose that the induced model between levels $L_i$ and $L_{i+1}$ satisfies the model conditions described in Section~\ref{subsec:model} with coefficient matrix $A_i$, for $i \in [m-1]$. Then all columns of $A_i$ are identifiable for $i\in [m-1]$ from the second order observed moment, i.e., $\Sigma = \E[x x^\T]$. Therefore, the entire model is identifiable up to permuting the nodes within each level.
\end{theorem}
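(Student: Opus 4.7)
The plan is to work from the observed layer $L_m$ upward by induction on the level index, showing at each step that the recovered second moment of the current hidden layer admits a decomposition of exactly the same ``low-rank plus diagonal'' form analyzed in Section~\ref{sec:singleview}. Once that form is available, the two-stage pipeline of denoising via $\Subroutine$ followed by column recovery via $\AlgI$ (justified by Lemma~\ref{lem:block-diagonal} and Theorem~\ref{thm:core}) can be applied essentially off-the-shelf. The hierarchy never needs to be unrolled all at once; each level is handled in isolation as soon as the previous level is done.

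Concretely, at the base layer I use $x_{L_m} = A_{m-1} h_{L_{m-1}} + \veps_{L_m}$ to write
\[ \Sigma \;=\; A_{m-1}\,\E[h_{L_{m-1}} h_{L_{m-1}}^\T]\, A_{m-1}^\T \;+\; \E[\veps_{L_m} \veps_{L_m}^\T] , \]
which is low-rank plus diagonal because $\veps_{L_m}$ is uncorrelated with $h_{L_{m-1}}$ and its components are pairwise uncorrelated. The rank condition on $A_{m-1}$ supplies (directly, or via Lemma~\ref{lemma:partitioning}) a three-set partition $\Part$ of $L_m$ with $(A_{m-1})_I$ full column rank for each $I\in\Part$, so $\Subroutine(\Sigma,\Part)$ returns the low-rank summand $M_{m-1} := A_{m-1}\E[h_{L_{m-1}} h_{L_{m-1}}^\T] A_{m-1}^\T$ by Lemma~\ref{lem:block-diagonal}. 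The expansion and genericity hypotheses on $A_{m-1}$ (Conditions~\ref{cond:expansion} and~\ref{cond:parameter}) then let Theorem~\ref{thm:core} identify the columns of $A_{m-1}$ from $M_{m-1}$ in canonical form.

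For the inductive step, with $A_{i+1}$ and $M_{i+1}$ already in hand I invert the full-column-rank $A_{i+1}$ to get $\widehat\Sigma_{i+1} := A_{i+1}^+ M_{i+1} (A_{i+1}^+)^\T = \E[h_{L_{i+1}} h_{L_{i+1}}^\T]$. Substituting $h_{L_{i+1}} = A_i h_{L_i} + \veps_{L_{i+1}}$ yields
\[ \widehat\Sigma_{i+1} \;=\; A_i\,\E[h_{L_i} h_{L_i}^\T]\, A_i^\T \;+\; \E[\veps_{L_{i+1}} \veps_{L_{i+1}}^\T] , \]
which is again low-rank plus diagonal (cross terms vanish because $\veps_{L_{i+1}}$ is uncorrelated with the upstream $h_{L_i}$, and its components are pairwise uncorrelated). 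Applying $\Subroutine$ and then $\AlgI$ under the assumed conditions on $A_i$ and $\G_i$ recovers the columns of $A_i$ (in canonical form) and $M_i := A_i \E[h_{L_i} h_{L_i}^\T] A_i^\T$. Iterating down to $i=1$ produces every coefficient matrix, giving identifiability up to within-level permutation.

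The main subtlety I expect to confront is bookkeeping the canonical column-scaling of each $A_{i+1}$ as it propagates to the level above. Identifying $A_{i+1}$ only up to a positive diagonal $D_{i+1}$ of column norms effectively rescales $h_{L_{i+1}}$ by $D_{i+1}$, so the coefficient matrix driving level $i+1$ from level $i$ becomes $D_{i+1} A_i$ rather than $A_i$, and an analogous within-level permutation from Theorem~\ref{thm:core} permutes its rows. All the required hypotheses on $A_i$---the support-based expansion (Condition~\ref{cond:expansion}), parameter genericity (Condition~\ref{cond:parameter}), rank (Condition~\ref{cond:rank}), and non-degeneracy of $\E[h_{L_i} h_{L_i}^\T]$ (Condition~\ref{cond:nondegeneracy})---are preserved under nonsingular diagonal row-scaling and row-permutation, since they depend only on the support pattern, on non-vanishing algebraic expressions in the nonzero entries, and on column rank. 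A parallel induction is needed for non-degeneracy of $\E[h_{L_i} h_{L_i}^\T]$: positive definiteness at level $L_{i-1}$ plus full column rank of $A_{i-1}$ forces the first summand above to be positive definite, hence so is $\widehat\Sigma_i$. That bookkeeping, more than the pipeline itself, is where the genuine care will go.
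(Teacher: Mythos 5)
Your proposal is correct and follows essentially the same route as the paper's proof: peel off one level at a time, use the low-rank-plus-diagonal decomposition of Lemma~\ref{lem:block-diagonal} to denoise, identify the columns of $A_i$ via the argument of Theorem~\ref{thm:core}, and apply a left inverse to recover the second-order moment of the next hidden layer before recursing. Your extra bookkeeping of how the canonical column-scaling and within-level permutation propagate upward (and why Conditions~\ref{cond:expansion}, \ref{cond:parameter}, \ref{cond:rank} survive diagonal row-scaling and row-permutation) is a point the paper's proof passes over silently, and is a welcome addition rather than a deviation.
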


Theorem~\ref{thm:multi} is proved in Section~\ref{subsec:multi-proof}.

\begin{remark}
By the definition of a hierarchical  model, the hidden nodes in level $L_1$ are independent. Now consider the case that the nodes in $L_1$ have arbitrary dependence relationships. By using the same argument as in the proof of Theorem~\ref{thm:multi}, we can still learn all the coefficient matrices $A_i$ and the second order moment of the variables in layer $L_1$.
\end{remark} 

%%%%%%%%%%%%%%%%%%%%%%%%
\section{Numerical experiments}\label{sec:simulation}
\aj{Should explicitly mention that simulations are for single view ... I will do that once you are done with single view section.}

In the previous sections, we proposed algorithms for learning topic models (multi-view), and general linear single view models. 
Our algorithms rely on low order (second and third order) moments of the observed variables. In presenting the results and the proofs we
assumed that exact observed moments are available to emphasize the validity of the method. In general, these moments should be
estimated from sampled data. This brings up the question of \emph{sample complexity}, namely given a model $\G$, how many samples
are required to estimate the model parameters with precision $\delta$. We expect graceful sample complexity for the proposed algorithms as the low order moments can be reliably estimated from data. In this section, we consider two concrete examples of the \emph{single view} linear models, and validate the performance of the proposed algorithms under finite number of samples.

The first example is a hierarchical model where we require the coefficient matrices between adjacent layers
to be full rank. The second example is an illustration of a model in which the relations among the hidden nodes are described by a (general) DAG, and we require the
coefficient matrix to be full rank.
\smallskip\\

\noindent{\bf Example 1.} We validate our method on the following configuration.
\begin{itemize}
\item{\emph{Graph structure:}}
 We consider a hierarchical model with three levels, $L_1$, $L_2$ and $L_3$. Levels $L_1$ and $L_2$ contain the hidden nodes with $n_1 = |L_1| = 5$, $n_2 = |L_2| = 30$ and level $L_3$ contains the observed nodes with $n_3 = |L_3| = 180$. Coefficient matrices $A_1\in \R^{n_2\times n_1}$ and $A_2\in \R^{n_3\times n_2}$, respectively representing the linear relationships among the levels $L_1$, $L_2$ and the levels $L_2$, $L_3$, are constructed according to a Bernoulli-Gaussian model.  More specifically, $A_1 = B \odot G$, where $B \in \R^{n_2\times n_1}$ is an i.i.d. Bernoulli$(p)$ matrix, and $G\in \R^{n_2\times n_1}$ has i.i.d. standard normal entries. Further, $\odot$ indicates the entrywise product. In our experiment, we choose $p = 0.3$ to make the model satisfy the expansion property. Also recall that Theorem~\ref{thm:alg1} assumes  a positive gap $\gamma_i$ between the maximum and the second maximum absolute values in the $i^{th}$ row, for $i\in [n]$. For the sake of simplicity, we consider the same gap $\gamma$ for all the rows. More specifically, in each row of $A_1$ we change the entry with the maximum absolute value to ensure gap $\gamma$ while keeping the sign of this entry unchanged. As we will see, $\gamma$ has an important effect on sample complexity of the algorithm. A very small $\gamma$ leads to a poor sample complexity and increasing $\gamma$ improves the sample complexity of the algorithm. Similar model is used to generate $A_2$.
%Finally, we normalize matrices $A_1$ and $A_2$ to make their columns unit norm, since we only hope to recover the model in the canonical form.

\item{\emph{Noise variables:}}
For each noise variable, its variance is selected uniformly at random from the interval $[0.5,1]$ and its distribution is chosen from a family of four distributions including $(a)$ exponential; $(b)$ poisson; $(c)$ chi-squared; $(d)$ gaussian. More specifically, for given variance $\sigma^2$,
it is distributed as either $\expo(\sigma^{-1})$, $\Poiss(\sigma^2)$,  $(\sigma/\sqrt{2}) \chi_1$, or $\normal(0,\sigma^2)$ equally likely,
where $\chi_1$ denotes the chi-squared distribution with one degree of freedom.
\end{itemize}

In experiments we employed  a slight variant of $\AlgI$ to make it more robust to finite sample errors.
This is essentially the same variant of {\sf ER-SpUD} used in~\cite{Spielman-12} (see {\sf ER-SpUD (proj)}). For self-containedness, we
present its details in Appendix~\ref{app:AlgIproj}.

%The noise variables $\veps_i$, $\eta_j$ are chosen independently from the following distribution for $i\in [n]$, $j\in[k]$:
%%
%\begin{eqnarray}
%\mu = \frac{1}{3} \delta_{\small 2/3} + \frac{2}{3} \delta_{-\small 1/3}\,,
%\end{eqnarray}
%%
%where $\delta_a$ denotes the Dirac delta function at $a$. Note that $\mu$ is zero mean, and has skewness $0.7071$.

\aj{Should be modified based on the single view Section! I will do it ...}

Following Lemma~\ref{lemma:partitioning}, we find partition $\Part$ (the input of $\Subroutine$) 
by randomly partitioning the rows of the corresponding coefficient matrix
into three groups. With exact observed moments, any such partition leads to the decomposition of
the corresponding matrix into its low rank and diagonal parts, with a fixed positive probability. However, with empirical
moments, different partitions lead to different errors in estimating the coefficients. In experiments, we run $\Subroutine$ with $100$ different partitions.
Due to finite sample error, the retuned matrix $D$ for each run is not necessarily a diagonal matrix. We compute the ratio of off-diagonal entries
for each retuned $D$, i.e., $\sum_{i\neq j}|D_{ij}|/ \sum_{i,j}|D_{ij}|$ and choose the partition $\Part$ which leads to the minimum off-diagonal ratio.

We run $\AlgI$ with empirical covariance $\hSigma$ to first learn $A_1$ and then $A_2$ as described in the proof of Theorem~\ref{thm:multi}. More specifically, using $\nsmp$ independent realizations of the observed variables $x^{(1)},\dotsc, x^{(\nsmp)} \in \R^{n_3}$, with $x^{(i)}$ representing the values of the observed nodes in the $i$-th realization, we let
\begin{align*}
m = \frac{1}{\nsmp} \sum_{i=1}^{\nsmp} x^{(i)}\,,\quad \quad
\hSigma = \frac{1}{\nsmp} \sum_{i=1}^{\nsmp}(x^{(i)} - m)(x^{(i)} - m)^\T\,.
\end{align*}
\smallskip

\noindent $\bullet$ \emph{Measure of performance and the results:}
Recall that coefficient matrices can be only specified up to permutation and scaling of their columns. In order to measure the algorithm performance on estimating a coefficient matrix $A \in \R^{n \times m}$, we define the following distance between $A$ and the estimation $\widehat{A}$ returned by the algorithm.
\begin{align*}
\dist(A,\widehat{A}) &= \frac{1}{\|A\|_F^2} \sum_{i=1}^m \min_{j\in[m], \nu}  \|Ae_i - \nu \widehat{A} e_j\|^2\,\\
&= \frac{1}{\|A\|_F^2} \sum_{i=1}^m \min_{j\in[m]}  \|Ae_i - (e_i^\T A^\T \widehat{A}e_j) \widehat{A} e_j\|^2\,.
\end{align*}
Here, the minimization over $j \in [m]$ is to remove the permutation ambiguity and the minimization over $\nu$ is to remove the scaling ambiguity.
Further, since $\AlgI$ returns the matrix in its canonical form, we have $\|\widehat{A}e_j\| =1$ and the optimal $\nu$ is given by $\nu = (e_i^\T A^\T \widehat{A}e_j)$.

The support of coefficient matrix $A$ corresponds to the edges in the corresponding graph and is of particular interest.
We define precision and recall in characterizing the support of $A$ as follows:
\begin{align*}
\text{precision}(A,\widehat{A}) = \frac{|\{(i,j): A_{ij} \neq 0, \widehat{A}_{ij} \neq 0\}|}{|\{(i,j): \widehat{A}_{ij} \neq 0\}|}\,,
\quad \,\,\,
\text{recall }(A,\widehat{A}) = \frac{|\{(i,j): A_{ij} \neq 0, \widehat{A}_{ij} \neq 0\}|}{|\{(i,j): {A}_{ij} \neq 0\}|}\,.
\end{align*}
In words, \emph{precision} is the fraction of retrieved edges that are truly an edge and \emph{recall} is the fraction of true edges that are retrieved.

We summarize the results in Table~\ref{tbl:multi-simulation} for different values
of $\gamma$ and $\nsmp$.
\begin{table}[h!]
\centering
\subtable[$\gamma = 0.3$]{
\small
\centering
\begin{tabular}{|c|ccccc|}
\hline
$\nsmp$ & {\bf 25,000} & {\bf 50,000} & {\bf 100,000} & {\bf 200,000} & {\bf 400,000}
\\
\cline{1-6}
\multicolumn{1}{|c|}{$\dist(A_1,\widehat{A}_1)$} & 0.9283 &  0.8029& 0.7656 &  0.6939 & 0.4813
\\
\cline{1-6}
\multicolumn{1}{|c|}{precision$(A_1,\widehat{A}_1)$} & 0.3120 &  0.3228 & 0.3231 & 0.3325 &  0.3333
\\
\cline{1-6}
\multicolumn{1}{|c|}{recall$(A_1,\widehat{A}_1)$} & 0.8478 & 0.8913 & 0.9130 &0.9130 &  0.9130
\\
\cline{1-6}
\multicolumn{1}{|c|}{$\dist(A_2,\widehat{A}_2)$} & 0.2674&  0.1516 & 0.1466 & 0.1299 & 0.0943
\\
\cline{1-6}
\multicolumn{1}{|c|}{precision$(A_2,\widehat{A}_2)$} &0.3355 &  0.3402&0.3497  &  0.3530 &  0.3566
\\
\cline{1-6}
\multicolumn{1}{|c|}{recall$(A_2,\widehat{A}_2)$} &0.9389 &  0.9518 &  0.9526 &  0.9599 &0.9697
\\
\cline{1-6}
\end{tabular}
\label{tbl:multi-simulation1}
}
\subtable[$\gamma = 0.5$]{
\small
\centering
\begin{tabular}{|c|ccccc|}
\hline
$\nsmp$ & {\bf 25,000} & {\bf 50,000} & {\bf 100,000} & {\bf 200,000} & {\bf 400,000}
\\
\cline{1-6}
\multicolumn{1}{|c|}{$\dist(A_1,\widehat{A}_1)$} & 0.5942 &  0.4016& 0.3205 &  0.1187 & 0.0661
\\
\cline{1-6}
\multicolumn{1}{|c|}{precision$(A_1,\widehat{A}_1)$} &0.3462 & 0.3462 & 0.3538 &  0.3615& 0.3769
\\
\cline{1-6}
\multicolumn{1}{|c|}{recall$(A_1,\widehat{A}_1)$} & 0.8824 & 0.8824 & 0.9020 & 0.9216 & 0.9608
\\
\cline{1-6}
\multicolumn{1}{|c|}{$\dist(A_2,\widehat{A}_2)$} &0.0731 &  0.0338 & 0.0157 & 0.0084 & 0.0048
\\
\cline{1-6}
\multicolumn{1}{|c|}{precision$(A_2,\widehat{A}_2)$} & 0.3437&  0.3497&  0.3552&0.3558  &  0.3581
\\
\cline{1-6}
\multicolumn{1}{|c|}{recall$(A_2,\widehat{A}_2)$} & 0.9477 &  0.9641 &  0.9793& 0.9811 & 0.9872
\\
\cline{1-6}
\end{tabular}
\label{tbl:multi-simulation2}
}
\caption{Example 1. Hierarchical (single-view) model with level sizes $n_1 = 5, n_2 = 30, n_3 = 180$, and $1694$ number of edges.}\label{tbl:multi-simulation}.
\end{table}
\begin{figure}[!t]
\centering
\subfigure[$\nsmp = 25000$]{
\includegraphics*[viewport= 90 290 525 485, width=3.8in]{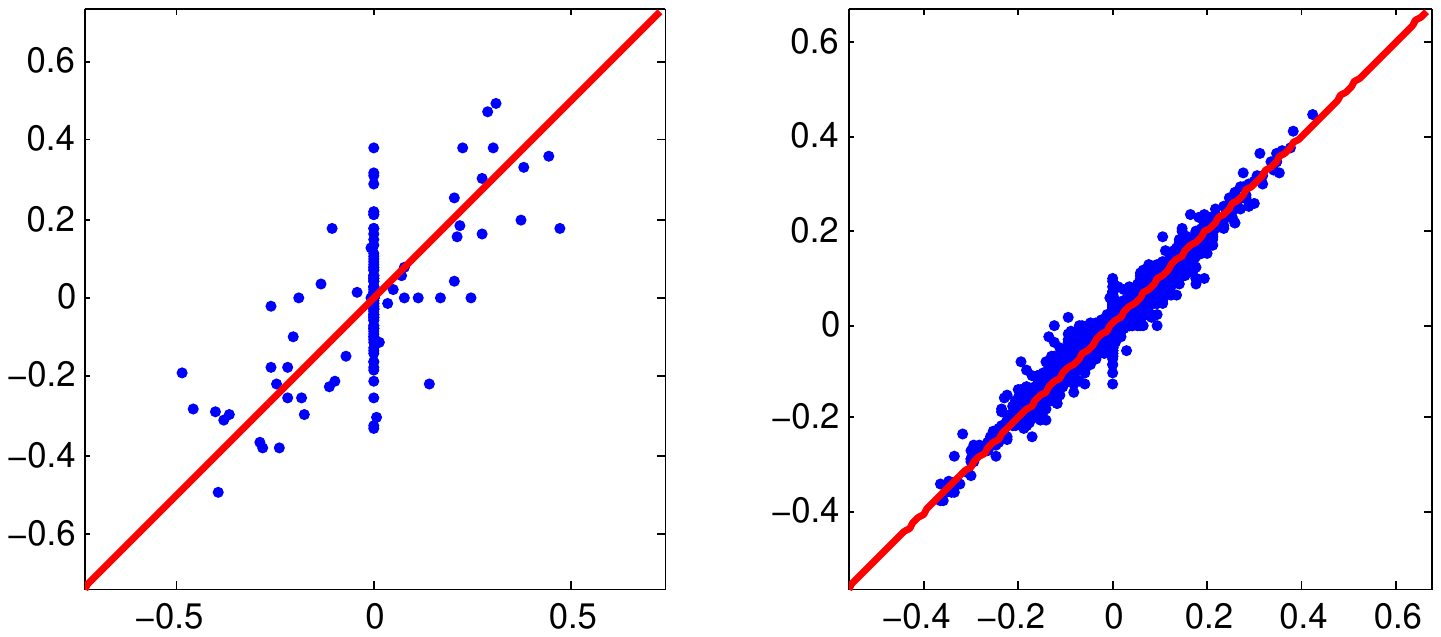}
\put(-215,-5){$A_{1,ij}$}
\put(-285,68){$\widehat{A}_{1,ij}$}
\put(-75,-5){$A_{2,ij}$}
\put(-145,68){$\widehat{A}_{2,ij}$}
}
\vspace{0cm}
\subfigure[$\nsmp = 100000$]{
\includegraphics*[viewport= 90 290 525 485, width=3.8in]{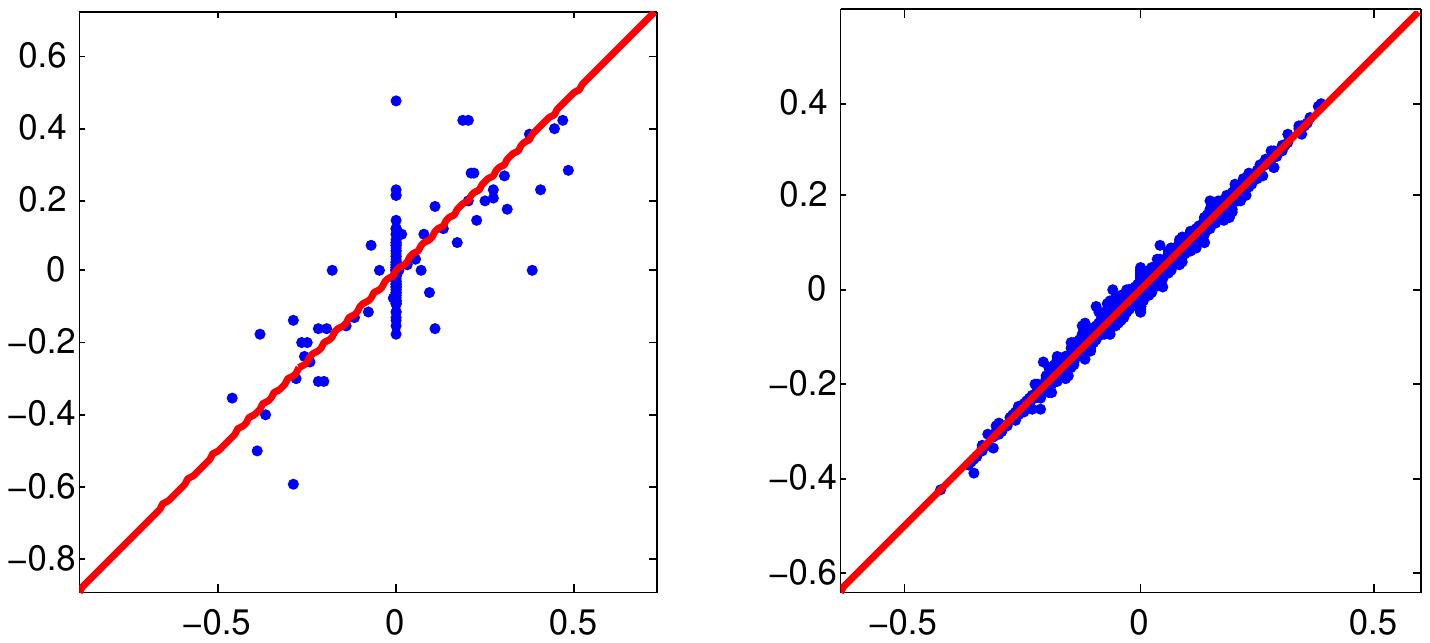}
\put(-215,-5){$A_{1,ij}$}
\put(-285,68){$\widehat{A}_{1,ij}$}
\put(-75,-5){$A_{2,ij}$}
\put(-145,68){$\widehat{A}_{2,ij}$}
}
\vspace{0cm}
\subfigure[$\nsmp = 400000$]{
\includegraphics*[viewport= 90 290 525 485, width=3.8in]{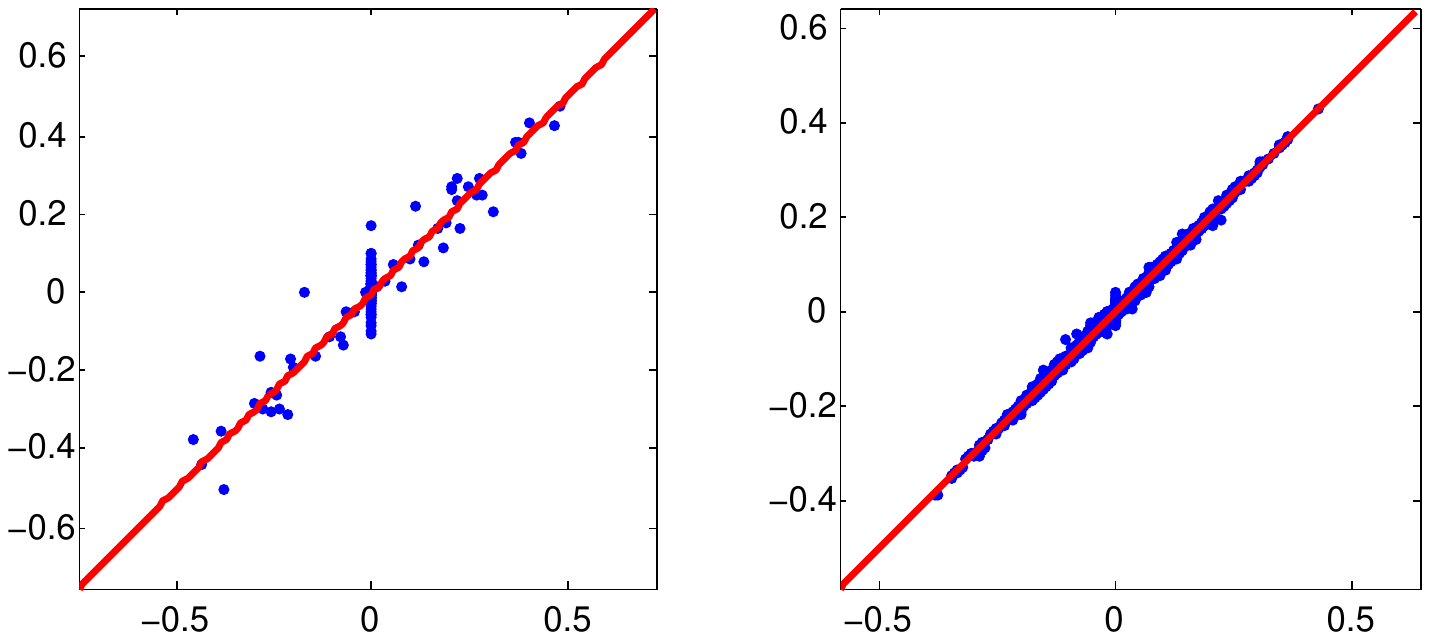}
\put(-215,-5){$A_{1,ij}$}
\put(-285,68){$\widehat{A}_{1,ij}$}
\put(-75,-5){$A_{2,ij}$}
\put(-145,68){$\widehat{A}_{2,ij}$}
}
\caption{Scatterplots for Learning the hierarchical model in Example 1, using different values of $\nsmp$ and $\gamma = 0.5$.}\label{fig:multi-simulation}
\vspace{-0.5cm}
\end{figure}
The scatterplots in Fig.~\ref{fig:multi-simulation} depict the points $(\widehat{A}_{1,ij}, A_{1,ij})$ and $(\widehat{A}_{2,ij}, A_{2,ij})$ for different values of $\nsmp$
and $\gamma = 0.5$. As the number of samples increases, the observed moments are
estimated more accurately and the scatter points concentrate around the line with slope one.
Further, for each value of $\nsmp$, the error in estimating $A_1$ is larger than the error in estimating $A_2$. The reason is that we first apply $\AlgIproj$ to estimate the coefficient matrix $A_2$,
and then use this estimation to learn the coefficient matrix $A_1$. In other words the induced model between the observed nodes (level $L_3$) and the hidden nodes (level $L_2$) is estimated more accurately than the induced model among the hidden nodes (levels $L_1, L_2$).
\smallskip\\

\noindent{\bf Example 2.}  Our next example is a model in which the relationships among the hidden nodes
 are represented by a DAG model. The model contains $k = 25$ hidden nodes
and $n = 150$ observed nodes. The linear relationships among the hidden nodes are described by
a lower triangular coefficient matrix $\Lambda \in \R^{k\times k}$, which is chosen according to a Bernoulli-Gaussian model: The entries in the lower triangular part are non-zero with probability $p = 0.3$ and the values of the non-zero entries are chosen independently
from standard normal distribution. The coefficient matrix $A\in \R^{n\times k}$, describing the relationships between the hidden
nodes and the observed nodes, is constructed as per Bernoulli-Gaussian model in the previous experiment with $p= 0.3,$ and then ensured to have gap $\gamma$ between the maximum and the second maximum absolute values in each row.
%Finally it is  normalized to have columns with unit norm.

Similar to the previous experiment, the noise variables have variances chosen uniformly at random from $[0.5,1]$. Their distributions are chosen uniformly at random from a family of three distributions with non-zero skewness, namely $(a)$ exponential; $(b)$ poisson; $(c)$ chi-squared.

In simulations, we used the power iteration to implement the ECA part as described in Section~\ref{subsec:BN}. 
%We applied a variant of $\AlgII$ to learn the coefficient matrices $\Lambda$ and $A$. Following~\cite{AGHKT}, instead of using a single random random $\theta$ and obtaining singular vectors of $W^\T L_{\Psi} W$ as in step $(8)$ of \AlgII, we use the following power iteration to obtain the singular vectors $\{v_1,\dotsc,v_k\}$:
%
%\vspace{0.5cm}
%%
%{\small
%\begin{center}{
%\fbox {
%    \parbox[c][][c]{3.3in}{
%    $\{v_1,\dotsc,v_k\} \leftarrow$ random orthonormal basis for $\R^k$.
%    Repeat:
%    \begin{enumerate}
%    \item For $i=1, 2, \dotsc, k:$
%    \begin{itemize}
%    \item Let $L_{\Psi}$ be the low rank part of $\Psi(Wv_i)$.
%    \item $v_i \leftarrow W^\T L_{\Psi}Wv_i$.
%    \end{itemize}
%    \item Orthonormalize $\{v_1,\dotsc,v_k\}$.
%    \end{enumerate}
%    }
%}
%}\end{center}
%}
%%
%\vspace{0.5cm}

The results are summarized in Table~\ref{tbl:depthone-simulation}.
The scatterplots in Fig.~\ref{fig:depthone-simulation} contains the points $(\widehat{\Lambda}_{ij}, \Lambda_{ij})$ and $(\widehat{A}_{ij}, A_{ij})$ for different values of $\nsmp$
and $\gamma = 0.5$.
\begin{table}[]
\centering
\subtable[$\gamma = 0.3$]{
\small
\centering
\begin{tabular}{|c|cccc|}
\hline
$\nsmp$ & {\bf 200,000} & {\bf 300,000} & {\bf 400,000} & {\bf 500,000}
\\
\cline{1-5}
\multicolumn{1}{|c|}{$\dist(\Lambda,\widehat{\Lambda})$} &0.7933  &  0.4627 &0.3894  &  0.1778
\\
\cline{1-5}
\multicolumn{1}{|c|}{precision$(\Lambda,\widehat{\Lambda})$} &0.1168  &0.1168  & 0.1168 & 0.1168
\\
\cline{1-5}
\multicolumn{1}{|c|}{recall$(\Lambda,\widehat{\Lambda})$} & 1 & 1& 1 & 1
\\
\cline{1-5}
\multicolumn{1}{|c|}{$\dist(A,\widehat{A})$} & 0.2818 & 0.2584 &0.1894  &   0.0809
\\
\cline{1-5}
\multicolumn{1}{|c|}{precision$(A,\widehat{A})$} & 0.2979&  0.3248 & 0.3263&  0.3337
\\
\cline{1-5}
\multicolumn{1}{|c|}{recall$(A,\widehat{A})$} & 0.9391&  0.9446 &  0.9492 &  0.9705
\\
\cline{1-5}
\end{tabular}
\label{tbl:depthone-simulation1}
}
\subtable[$\gamma = 0.5$]{
\small
\centering
\begin{tabular}{|c|cccc|}
\hline
$\nsmp$ & {\bf 200,000} & {\bf 300,000} & {\bf 400,000} & {\bf 500,000}
\\
\cline{1-5}
\multicolumn{1}{|c|}{$\dist(\Lambda,\widehat{\Lambda})$} & 0.4597 &0.1820  & 0.0832 &  0.0492
\\
\cline{1-5}
\multicolumn{1}{|c|}{precision$(\Lambda,\widehat{\Lambda})$} &0.1168 &  0.1168&  0.1168&  0.1168
\\
\cline{1-5}
\multicolumn{1}{|c|}{recall$(\Lambda,\widehat{\Lambda})$} & 1 &  1& 1 & 1
\\
\cline{1-5}
\multicolumn{1}{|c|}{$\dist(A,\widehat{A})$} & 0.1777 &  0.0757&  0.0478 & 0.0330
\\
\cline{1-5}
\multicolumn{1}{|c|}{precision$(A,\widehat{A})$} &0.3283 &  0.3302& 0.3333 &  0.3352
\\
\cline{1-5}
\multicolumn{1}{|c|}{recall$(A,\widehat{A})$} & 0.9548 & 0.9603  & 0.9695 & 0.9751
\\
\cline{1-5}
\end{tabular}
\label{tbl:depthone-simulation2}
}
\caption{Example 2. Bayesian network (single-view) model with $k = 25$ hidden nodes, $n = 150$ observed nodes, and $1177$ number of edges.}\label{tbl:depthone-simulation}.
\end{table}
\begin{figure}[!t]
\centering
\subfigure[$\nsmp = 300000$]{
\includegraphics*[viewport= 90 290 525 485, width=3.8in]{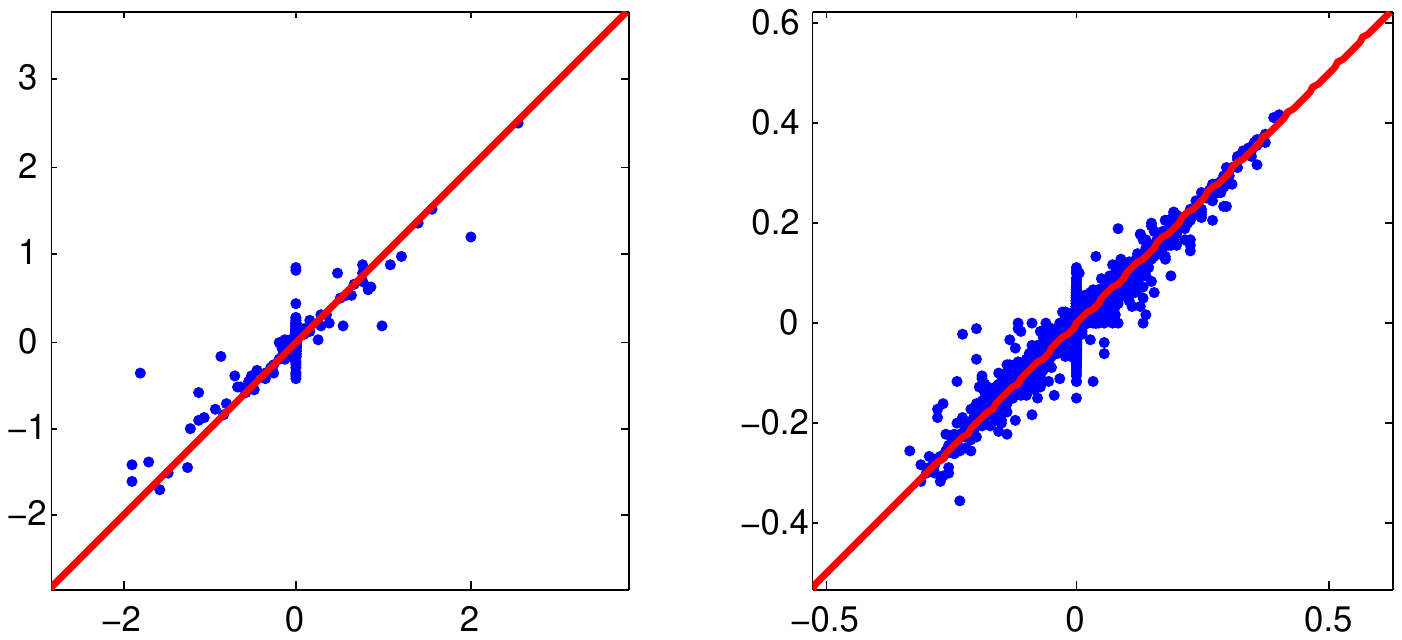}
\put(-215,-5){$\Lambda_{ij}$}
\put(-280,68){$\widehat{\Lambda}_{ij}$}
\put(-75,-5){$A_{ij}$}
\put(-140,68){$\widehat{A}_{ij}$}
}
\vspace{0cm}
\subfigure[$\nsmp = 400000$]{
\includegraphics*[viewport= 90 290 525 485, width=3.8in]{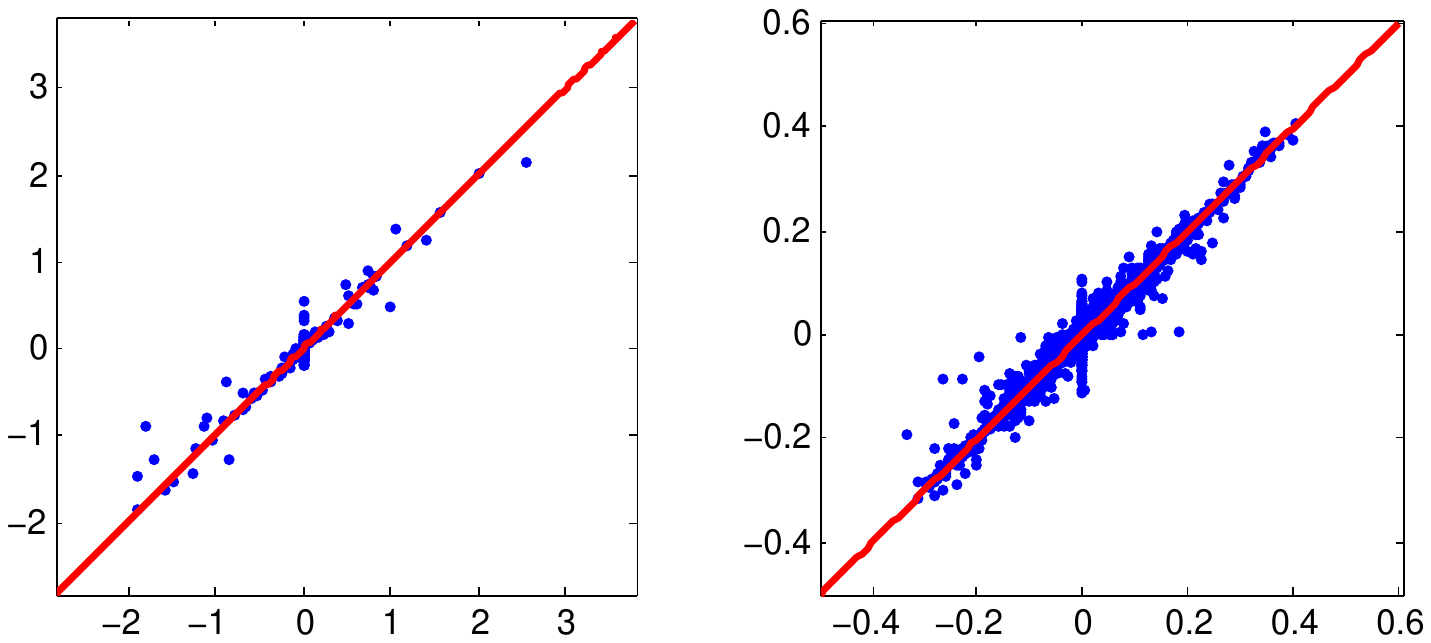}
\put(-215,-5){$\Lambda_{ij}$}
\put(-280,68){$\widehat{\Lambda}_{ij}$}
\put(-75,-5){$A_{ij}$}
\put(-140,68){$\widehat{A}_{ij}$}
}
\vspace{0cm}
\subfigure[$\nsmp = 500000$]{
\includegraphics*[viewport= 90 290 525 485, width=3.8in]{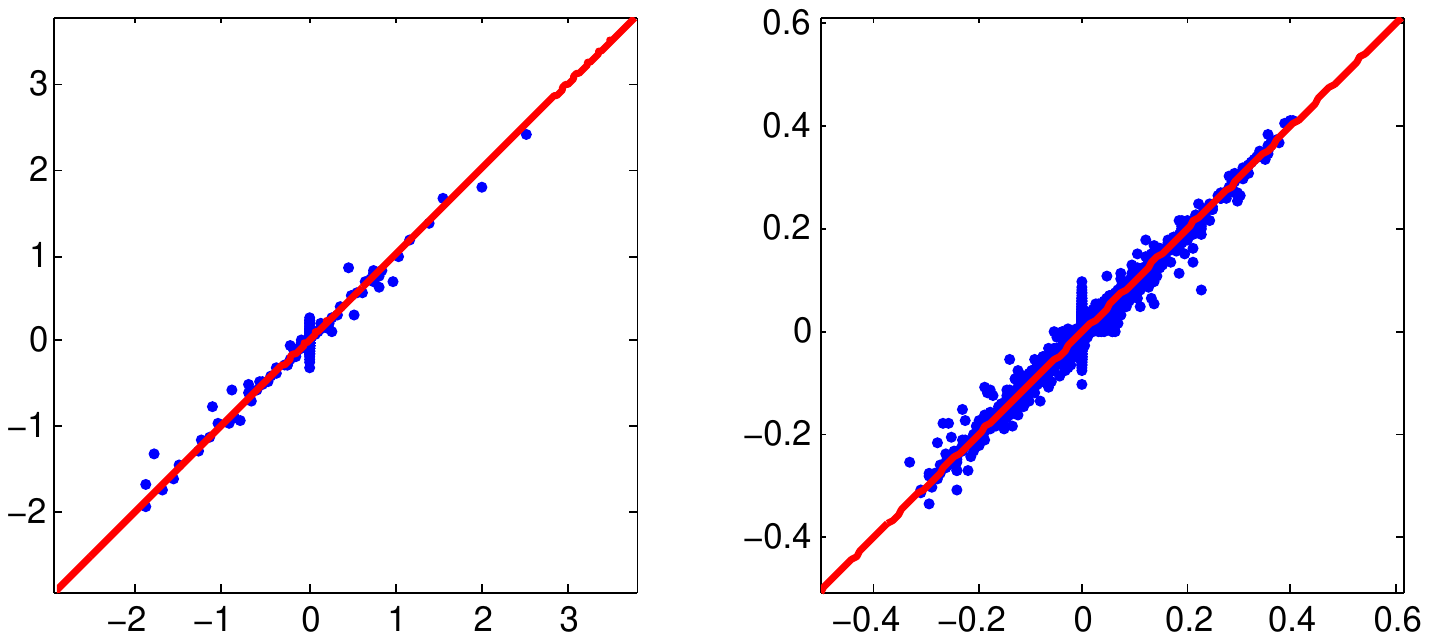}
\put(-215,-5){$\Lambda_{ij}$}
\put(-280,68){$\widehat{\Lambda}_{ij}$}
\put(-75,-5){$A_{ij}$}
\put(-140,68){$\widehat{A}_{ij}$}
}
\caption{Scatterplots for Learning the model in Example 2, using different values of $\nsmp$ and $\gamma = 0.5$.}\label{fig:depthone-simulation}
\end{figure}
%

%========================================
\section*{Acknowledgements}
We thank David Gamarnik and Rong Ge for helpful discussions. A.~Anandkumar acknowledges the support of NSF Career Award CCF-1254106, NSF Award CCF 1219234, AFOSR Award FA9550-10-1-0310, and ARO Award W911NF-12-1-0404.
Part of this work was completed while A.~Anandkumar and A.~Javanmard were
visiting Microsoft Research New England.
%%%%%%%%%%%%%%%%%%%%%%%%%%%%

\bibliographystyle{abbrv}
\bibliography{all-bibliography}

% that's all folks

%%%%%%%%%%%%%%%%%%%%%%%%
\appendix
\section{Proof of the theorems}
\subsection{Proof of Theorem~\ref{thm:core}}
\label{sec:core-proof}
Observe that
\begin{eqnarray}
\begin{split}
\Pairs &= \E[x_1\otimes x_2] = \E[\E[x_1\otimes x_2|h]] = A \E[hh^\T]A^\T.
\end{split}
\end{eqnarray}

Since the hidden variables are linearly independent, $\E[hh^\T]$is full
rank. Otherwise, $v^\T \E[hh^\T]v = 0$ for some non-zero vector $v$. This
implies that $\E[\|h^\T v\|^2] = 0$ and so $h^\T v = 0$ which leads to a
contradiction.
% Find a partition $\Part$ of $[n]$, such that $|\Part| = 3$,
%and $\rank(\Sigma_{I,J}) = k$ for all distinct $I,J \in \Part$. (Note that
%$\rank(\Sigma_{I,J}) = \rank(A_I \E[hh^\T] A_J^\T)$ and by rank condition,
%there exists such a partition $\Part$). We first show that $\Subroutine(\Sigma, \Part)$ returns $A \E[hh^\T] A^\T$ and the diagonal matrix $\E[\veps \veps^\T]$.
%\begin{lemma}\label{lem:block-diagonal}
%Let $C = AB^\T + D$, with $A, B \in \R^{n \times k}$ and $D \in \R^{n\times
%n}$ a diagonal matrix. Suppose that for a fixed partition $\Part$ of $[n]$,
%with $|\Part| = 3$, all the submatrices $A_I$ and $B_I$ have full column
%rank $k$, for all $I \in \Part$. Then, $\Subroutine(C)$ returns $AB^\T$ and $D$.
%\end{lemma}
%The proof of Lemma~\ref{lem:block-diagonal} is deferred to Appendix~\ref{app:block-diagonal}.

Given that $\E[hh^\T]$ and $A$ have full column rank, we have $\Col(A) =
\Col(A \E[hh^\T] A^\T)$. Let $\{u_1, \dotsc, u_k\}$ be any basis of $\Col(A
\E[hh^\T] A^\T)$ containing vectors with $k$ smallest $\ell_0$ norm. Since
all the columns of $A$ have at most $d_{\max}$ non-zero entries, we have
$\max_{i\in [k]} \|u_i\|_0 \le d_{\max}$, by choice of vectors $u_i$. Next
we show that due to the graph expansion property
(Condition~\ref{cond:expansion}) and the parameter genericity property
(Condition~\ref{cond:parameter}), vectors $u_i$ are (scaled) columns of
$A$.
Observe that any vector $u_i$ can be represented by a linear combination of
columns of $A$, say $u_i = Av$. If $\|v\|_0 \ge 2$, then
\begin{eqnarray*}
\|u_i\|_0 = \|Av\|_0 > |\Neigh_A(\supp(v))| - |\supp(v)|\ge d_{\max},
\end{eqnarray*}
where the first inequality follows from parameter genericity property and
the second one follows from the expansion property.
This leads to a contradiction.
Therefore, $\|v\|_0 = 1$, and $u_i$ is scaled multiple of a column of $A$.
Since $\{u_1,\dotsc,u_k\}$ are linearly independent, different $u_i$'s correspond to
different columns of $A$ and therefore columns of $A$, in a canonical form (up to sign),
are given by $\{u_1/\|u_1\|,\dotsc, u_k/\|u_k\|\}$.
%Let $\tilde{A} = [u_1| \dotsb|u_k]$.
%Then, there exists a permutation matrix $\Pi$ and a diagonal matrix
%$\Delta$ such that $\tilde{A} = A \Pi \Delta$.
%We recover the scaling matrix $\Delta$ using the fact that $A$ is in
%canonical form.
%
%Let $\tilde{B}$ be a left inverse of $\tilde{A}$. We have
%\begin{eqnarray}
%\tilde{B} A \E[hh^\T] A^\T \tilde{B}^\T = \Delta^{-1} \Pi^{-1} \E[hh^\T] \Pi^{-\T} \Delta^{-\T}.
%\end{eqnarray}
%Consequently,
%\begin{eqnarray}
%\diag(\tilde{B} A \E[hh^\T] A^\T \tilde{B}^\T) = \diag(\Delta^{-1} \Pi^{-1} \E[hh^\T] \Pi^{-\T} \Delta^{-\T}) = \Delta^{-2},
%\end{eqnarray}
%where the last step follows from $\diag(\E[hh^\T]) = I_{k \times k}$ as $A$ is in a canonical form.
%%
%Finally,
%\begin{eqnarray}
%\tilde{A}(\diag(\tilde{B} A\E[hh^\T] A^\T \tilde{B}^\T))^{1/2} = \tilde{A} \Delta^{-1} = A \Pi.
%\end{eqnarray}
%%
%Therefore, we have identified all columns of $A$.

%=======================================
\subsection{Proof of Theorem~\ref{thm:alg1}}
\label{subsec:alg1-proof}

Recall that $\Pairs  = A \E[hh^\T] A^\T$.
%Using Lemma~\ref{lem:block-diagonal}, $\Subroutine(\Pairs,\Part)$ (in the second step of $\AlgI$) returns the low-rank part $L = A \E[hh^\T] A^\T$ and the diagonal part $\E[\veps\veps^\T]$. 
Using following lemma (with $L = \Pairs^{1/2}$) shows that vectors $s_i$, returned by the first loop (steps $(1)-(3)$), are scaled multiples of the columns of $A$.

\begin{lemma}\label{lem:alg1}
Let $A \in \R^{n\times k}$ be a given matrix with rank $k$, and let $L \in \R^{n\times k}$ be such that $L = A M$, for an invertible $M \in \R^{k \times k}$. (Equivalently $\Col(A) = \Col(L)$). Fix $i \in [n]$ and consider the following optimization problem:
\begin{eqnarray}\label{eqn:opt1-alg1}
\min_w \quad \|Lw\|_1 \quad \quad \text{ subject to }(e_i^\T L)w = 1.
\end{eqnarray}
Under the following conditions, $s_i = L w$ is a scaling of the $\pi_i(1)$-th column of $A$. (Recall that $\pi_i(1)$ is the index of the entry with maximum absolute value in the $i$-th row of $A$).

\begin{itemize}
\item [(i)]
$\|A_{(\Neigh^2_{i})^c, (\Neigh_i)^c}\, v\|_1 >
\|A_{\Neigh^2_{i},(\Neigh_i)^c}\, v\|_1$
for all non-zero vectors $v \in \R^{|(\Neigh_i)^c|}$.

\item[(ii)]
$\|A_{(\Neigh_j)^c, \Neigh_i \backslash j} \, v\|_1 > \|A_{\Neigh_j, \Neigh_i \backslash j}\, v\|_1 + (1-\gamma) \|A_{\Neigh_j,j}\|_1 \|v\|_1$
for all $j \in \Neigh_i$ and
all non-zero vectors $v \in \R^{|\Neigh_i|-1}$.
\end{itemize}
\end{lemma}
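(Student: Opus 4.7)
The plan is to first use the invertibility of $M$ to reformulate the optimization as $\min_u \|Au\|_1$ subject to $(e_i^\top A)\,u = 1$, via the bijection $u = Mw$. Under this change of variables, $s_i = Lw = Au$, and the natural target is $u^\star := a_{i,j^\star}^{-1}\, e_{j^\star}$ with $j^\star := \pi_i(1)$, which satisfies the constraint and for which $Au^\star$ is precisely a scaling of the $j^\star$-th column of $A$. So the goal reduces to proving $u^\star$ is the unique minimizer of the reformulated problem.

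I would argue this in two stages, corresponding to conditions (i) and (ii). The first stage uses condition (i) to force any optimal $u$ to be supported on $\Neigh_i$. For arbitrary feasible $u$, partition the rows into $\Neigh_i^2$ and its complement and exploit $A_{(\Neigh_i^2)^c, \Neigh_i} = 0$ (by definition of $\Neigh_i^2$) to decompose $\|Au\|_1$ block-wise. Let $\tilde u$ be obtained from $u$ by zeroing the entries outside $\Neigh_i$; $\tilde u$ is still feasible because the constraint only involves $u_{\Neigh_i}$. An application of the reverse triangle inequality then gives
$$\|Au\|_1 - \|A\tilde u\|_1 \;\geq\; \|A_{(\Neigh_i^2)^c,\,(\Neigh_i)^c}\, u_{(\Neigh_i)^c}\|_1 \;-\; \|A_{\Neigh_i^2,\,(\Neigh_i)^c}\, u_{(\Neigh_i)^c}\|_1,$$
and condition (i) forces the right-hand side to be strictly positive whenever $u_{(\Neigh_i)^c} \neq 0$. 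Hence any minimizer satisfies $\supp(u) \subseteq \Neigh_i$.

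In the second stage, I restrict to $u$ supported on $\Neigh_i$ and write $u = u^\star + v$, so $v$ is also supported on $\Neigh_i$ and satisfies $\sum_{j \in \Neigh_i} a_{ij} v_j = 0$. Now partition the rows according to $\Neigh_{j^\star}$ versus its complement; since $A_{(\Neigh_{j^\star})^c,\, j^\star} = 0$, the contribution of $u^\star$ vanishes off $\Neigh_{j^\star}$. Let $w := v_{\Neigh_i \setminus \{j^\star\}}$. The linear constraint $\sum a_{ij} v_j = 0$ combined with the gap hypothesis $|a_{ij}|/|a_{i,j^\star}| \leq 1 - \gamma$ for every $j \in \Neigh_i \setminus \{j^\star\}$ yields the crucial bound $|v_{j^\star}| \leq (1-\gamma)\|w\|_1$. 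Using this together with the reverse triangle inequality on the $\Neigh_{j^\star}$-block and the forward triangle inequality to split the $v_{j^\star}$-term, one obtains
$$\|A(u^\star + v)\|_1 - \|Au^\star\|_1 \;\geq\; \|A_{(\Neigh_{j^\star})^c,\, \Neigh_i \setminus j^\star}\, w\|_1 - \|A_{\Neigh_{j^\star},\, \Neigh_i \setminus j^\star}\, w\|_1 - (1-\gamma)\,\|A_{\Neigh_{j^\star},\, j^\star}\|_1\, \|w\|_1,$$
which is exactly the expression that condition (ii), applied with $j = j^\star$, certifies to be strictly positive whenever $w \neq 0$. Finally, the constraint $a_{i,j^\star} v_{j^\star} = -\sum_{j \neq j^\star} a_{ij} v_j$ implies that $w = 0$ forces $v_{j^\star} = 0$ and therefore $v = 0$, so strict improvement holds for every nonzero feasible perturbation.

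The main obstacle I anticipate is the bookkeeping: one must identify the correct row partition at each stage ($\Neigh_i^2$ versus its complement for the support-reduction step, then $\Neigh_{j^\star}$ versus its complement for the uniqueness step), and recognize that the residual $(1-\gamma)\,\|A_{\Neigh_{j^\star},\, j^\star}\|_1\, \|w\|_1$ arising from the gap hypothesis is precisely the slack that condition (ii) was tailored to absorb. Once the two partitions and the gap-based bound on $|v_{j^\star}|$ are in place, the remainder follows by a systematic application of triangle inequalities.
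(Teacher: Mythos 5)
Your proposal is correct and follows essentially the same route as the paper's proof: the same change of variables to $\min_u \|Au\|_1$ subject to $(e_i^\T A)u=1$, the same two-stage argument with the row partitions $\Neigh_i^2$ versus $(\Neigh_i^2)^c$ for support reduction under condition (i) and $\Neigh_{j^\star}$ versus $(\Neigh_{j^\star})^c$ for uniqueness under condition (ii), and the same gap-based bound $|v_{j^\star}|\le(1-\gamma_i)\|w\|_1$ (which the paper derives via H\"older's inequality on $|b^\T z_1|$). The only difference is cosmetic: you parameterize by the perturbation $v=u-u^\star$ around the target, while the paper splits the optimizer $z_*$ into its $j^\star$-component and the remainder.
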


\begin{proof}[Proof (Lemma~\ref{lem:alg1})]
Consider the following equivalent formulation of Problem~\eqref{eqn:opt1-alg1} obtained by the change of variables $z = Mw$, $b^\T = (e_i^\T L) M^{-1}$:
\begin{eqnarray}\label{eqn:opt2-alg1}
\min_z \quad \|Az\|_1 \quad \quad \text{ subject to }b^\T z = 1.
\end{eqnarray}
Observe that $b^\T$ is the $i$-th row of $A$. Denote the solution to Problem~\eqref{eqn:opt2-alg1} by $z_*$. We aim to prove that $z_*$ is supported on $\{\pi_i(1)\}$. We prove the desired result in two steps:
\begin{claim}\label{claim:alg1-claim1}
Under Condition $(i)$, we have $\supp(z_*) \subseteq \supp(b)$.
\end{claim}
\begin{claim}\label{claim:alg1-claim2}
Under Condition $(i)-(ii)$, we have $\supp(z_*) = \{\pi_i(1)\}$.
\end{claim}

\begin{proof}[Proof (Claim~\ref{claim:alg1-claim1})]
Notice that $b^\T = e_i^\T A$, and so $\supp(b) = \Neigh_i$.
Define $z_0 \in \R^k$ by $z_0(j) := z_*(j)$ for all $j \in \supp(b)$, and
$z_0(j) := 0$ for all $j \notin \supp(b)$.
Also, let $z_1 := z_* - z_0$.
Therefore, $z_0$ is also a feasible solution to Problem~\eqref{eqn:opt2-alg1},
since $b^\T z_0 = b^\T z_*$.

If $z_1 \neq 0$, then
\begin{align*}
\|Az_*\|_1 &= \|A_{\Neigh_i^2,[k]} \, z_*\|_1 + \|A_{(\Neigh_i^2)^c,[k]}\, z_*\|_1\\
& = \|A_{\Neigh_i^2,[k]} \,(z_0 +z_1)\|_1 + \|A_{(\Neigh_i^2)^c,[k]}\, z_1\|_1\\
&\ge \|A_{\Neigh_i^2,[k]} z_0\|_1 - \|A_{\Neigh_i^2,[k]} z_1\|_1 + \|A_{(\Neigh_i^2)^c,[k]}\, z_1\|_1\\
& = \|Az_0\|_1- \|A_{\Neigh_i^2,[k]} z_1\|_1 + \|A_{(\Neigh_i^2)^c,[k]}\, z_1\|_1\\
& > \|Az_0\|_1,
\end{align*}
where the last inequality follows from Condition (i) and the fact
$\supp(z_1) \subseteq (\Neigh_i)^c$.
Therefore, $z_0$ is a feasible solution with smaller objective value, which
contradicts the optimality of $z_*$.
Therefore we conclude that $z_1 = 0$, and hence $\supp(z_*) \subseteq
\supp(b)$.
\end{proof}

\begin{proof}[Proof (Claim~\ref{claim:alg1-claim2})]
By Claim~\ref{claim:alg1-claim1}, $\supp(z_*) \subseteq \supp(b) = \Neigh_i$.
To lighten the notation, let $j = \pi_i(1)$, and define $z_0 := (e_j^\T z_*) e_j$ and $z_1 := z_*
- z_0$.
Suppose for sake of contradiction that $z_1 \neq 0$.
Since $b^\T z_* = 1$, we have $z_0 = ((1 - b^\T z_1) / b_j)\, e_j$.
Therefore (using the triangle inequality twice),
\begin{align*}
\|Az_*\|_1
& = \|A_{\Neigh_j,[k]} z_*\|_1
+ \|A_{(\Neigh_j)^c,[k]} z_* \|_1
\\
& = \|A_{\Neigh_j,[k]} (z_0 + z_1)\|_1
+ \|A_{(\Neigh_j)^c,[k]} z_1 \|_1
\\
& \geq \|A_{\Neigh_j,[k]} z_0\|_1
- \|A_{\Neigh_j,[k]} z_1\|_1
+ \|A_{(\Neigh_j)^c,[k]} z_1 \|_1
\\
& = \|A_{\Neigh_j,[k]} ((1 - b^\T z_1)/b_j)\, e_j\|_1
- \|A_{\Neigh_j,[k]} z_1\|_1
+ \|A_{(\Neigh_j)^c,[k]} z_1 \|_1
\\
& \geq
(1/|b_j) \|A_{\Neigh_j,[k]} e_j\|_1
- |b^\T z_1/b_j| \|A_{\Neigh_j,[k]} e_j\|_1
- \|A_{\Neigh_j,[k]} z_1\|_1
+ \|A_{(\Neigh_j)^c,[k]} z_1 \|_1.
\end{align*}
Since $z_1(j) = 0$, we have $|b^\T z_1| \leq |b|_{\pi_i(2)} \|z\|_1$ by
H\"older's inequality, and therefore,
\begin{align*}
\frac{|b^\T z_1|}{|b_j|}
& \leq \frac{|b|_{\pi_i(2)}\|z_1\|_1}{|b|_{j}}
\leq (1-\gamma_i) \|z_1\|_1.
\end{align*}
Moreover, by Condition (ii) and the fact $\supp(z_1) \subseteq \Neigh_i
\setminus j$,
\begin{align*}
\|A_{\Neigh_j^c,[k]} z_1\|_1
& > \|A_{\Neigh_j,[k]} z_1\|_1
+ (1-\gamma_i) \|A_{\Neigh_j,j}\|_1 \|z_1\|_1
.
\end{align*}
Putting the last three displayed inequalities together gives
\begin{align*}
\|Az_*\|_1
& > (1/|b_j|) \|A_{\Neigh_j,[k]}e_j\|_1
= \|A(e_j / b_j)\|_1\,.
\end{align*}
Since $e_j / b_j$ is a feasible solution, the above strict inequality
contradicts the optimality of $z_*$.
Therefore we conclude that $z_1 = 0$, and $z_* = z_0 = e_j / b_j$.
\end{proof}

Notice that $s_i = L w = AMw = Az_*$ and since $\supp(z_*) = \{\pi_i(1)\}$, $s_i$ is a scaled multiple of the $\pi_i(1)$-th column of $A$. This completes the proof of Lemma~\ref{lem:alg1}.
\end{proof}

Now, we are ready to prove the theorem.

Given that Conditions $(i)-(ii)$ hold for all $i\in[n]$, using Lemma~\ref{lem:alg1}, the set $\mathcal{S} = \{s_1, \dotsc, s_n\}$ consists of scaled multiples of the columns of $A$. Moreover, since $[k] \subseteq \{\pi_1(1), \dotsc, \pi_n(1)\}$, $\mathcal{S}$ contains at least one scaled multiple of each column of $A$. In the second loop (steps $(4)-(8)$), we choose a linearly independent set $\{v_1, \dotsc, v_k\} \subseteq \mathcal{S}$. These are the (scaled multiples of the) columns of $A$.
Hence, letting $\widehat{A} = [\frac{v_1}{\|v_1\|}| \dotsb|\frac{v_k}{\|v_k\|}]$, there exists a permutation matrix $\Pi$, such that $\widehat{A}\Pi$ gives $A$ in its canonical form (up to sign of each column).

%Hence, letting $\tilde{A} = [v_1| \dotsb|v_k]$, there exists a diagonal matrix $\Delta \in \R^{k \times k}$ and a permutation matrix $\pi \in \R^{k \times k}$ such that $\tilde{A} = A \Pi \Delta$. Let $\tilde{B}$ be a left inverse of $\tilde{A}$. We have
%\begin{eqnarray}
%\tilde{B} L \tilde{B}^\T = \tilde{B} A \E[hh^\T] A^\T \tilde{B}^\T = \Delta^{-1} \Pi^{-1} \E[hh^\T] \Pi^{-\T} \Delta^{-\T}.
%\end{eqnarray}
%Consequently,
%\begin{eqnarray}
%\diag(\tilde{B}L\tilde{B}^\T) = \diag(\Delta^{-1} \Pi^{-1} \E[hh^\T] \Pi^{-\T} \Delta^{-\T}) = \Delta^{-2},
%\end{eqnarray}
%where the last step follows from $\diag(\E[hh^\T]) = I_{k \times k}$ as $A$ is in a canonical form.
%%
%Finally,
%\begin{eqnarray}
%\tilde{A}(\diag(\tilde{B} L \tilde{B}^\T))^{1/2} = \tilde{A} \Delta^{-1} = A \Pi.
%\end{eqnarray}
%%
%Therefore, $\AlgI$ returns all the columns of $A$.
%=======================================
\subsection{Proof of Theorem~\ref{thm:eff-depth}}
\label{subsec:eff-depth-proof}

Let $\eta:= (\eta(1), \dotsc, \eta(k))$ and $\veps := (\veps(1), \dotsc, \veps(n))$. Using the model description, we have
\begin{eqnarray}
\Pairs = A \E[hh^\T] A^\T = A(I-\Lambda)^{-1} \E[\eta \eta^\T] (I-\Lambda)^{-\T} A^\T\,.
\end{eqnarray}
Define $M := A(I-\Lambda)^{-1} \in \R^{n \times k}$. Then
\begin{eqnarray}
\begin{split}
\Pairs = M \E[\eta\eta^\T] M^\T 
= M \diag(\sigma^2_{\eta(1)}, \dotsc, \sigma^2_{\eta(k)}) M^\T \,.
%\Pairs &= \E[x_1 \otimes x_2]\\
%&= \E[(M\eta + \veps) (M\eta+\veps)^\T]\\
%&= M \E[\eta\eta^\T] M^\T + \E[{\veps}{\veps}^\T] \\
%&= M \diag(\sigma^2_{\eta_1}, \dotsc, \sigma^2_{\eta_k}) M^\T + \diag(\sigma^2_{{\veps}_1}, \dotsc, \sigma^2_{{\veps}_n}).
\end{split}
\end{eqnarray}

\aj{Given that $A$ satisfies the rank condition, it is immediate to see that $M \diag(\sigma_{\eta(1)}, \dotsc, \sigma^2_{\eta(k)})$ also satisfies the rank condition. Therefore, applying Lemma~\ref{lem:block-diagonal}, we can decompose $\Pairs$ into its low-rank part ($L_\Pairs$) and its diagonal part ($D_\Pairs$), where
\begin{align}
\Pairs &= M \diag(\sigma^2_{\eta_1}, \dotsc, \sigma^2_{\eta_k}) M^\T,\\
D_\Pairs &=   \diag(\sigma^2_{\veps_1}, \dotsc, \sigma^2_{\veps_1}).
\end{align}
}
%This is the first step of $\AlgII$.

Since $A$ has full column rank, $U^\T \Pairs U \in \R^{k \times k}$ also has full rank; hence, the whitening step (Part 1 in $\AlgII$) is possible.  We have
\begin{align*}
I = W^\T \Pairs W = W^\T M \diag(\sigma^2_{\eta(1)}, \dotsc, \sigma^2_{\eta(k)}) M^\T W.
\end{align*}
Therefore, the matrix $N := W^\T M \diag(\sigma_{\eta(1)}, \dotsc, \sigma_{\eta(k)}) \in \R^{k \times k}$ is an orthogonal matrix.

\begin{lemma}\label{lem:triple}
We have
\begin{eqnarray}
%\Triples(\zeta) = \diag(\mu_{\veps(1)} \zeta(1), \dotsc, \mu_{\veps(n)} \zeta(n)) + M \diag(\mu_{\eta(1)},\dotsc, \mu_{\eta(k)}) \diag(M^\T \zeta) M^\T.
\Triples(\zeta) = M \diag(\mu_{\eta(1)},\dotsc, \mu_{\eta(k)}) \diag(M^\T \zeta) M^\T.
\end{eqnarray}
\end{lemma}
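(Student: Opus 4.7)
\textbf{Proof proposal for Lemma~\ref{lem:triple}.} The plan is to reduce the computation of $\Triples(\zeta)$ to a computation of the third-moment tensor of $\eta$, which is diagonal by the independence assumption on the noise variables $\eta(j)$.

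First, I would exploit the exchangeable topic-model structure from Section~\ref{subsec:model}: conditional on the topic mixture $h$, the words $x_1,x_2,x_3$ are i.i.d.\ with $\Ebb[x_t\mid h]=Ah$. Therefore
\begin{equation*}
\Triples \;=\; \Ebb\bigl[\,\Ebb[x_1\otimes x_2\otimes x_3\mid h]\bigr] \;=\; \Ebb\bigl[(Ah)^{\otimes 3}\bigr].
\end{equation*}
Next, I would substitute $h=(I-\Lambda)^{-1}\eta$ (which follows by solving~\eqref{eqn:hid-2} for $h$), so that $Ah=A(I-\Lambda)^{-1}\eta=M\eta$. This gives $\Triples=\Ebb[(M\eta)^{\otimes 3}]$, i.e.\ entrywise
\begin{equation*}
\Triples_{a,b,c}\;=\;\sum_{p,q,r}M_{a,p}M_{b,q}M_{c,r}\,\Ebb[\eta(p)\eta(q)\eta(r)].
\end{equation*}

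The main (and only nontrivial) step is to evaluate $\Ebb[\eta(p)\eta(q)\eta(r)]$. Since the $\eta(j)$'s are independent and zero mean, this expectation vanishes unless $p=q=r$, in which case it equals $\mu_{\eta(p)}$. Consequently, $\Ebb[\eta\otimes\eta\otimes\eta]$ is the diagonal third-order tensor $\sum_p \mu_{\eta(p)}\,e_p^{\otimes 3}$, and
\begin{equation*}
\Triples \;=\; \sum_{p=1}^{k}\mu_{\eta(p)}\,M_{[n],p}^{\otimes 3},
\end{equation*}
where $M_{[n],p}$ denotes the $p$-th column of $M$.

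Finally, I would contract in the third mode against $\zeta$. Using $\Triples(\zeta)=\Ebb[x_1\otimes x_2\,\langle\zeta,x_3\rangle]$, this projection yields
\begin{equation*}
\Triples(\zeta) \;=\; \sum_{p=1}^{k}\mu_{\eta(p)}\,\langle M_{[n],p},\zeta\rangle\, M_{[n],p}\,M_{[n],p}^{\T} \;=\; M\,\diag(\mu_{\eta(1)},\dotsc,\mu_{\eta(k)})\,\diag(M^\T\zeta)\,M^\T,
\end{equation*}
which is exactly the claimed identity. The only subtlety worth flagging is the independence-and-zero-mean reduction of $\Ebb[\eta\otimes\eta\otimes\eta]$ to a diagonal tensor; everything else is bookkeeping via the multilinear structure of the tensor expectation and the definition $M=A(I-\Lambda)^{-1}$.
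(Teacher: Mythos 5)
Your proposal is correct and follows essentially the same route as the paper's proof: condition on $h$, use the conditional independence of the words to factor the conditional expectation, substitute $Ah = M\eta$, and reduce $\E[\eta\otimes\eta\otimes\eta]$ to a diagonal tensor via the zero-mean and independence assumptions. The only cosmetic difference is that you compute the full third-order tensor and then contract against $\zeta$, whereas the paper works directly with the matrix $\E[\eta\eta^\T\langle\eta, M^\T\zeta\rangle]$ and checks its diagonal and off-diagonal entries separately; the substance is identical.
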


Lemma~\ref{lem:triple} is proved in Appendix~\ref{app:triple}.

\aj{Applying Lemma~\ref{lem:block-diagonal} again, we decompose $\Triples(W\theta)$ into its diagonal and low-rank parts.
\begin{align}
\Triples(W\theta) &= M \diag(\mu_{\eta(1)},\dotsc, \mu_{\eta(k)}) \diag(M^\T W \theta) M^\T,\\
D_{\Triples} &=   \diag(\mu_{\veps(1)}(W\theta)(1), \dotsc, \mu_{\veps(n)} (W\theta)(n)).
\end{align}
}
Now, observe that
\begin{eqnarray}
\begin{split}
&W^\T \Triples(W\theta) W =\\
&W^\T M \diag(\mu_{\eta(1)},\dotsc, \mu_{\eta(k)}) \diag(M^\T W \theta) M^\T W =\\
&N \diag(\sigma_{\eta(1)}, \dotsc, \sigma_{\eta(k)})^{-1} \diag(\mu_{\eta(1)},\dotsc, \mu_{\eta(k)}) \diag(M^\T W \theta)
\diag(\sigma_{\eta(1)}, \dotsc, \sigma_{\eta(k)})^{-1} N^\T
\end{split}
\end{eqnarray}
Since $N$ is an orthogonal matrix, the above is an SVD of $W^\T \Triples(W\theta) W$, and $N_1, \dotsc, N_k$ are singular vectors, where $N_i$ denotes the $i$-th column of $N$. Note that $N_i = \sigma_{\eta(i)} W^\T M_i $ for $i \in [k]$.

A key observation is that an SVD uniquely determines all singular vectors (up to sign) which have distinct singular values. Following a similar approach to~\cite{Anima-SVD-12}, we sample $\theta$ uniformly at random over the sphere in $\R^k$ to ensure that all the singular values of $W^\T \Triples(W\theta) W$ are distinct. Define
\begin{eqnarray}
D := \diag(\sigma_{\eta(1)}, \dotsc, \sigma_{\eta(k)})^{-1} \diag(\mu_{\eta(1)},\dotsc, \mu_{\eta(k)}) \diag(M^\T W \theta)
\diag(\sigma_{\eta(1)}, \dotsc, \sigma_{\eta(k)})^{-1}.
\end{eqnarray}
Note that the diagonal of the matrix $D$ is the following vector:
\begin{align*}
&\diag(\sigma_{\eta(1)}, \dotsc, \sigma_{\eta(k)})^{-1} \diag(\mu_{\eta(1)},\dotsc, \mu_{\eta(k)})
\diag(\sigma_{\eta(1)}, \dotsc, \sigma_{\eta(k)})^{-1} M^\T W \theta\\
& = \diag(\sigma_{\eta(1)}, \dotsc, \sigma_{\eta(k)})^{-1} \diag(\mu_{\eta(1)},\dotsc, \mu_{\eta(k)})
\diag(\sigma_{\eta(1)}, \dotsc, \sigma_{\eta(k)})^{-2} N^\T\theta.
\end{align*}

Since $\theta$ is sampled uniformly over the sphere,  and $N$ is a rotation matrix, the distribution of $N^\T \theta$ is also uniform over the sphere. Consequently, all the singular values of $W^\T \Triples(W\theta) W$ are non-zero and distinct. Therefore, the set $\Omega$ (in step $(5)$ of the algorithm) is given by
\begin{eqnarray*}
\Omega = \{\sigma_{\eta(i)} W^\T M_i\}_{i=1}^k.
\end{eqnarray*}

The columns of matrix $S$, defined in step $(6)$ of the algorithm, are then
\begin{align*}
\{(W^+) ^\T \omega : \omega \in \Omega\} &= \{W(W^\T W)^{-1} \sigma_{\eta(i)} W^\T M_i\}_{i=1}^k \\
&= \{W(W^\T W)^{-1} W^\T \sigma_{\eta(i)} M_i\}_{i=1}^k =  \{\sigma_{\eta(i)} M_i\}_{i=1}^k,
\end{align*}
where the last step holds since $W(W^\T W)^{-1} W^\T $ is a projection and Range($W$) = Range($U$) = Range($\Pairs$) = Range($M$). Hence, there exists permutation $\Pi_1$, such that
\begin{eqnarray*}
S = M \diag(\sigma_{\eta(1)}, \dotsc, \sigma_{\eta(k)})\Pi_1 = A(I-\Lambda)^{-1} \diag(\sigma_{\eta(1)}, \dotsc, \sigma_{\eta(k)})\Pi_1.
\end{eqnarray*}

Note that $\Col(S) = \Col(A)$. As demonstrated in the proof of Theorem~\ref{thm:core}, we can identify all the columns of $A$, as $A$ satisfies the graph expansion and the parameter genericity  property. Moreover, under the assumptions of Theorem~\ref{thm:alg1}, $\AlgI(\Pairs)$ returns all columns of $A$. Therefore, we can recover $\widehat{A} = A \Pi_2$, for a permutation matrix $\Pi_2 \in \R^{k \times k}$. Let $\widehat{B}$ be a left inverse of $\widehat{A}$. Then
\begin{eqnarray*}
C := \widehat{B} S = \widehat{B} A(I-\Lambda)^{-1} \diag(\sigma_{\eta(1)}, \dotsc, \sigma_{\eta(k)})\Pi_1 = \Pi_2^{-1} (I-\Lambda)^{-1} \diag(\sigma_{\eta(1)}, \dotsc, \sigma_{\eta(k)})\Pi_1.
\end{eqnarray*}

Consider a topological ordering of the induced DAG on the hidden nodes. In such an ordering, for every directed edge $(j,i)$, we have $j < i$. Hence, $\Lambda$ would be a lower triangular matrix in a topological ordering. We proceed by reordering the rows and the columns of $C$ to get a lower triangular matrix. This may be done in many different ways but we show that all possible permutations that make $C$ lower triangular correspond to different topological orderings of the same DAG. Therefore, we can choose any such permuted version of $C$, call it $\tilde{C}$. Then there exists a topological ordering with corresponding matrix $\Lambda$, such that, $(I - \Lambda)^{-1} \diag(\sigma_{\eta(1)}, \dotsc, \sigma_{\eta(k)}) = \tilde{C}$ and thus $\Lambda  = I - \diag(\tilde{C})\tilde{C}^{-1}$.

Let ${\sf R}_1$ denote the set of rows in $C$ with exactly one non-zero entry. In any lower triangular version of $C$, the rows in ${\sf R}_1$ should appear on top. Furthermore, their non-zero entries should appear in the first ${\sf R}_1$ columns. Note that rows in ${\sf R}_1$ correspond to hidden nodes with no parent. Obviously, any ordering of them with labels $1,\dotsc, |{\sf R}_1|$ is faithful to topological orderings. Now, we can remove these nodes from the DAG (equivalently eliminate the ${\sf R}_1$ columns and rows from $C$) and repeat the same argument. Therefore, different permuted versions of $C$ which are lower triangular correspond to different topological orderings of the DAG. This completes the proof.

%=======================================
\subsection{Proof of Theorem~\ref{thm:multi}}
\label{subsec:multi-proof}
We identify the matrices $A_i$ (up to permutation of their columns) in a sequential manner.
Let $h_{L_i}$ denote the vector formed by the hidden variables in level $L_i$, for $i \in [m-1]$.
Also, let $\veps_{L_i}$ be the noise vector formed by the noise variables associated to the hidden nodes in level $L_i$, for $i \in [m-1]$.
Write
\begin{eqnarray}
\Sigma = A_{m-1} \E[h_{L_{m-1}}h_{L_{m-1}}^\T] A_{m-1}^\T
+ \E[\veps_{L_{m-1}}\veps_{L_{m-1}}^\T].
\end{eqnarray}

Applying Lemma~\ref{lem:block-diagonal}, we can decompose $\Sigma$ into its low-rank and diagonal parts. 
Therefore we have access to $A_{m-1} \E[h_{L_{m-1}}h_{L_{m-1}}^\T]A_{m-1}^\T$. 
 
By a similar argument used in the proof of Theorem~\ref{thm:core}, we can identify the columns of $A_{m-1}$. Equivalently, we recover $\widehat{A}_{m-1} = A_{m-1} \Pi_{m-1}$ for some permutation matrix $\Pi_{m-1}$. Let $\widehat{B}_{m-1}$ be a left inverse of $\widehat{A}_{m-1}$. Now, notice that
\begin{eqnarray}
\widehat{B}_{m-1} A_{m-1} \E[h_{L_{m-1}}h_{L_{m-1}}^\T]A_{m-1}^\T \widehat{B}^\T_{m-1} = \Pi_{m-1}^{-1} \E[h_{L_{m-1}}h_{L_{m-1}}^\T]\Pi_{m-1}^{-\T}.
\end{eqnarray}

In words, we can recover the second order moment of the hidden variables in level $L_{m-1}$, up to a permutation of the nodes within this level. Using the same technique sequentially, we can recover all the columns of $A_i$ for $i \in [m-1]$ and thus the entire model is identifiable up to permutation of hidden nodes within each level.
%=================================================
%\appendix
\section{Proof of Remark~\ref{rem:genericity}}
\label{app:genericity}

Let $\tilde{M} := M + Z$. We first establish some definitions.

\begin{definition}\label{def:fully dense}
We call a vector \emph{fully dense} if all of its entries are non-zero.
\end{definition}

\begin{definition}\label{def:NSP}
We say a matrix has the \emph{Null Space Property (NSP)} if its null space
does not contain any fully dense vector.
\end{definition}

\begin{claim} \label{claim:nullspace}
Fix any $S \subseteq [k]$ with $|S| \geq 2$, and  set $R := \Neigh_M(S)$.
Let $\tilde{C}$ be a $|S| \times |S|$ submatrix of $\tilde{M}_{R,S}$.
Then $\Pr(\text{$\tilde{C}$ has the NSP}) = 1$.
\end{claim}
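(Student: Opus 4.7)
The plan is to reformulate NSP as the statement that some standard basis vector lies in the row span of $\tilde{C}$, and then establish this via the generic-rank theorem and a Hall-type argument on the bipartite support graph. First, a linear subspace $V \subseteq \R^{|S|}$ contains no fully dense vector iff $V \subseteq \bigcup_{j=1}^{|S|} \{v : v_j = 0\}$; since a real vector space is not a union of finitely many of its proper subspaces, this is equivalent to $V \subseteq \{v : v_j = 0\}$ for some $j$, i.e., $e_j \in V^{\perp}$. Applied to $V = \ker \tilde{C}$, NSP is equivalent to asserting that $e_j \in \Row(\tilde{C})$ for some $j$, and the goal is to show this holds almost surely.

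Introduce the bipartite support graph $B$ with left vertices $R'$ (rows) and right vertices $S$ (columns), with edge $(i, j)$ iff $M_{R'_i, j} \neq 0$. Because $R' \subseteq R = \Neigh_M(S)$, each left vertex of $B$ has positive degree. The key classical tool is the generic-rank theorem: since the nonzero entries of $\tilde{C}$ are independent and atomless, the rank of any submatrix of $\tilde{C}$ is almost surely equal to the size of a maximum matching in its support graph---for a square submatrix admitting a perfect matching $\pi$, the monomial $\prod_i Z_{i, \pi(i)}$ appears in the determinant expansion with coefficient $\pm 1$, so the determinant is a nonzero polynomial in the random perturbations and vanishes only on a null set.

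If $B$ admits a perfect matching, $\tilde{C}$ has full rank a.s., $\Row(\tilde{C}) = \R^{|S|}$, and NSP is immediate. Otherwise, Hall's theorem supplies some $J \subseteq S$ with $|\Neigh_B(J)| < |J|$; I would choose such a $J$ to maximize the deficiency $\delta(J) := |J| - |\Neigh_B(J)|$. Write $I := \Neigh_B(J)$, $I' := R' \setminus I$, $J' := S \setminus J$; the hypothesis that every row of $\tilde{C}$ has at least one nonzero entry forces $\Neigh_B(S) = R'$, so $J \neq S$ and $J' \neq \emptyset$. After relabeling,
\begin{equation*}
\tilde{C} = \begin{pmatrix} \tilde{C}_{I,J} & \tilde{C}_{I,J'} \\ 0 & \tilde{C}_{I',J'} \end{pmatrix},
\end{equation*}
the lower-left zero block arising because $I'$ contains no neighbor of $J$.

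The crux is to verify that $\tilde{C}_{I',J'}$ has full column rank a.s.; granted this, $\tilde{C}v = 0$ forces $v_{J'} = 0$ via the lower row-block, and since $|J'| \geq 1$, no fully dense $v$ can lie in $\ker \tilde{C}$. By the generic-rank theorem, full column rank of $\tilde{C}_{I',J'}$ reduces to Hall's condition on $B_{I',J'}$: $|\Neigh_B(J'') \cap I'| \geq |J''|$ for every $J'' \subseteq J'$. This is where the maximal-deficiency choice of $J$ pays off: using $\Neigh_B(J) = I$, $J \cap J'' = \emptyset$, and $I \cap I' = \emptyset$, one computes $|\Neigh_B(J \cup J'')| = |I| + |\Neigh_B(J'') \cap I'|$, and combining this with $\delta(J \cup J'') \leq \delta(J)$ rearranges to precisely $|\Neigh_B(J'') \cap I'| \geq |J''|$. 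This combinatorial Hall verification is the main technical obstacle; the rest of the argument is direct substitution.
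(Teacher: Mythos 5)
Your proof is correct, but it takes a genuinely different route from the paper's. The paper argues row by row: it tracks the null spaces $\mathcal{N}_0 \supseteq \mathcal{N}_1 \supseteq \dotsb \supseteq \mathcal{N}_s$ of the successive top submatrices and shows that, conditioned on the first $i$ rows, any fully dense vector in $\mathcal{N}_i$ is annihilated... rather, is \emph{not} annihilated by the $(i+1)$-th row almost surely (since that row has a nonzero support entry carrying an independent continuous perturbation), so the null space dimension strictly drops at every step until either no fully dense vector survives or the null space is trivial. You instead give a structural, combinatorial argument: you invoke the generic-rank-equals-maximum-matching principle, and in the deficient case you extract a maximum-deficiency Hall violator $J$, exhibit the block-triangular form with zero block $\tilde{C}_{I',J'}$'s complement, and verify Hall's condition on $B_{I',J'}$ via the maximality of $\delta(J)$ to force $v_{J'}=0$ for any null vector $v$. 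Both proofs rest on the same elementary genericity fact (a nonzero polynomial in independent atomless perturbations is nonzero a.s. — note you correctly assume atomlessness, which is really what the paper's ``uncountable support'' hypothesis is meant to supply), and both use only the single structural hypothesis that every row of $\tilde{M}_{R,S}$ has a nonzero entry. The paper's induction is shorter and avoids matching theory; your version isolates the combinatorics cleanly (NSP follows for \emph{any} matrix whose square submatrices with perfectly matched supports are nonsingular), pinpoints which coordinates of a null vector must vanish, and your opening observation that NSP is equivalent to some $e_j$ lying in $\Row(\tilde{C})$ is a nice characterization — though, as you implicitly notice, your argument ends up establishing NSP directly and never uses it.
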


Now, we are ready to prove Remark~\ref{rem:genericity}.

\begin{proof}[Proof (Remark~\ref{rem:genericity})]

It follows from Claim~\ref{claim:nullspace} that, with probability one, the
following event holds: for every $S \subseteq [k]$ with $|S| \geq 2$, and
every $|S| \times |S|$ submatrix $\tilde{C}$ of $\tilde{M}_{R,S}$, $\tilde{C}$ has
the NSP.
Henceforth condition on this event.

Now fix $v \in \R^k$ with $\|v\|_0 \geq 2$.
Let $S := \supp(v)$, $R := \Neigh_{M}(S)$ and $B := \tilde{M}_{R,S}$.
Furthermore, let $u \in (\R \setminus \{0\})^{|S|}$ be the restriction
of vector $v$ to $S$; observe that $u$ is fully dense.
It is clear that $\|\tilde{M}v\|_0 = \|Bu\|_0$, so we need to show that
\begin{eqnarray}
\|Bu\|_0 > |R| - |S| .
\end{eqnarray}
Suppose for sake of contradiction that $Bu$ has at most $|R| - |S|$
non-zero entries.
Then there is a subset of $|S|$ entries on which $Bu$ is zero.
This corresponds to a $|S| \times |S|$ submatrix of $B$ which contains
$u$ in its null space, which means that this submatrix does not have
the NSP---a contradiction.
Therefore we conclude that $Bu$ must have more than $|R| - |S|$
non-zero entries.
\end{proof}

\begin{proof}[Proof (Claim~\ref{claim:nullspace})]
Let $s = |S|$ and let $\tilde{C} = [\tilde{c}_1|\tilde{c}_2|\dotsb|\tilde{c}_s]^\T$, where $\tilde{c}_i^\T$ is the
$i$-th row of $\tilde{C}$.
Also, let $C := [c_1|{c}_2|\dotsb|{c}_s]^\T$ and $W :=
[w_1|w_2|\dotsb|w_s]^\T$ be the corresponding submatrices of $M$ and
$Z$, respectively.
For each $i \in [s]$, denote by $\mathcal{N}_i$ the null space of the
matrix $\tilde{C}_i = [\tilde{c}_1|\tilde{c}_2|\dotsb|\tilde{c}_i]^\T$.
Finally let $\mathcal{N}_0 = \R^s$.
Then, $\mathcal{N}_0 \supseteq \mathcal{N}_1\supseteq \dotsb \supseteq
\mathcal{N}_s$.
We need to show that, with probability one, $\mathcal{N}_s$ does not
contain any fully dense vector.

If one of $\mathcal{N}_i$ does not contain any full dense vector then we
are done.
Suppose that $\mathcal{N}_i$ contains some fully dense vector $v$.
Since $C$ is a submatrix of $M_{R,S}$, every row
${c}_{i+1}^\T$ of ${C}$ contains at least one non-zero entry.
Therefore
\begin{align*}
v^\T \tilde{c}_{i+1}
& = \sum_{j \in [s]} v(j) \tilde{c}_{i+1}(j)
\\
& = \sum_{j \in [s] : {c}_{i+1}(j) \neq 0}
v(j) ({c}_{i+1}(j) + w_{i+1}(j))
\end{align*}
where $\{ w_{i+1}(j) : j \in [s] \ \text{s.t.} \ {c}_{i+1}(j) \neq 0
\}$ are independent random variables (from $Z$).
Moreover, they are of $\tilde{c}_1, \dotsc,\tilde{c}_i$ and thus of $v$.
By assumption on the distribution of the $w_{i+1}(j)$,
\begin{eqnarray}
\P\Biggl[
v \in \mathcal{N}_{i+1}
\bigg| \tilde{c}_1,\tilde{c}_2,\dotsc,\tilde{c}_i \Biggr]
=
\P\Biggl[
\sum_{j \in [s] : {c}_{i+1}(j) \neq 0}
v(j) ({c}_{i+1}(j) + w_{i+1}(j)) = 0
\bigg| \tilde{c}_1,\tilde{c}_2,\dotsc,\tilde{c}_i \Biggr]
= 0 .
\end{eqnarray}
Consequently,
\begin{eqnarray}
\P\Biggl[
\dim(\mathcal{N}_{i+1}) < \dim(\mathcal{N}_i) \bigg| \tilde{c}_1,\tilde{c}_2,\dotsc,\tilde{c}_i
\biggr] = 1
\end{eqnarray}
for all $i = 0,\dotsc, s-1$.
As a result, with probability one, $\dim(\mathcal{N}_s) = 0$.
\end{proof}

%==================================
\section{Proof of Lemma~\ref{lem:triple}}
\label{app:triple}
\begin{eqnarray}
\begin{split}
\Triples(\zeta) &= \E[x_1x_2^\T \langle \zeta, x_3\rangle] = \E[\E[x_1x_2^\T \langle \zeta, x_3\rangle | h]]\\
& = \E [\E[x_1|h] \E[x_2|h]^\T \langle \zeta, \E[x_3|h] \rangle ]\\
&= \E [Ahh^\T A^\T \langle \zeta, Ah \rangle]\\
&= \E[M\eta \eta^\T M^\T \langle \zeta, M \eta \rangle]\\
& = M\E[\eta \eta^\T \langle \eta, M^\T \zeta \rangle] M^\T.
\end{split}
\end{eqnarray}

The proof is completed by showing that for any deterministic vector $v \in \R^k$, and any random vector $z  = (z(1), \dotsc, z(k))$ with zero mean independent entries, we have
\begin{eqnarray}
\E[z z^\T \langle z,v \rangle] = \diag(v) \diag(\mu_{z(1)}, \dotsc, \mu_{z(n)}).
\end{eqnarray}
We compute the diagonal and off-diagonal entries separately.
\begin{align}
\E[z(i) z(i) \langle v, z\rangle] = v(i) \E[z(i)^3] + \sum_{k \neq i} v(k) \sigma_{z(i)}^2 \E[z(k)] = v(i)\mu_{z(i)}.
\end{align}
For $j \neq i$
\begin{align*}
\E[z(i) z(j) \langle v,z\rangle] &= \E[z(i) z(j) \sum_{k} v(k) z(k)]\\
&= v(i) \sigma_{z(i)}^2 \E[z(j)] + v(j) \sigma_{z(j)}^2 \E[z(i)] + \sum_{k \neq i,j} v(k) \E[z(i)]\E[z(j)]\E[z(k)]
 = 0.
\end{align*}

%=========================================
\section{Proof of Remark~\ref{rem:eff-second}}
\label{app:eff-second}
Write
\begin{eqnarray}
\Pairs = A \E[hh^\T] A^\T\,.
% + \E[\veps \veps^\T].
\end{eqnarray}

By Theorem~\ref{thm:core}, we can identify the columns of $A$, \emph{i.e.}, we can recover $\widehat{A} = A \Pi_1$ for some permutation matrix $\Pi_1$. 
%Also, as demonstrated in the proof of Theorem~\ref{thm:core}, we can decompose $\Pairs$ into its low-rank part $A\E[hh^\T]A^\T$ and diagonal part $\E[\veps \veps^\T]$.
 Let $\widehat{B} \in \R^{k \times n}$ be a left inverse of $\widehat{A}$. Then,
\begin{eqnarray}
\widehat{B}A \E[hh^\T]A^\T \widehat{B}^\T = \Pi_1^{-1} \E[hh^\T] \Pi_1^{-\T}.
\end{eqnarray}

Therefore, we have the second order moment of the hidden nodes (in some ordering of the nodes). Now consider $k$ hidden nodes corresponding to the row (and columns of ) $\Pi_1^{-1} \E[hh^\T] \Pi_1^{-\T}$. Label these nodes with $1, \dotsc, k$. Using the oracle we can find a permutation $\pi_2$ which puts the hidden nodes in a topological ordering. Let $\Pi_2$ be the corresponding permutation matrix to $\pi_2$. Then $\widetilde{\Pairs}:= \Pi_2 \Pi_1^{-1} \E[hh^\T] \Pi_1^{-\T} \Pi_2^\T$ is the second order moment of the hidden nodes in some topological ordering. By definition of a topological ordering, it is immediate to see that the coefficient matrix $\Lambda$ is lower triangular in a topological ordering of the hidden nodes. Therefore, we can write
\begin{eqnarray}
\widetilde{\Pairs} = (I - \Lambda)^{-1} \E[\eta \eta^\T] (I-\Lambda)^{-\T},
\end{eqnarray}
where $\eta$ is the vector formed by the noise variables $\eta(i)$ (in the corresponding topological ordering) and $\Lambda \in \R^{k \times k}$ is a lower triangular matrix with all diagonal entries equal to zero.  Therefore,
\begin{eqnarray}
\widetilde{\Pairs}^{1/2} = (I - \Lambda)^{-1} \diag(\sigma_{\eta(1)}, \dotsc, \sigma_{\eta(k)}) Q,
\end{eqnarray}
 for some rotation $Q \in \R^{k \times k}$. Notice that $L: = (I - \Lambda)^{-1} \diag(\sigma_{\eta(1)}, \dotsc, \sigma_{\eta(k)})$ is a lower triangular matrix with diagonal entries $\sigma_{\eta(1)} , \dotsc, \sigma_{\eta(k)}$ which are all positive. Hence, using the LQ decomposition of $\widetilde{\Pairs}^{1/2}$, we can recover $L$. (Recall that the LQ factorization is unique if we require that the diagonal entries of the lower triangular part are positive).

Finally, $\diag(L) = \diag((I-\Lambda)^{-1}) \diag(\sigma_{\eta(1)}, \dotsc, \sigma_{\eta(k)}) = \diag(\sigma_{\eta(1)}, \dotsc, \sigma_{\eta(k)})$. Therefore, $\Lambda = I - \diag(L) L^{-1}$.  The result follows.
%==================================
\section{Proof of Lemma~\ref{lem:block-diagonal}}
\label{app:block-diagonal}
For each $I \in \Part$, let $U_I, V_I \in \R^{|I| \times k}$ be any matrices such that $U_I^\T A_I$ and $V_I^\T B$ are invertible. Then for any distinct $I, J, K \in \Part$,
\begin{align}
A_I B_I^\T &= A_I(B_J^\T V_J)(B_J^\T V_J)^{-1} (U_K^\T A_K)^{-1} (U_K^\T A_K) B_I^\T\nonumber\\
&= A_IB_J^\T V_J(U_K^\T A_KB_J^\T V_J)^{-1} U_K^\T A_K B_I^\T.\label{eqn:dummy1}
\end{align}

Notice that for any distinct $I,J \in \Part$, $C_{I,J} = A_I B_J^\T$. Since $A_I$ and $B_J$ have rank $k$, so does $C_{I,J}$. Let $U_I \in \R^{|I| \times k}$ and $V_J \in \R^{|J|\times k}$ be respectively the matrices of left and right singular vectors of $C_{I,J}$ (corresponding to non-zero singular values). Since $U_I$ and $A_I$ have the same range, it follows that $U_I^\T A_I$ is invertible. Similarly $V_J^\T B_J$ is invertible. Using identity~\eqref{eqn:dummy1}, we obtain
\begin{eqnarray}
A_IB_I^\T = C_{I,J} V_J(U_K^\T\Pairs_{K,J}V_J)^{-1} U_K^\T C_{K,I},
\end{eqnarray}
for any distinct $I, J, K \in \Part$. Therefore $D$ can be determined as $D_{I,I} = C_{I,I} - A_IB_I^\T$ for $I \in \Part$ and $L = AB^\T$ is subsequently determined as $L = C - D$.

%===============================================
\section{Proof of Lemma~\ref{lemma:partitioning}}
\label{app:partitioning}
%Write $\Pi = UU^\T$ for some $U \in \R^{n \times k}$ matrix with
%orthonormal columns.
Let $A = USV^\T$ be a thin singular value decomposition of $A$, where $U \in \R^{n\times k}$ has orthonormal columns,
$S = \diag(\sigma_1(A), \dotsc, \sigma_k(A))$, and $V \in \R^{k \times k}$ is an orthogonal matrix.
Fix a partition index $v \in [\ell]$.
Let $z_1, z_2, \dotsc, z_n \in \{0,1\}$ be independent indicator random
variables such that $z_i = 1$ iff row $i$ of $A$ is included in
$A_v$.
Note that
\begin{eqnarray}
\begin{split}
A_v^\T A_v &= A^\T \diag(z_1,z_2,\dotsc,z_n) A\\
&=\sum_{i=1}^n z_i A^\T e_ie_i^\T A = V S (\sum_{i=1}^n z_i U^\T e_i e_i^\T
U ) SV^\T.
\end{split}
\end{eqnarray}
Therefore
\begin{eqnarray}\label{eqn:lambda-min}
\sigma_k(A_v)^2 = \lambda_{\min}(A_v^\T A_v)
\geq \lambda_{\min}(S)^2 \cdot \lambda_{\min}( \sum_{i=1}^n z_i U^\T
e_ie_i^\T U) = \sigma_k(A)^2 \cdot \lambda_{\min} (\sum_{i=1}^n X_i),
\end{eqnarray}
where $X_i :=  z_i U^\T e_ie_i^\T U \in \R^{k \times k}$.
Notice that $0 \preceq X_i $ and
\begin{eqnarray}
\lambda_{\max}(X_i) \le \|U^\T e_i\|_2^2 \le \frac{k}{n} c_A.
\end{eqnarray}
Moreover,
\begin{eqnarray}
\sum_{i=1}^n \E X_i = \sum_{i=1}^n \P(z_i = 1) U^\T e_ie_i^\T U = \frac{1}{\ell} U^\T U = \frac{1}{\ell}I.
\end{eqnarray}
By Lemma~\ref{lemma:matrix-chernoff},
\begin{eqnarray}
\P\bigg\{ \lambda_{\min}(\sum_{i=1}^d X_i) \leq \frac{1}{4\ell} \bigg\} \leq k \cdot e^{-(3/4)^2/(2\ell c_A k/n)} \leq \delta/\ell,
\end{eqnarray}
 where the last inequality follows from the assumption on $c_A$.
Therefore by Eq.~\eqref{eqn:lambda-min}, $\sigma_k(A_v) \geq
\sigma_k(A) / (2\sqrt{\ell})$,
with probability at least $1-\delta/\ell$. A union bound over all $v \in [\ell]$ completes the proof.

\begin{lemma}[Matrix Chernoff bound~\cite{Tropp12}] \label{lemma:matrix-chernoff}
Consider a finite sequence $\{X_i\}$ of independent and symmetric $k \times k$
random matrices such that $0 \preceq X_i$  and $\lambda_{\max}(X_i) \le r$ almost surely.
Define $\mu_{\min}:= \lambda_{\min}(\sum_{i} \E X_i)$.
For any $\epsilon \in [0,1]$, we have
\begin{eqnarray}
\P\bigg \{ \lambda_{\min}\biggl(\sum_{i} X_i\biggr) \leq
(1-\epsilon) \mu_{\min}\biggr\} \leq k \cdot e^{-\epsilon^2 \mu_{\min} / (2r)}.
\end{eqnarray}
\end{lemma}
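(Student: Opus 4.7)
The plan is to prove the matrix Chernoff bound via the matrix Laplace transform method of Ahlswede--Winter, sharpened using Lieb's concavity theorem as in Tropp's framework. First I would reduce the tail probability of $\lambda_{\min}(\sum_i X_i)$ to a trace-exponential moment. Specifically, for any $\theta < 0$ and threshold $t$, the inequality $e^{\theta \lambda_{\min}(Y)} = \lambda_{\max}(e^{\theta Y}) \le \operatorname{tr}(e^{\theta Y})$ combined with Markov's inequality gives
\begin{equation*}
\P\Bigl\{\lambda_{\min}\bigl(\textstyle\sum_i X_i\bigr) \le t\Bigr\} \le e^{-\theta t}\, \E\Bigl[\operatorname{tr}\exp\bigl(\textstyle\sum_i \theta X_i\bigr)\Bigr].
\end{equation*}

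Next I would invoke Tropp's corollary of Lieb's concavity theorem, which yields the subadditivity bound
$\E[\operatorname{tr} \exp(\sum_i \theta X_i)] \le \operatorname{tr} \exp\bigl(\sum_i \log \E[e^{\theta X_i}]\bigr)$, converting the expectation of a product of dependent matrix exponentials into a deterministic trace of a sum. The next step is to bound each cumulant generating function $\log\E[e^{\theta X_i}]$. Because $0 \preceq X_i \preceq rI$, the scalar inequality $e^{\theta x} \le 1 + (e^{\theta r}-1)x/r$ for $x \in [0,r]$ transfers to matrices via the spectral mapping theorem, giving $\E[e^{\theta X_i}] \preceq I + g(\theta)\E[X_i]$ with $g(\theta) := (e^{\theta r}-1)/r$; operator monotonicity of $\log$ then gives $\log \E[e^{\theta X_i}] \preceq g(\theta) \E[X_i]$.

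Summing over $i$ and using operator monotonicity of the trace exponential on the PSD cone, the master bound reduces to
\begin{equation*}
\P\Bigl\{\lambda_{\min}\bigl(\textstyle\sum_i X_i\bigr) \le t\Bigr\} \le e^{-\theta t}\, \operatorname{tr}\exp\bigl(g(\theta) \textstyle\sum_i \E[X_i]\bigr) \le k\, e^{-\theta t + g(\theta)\mu_{\min}},
\end{equation*}
where the last step uses $\operatorname{tr}(e^{\alpha M}) \le k \cdot e^{\alpha \lambda_{\min}(M)}$ for $\alpha < 0$ (which holds since $g(\theta) < 0$ when $\theta < 0$). Setting $t = (1-\epsilon)\mu_{\min}$ and optimizing over $\theta < 0$ gives the classical Chernoff-style expression $k \cdot \bigl[e^{-\epsilon}/(1-\epsilon)^{1-\epsilon}\bigr]^{\mu_{\min}/r}$.

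The final step is the scalar simplification $e^{-\epsilon}/(1-\epsilon)^{1-\epsilon} \le e^{-\epsilon^2/2}$ for $\epsilon \in [0,1]$, which can be verified by comparing Taylor series of the logarithms. I expect the main technical obstacle to be the invocation of Lieb's concavity / Tropp's trace inequality, since this is the non-elementary ingredient that replaces the scalar identity $\E[e^{\theta \sum Y_i}] = \prod_i \E[e^{\theta Y_i}]$ (which fails for non-commuting matrices); everything else is either a standard Chernoff optimization or a transfer of scalar inequalities to matrices through the functional calculus.
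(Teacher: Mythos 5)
The paper offers no proof of this lemma: it is imported verbatim as a known result from Tropp's work \cite{Tropp12}, so there is nothing internal to compare against. Your proposal correctly reconstructs the standard argument from that reference (matrix Laplace transform, Lieb/Tropp subadditivity of matrix cumulant generating functions, the chord bound on the mgf for $0 \preceq X_i \preceq rI$, and the scalar simplification $e^{-\epsilon}/(1-\epsilon)^{1-\epsilon} \le e^{-\epsilon^2/2}$), and each step checks out.
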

%
%
%=======================================================
%
\newpage
\section{$\AlgIproj$}
\label{app:AlgIproj}
Below is the slight variant of $\AlgI$ used in numerical experiments.

\begin{algorithm}[]
\caption*{$\AlgIproj$: Learning the topic-word matrix from 
pairwise correlations, using iterative \mbox{projections}.}
\begin{algorithmic}[1]

\REQUIRE Second order moment of the observed variables ($\Pairs$).

\ENSURE Columns of $A$ up to permutation.

\STATE Find a partition $\Part$ of $[n]$ such that $|\Part| = 3$ and $\rank(\Pairs_{I,J}) = k$ for distinct $I,J \in \Part$.

\STATE Let $L$ be the low-rank part returned by $\Subroutine(\Pairs, \Part)$.

\STATE Set $\mathcal{S} = \{0\} \subset \R^n$.

\FOR{each $i\in [k]$}

\FOR{each $j\in [n]$}

\STATE Solve the optimization problem
\[ \min_{w} \,\, \|L^{1/2} w\|_1 \quad \quad \text{subject to } (e_j^\T
L^{1/2}) {\rm P}_{\mathcal{S}^\perp} w = 1 . \]
Denote the solution by $w_{ij}$.
\ENDFOR

\STATE Set $w_i = \arg\min_{w_{i1},\dotsc, w_{in}} \|L^{1/2} w\|_0$, breaking ties arbitrarily.

\STATE $\mathcal{S} = \mathcal{S} \cup \{w_i\}$.
\ENDFOR

\RETURN $\Big\{\frac{L^{1/2} w_i}{\|L^{1/2} w_i\|}\Big\}_{i=1}^k$.

\end{algorithmic}
\end{algorithm}

%=======================================================
%\begin{proof}
%This is a direct corollary of Theorem 19 from~\cite{AhlWin02}.
%\end{proof}

\end{document}